\documentclass{article}

\usepackage{microtype}
\usepackage{graphicx}
\usepackage{subcaption}
\usepackage{booktabs} % for professional tables

\usepackage{hyperref}

\usepackage[accepted]{icml2026}

\usepackage{amsmath}
\usepackage{amssymb}
\usepackage{mathtools}
\usepackage{amsthm}

\usepackage[capitalize,noabbrev]{cleveref}

\theoremstyle{plain}
\newtheorem{theorem}{Theorem}[section]

\newtheorem{lemma}[theorem]{Lemma}
\newtheorem{corollary}[theorem]{Corollary}
\theoremstyle{definition}
\newtheorem{definition}[theorem]{Definition}
\newtheorem{assumption}[theorem]{Assumption}
\theoremstyle{remark}

\usepackage[textsize=tiny]{todonotes}

\usepackage{bm} 
\usepackage{amssymb}
\usepackage{xcolor}
\usepackage{amsmath}
\usepackage{multirow}
\usepackage{multicol}
\usepackage{booktabs}
\usepackage{makecell}
\usepackage{enumitem}

\icmltitlerunning{Parametric Prior Mapping Framework for Non-stationary Probabilistic Time Series Forecasting}

\begin{document}

\twocolumn[
\icmltitle{Parametric Prior Mapping Framework for Non-stationary Probabilistic Time Series Forecasting}

% It is OKAY to include author information, even for blind
% submissions: the style file will automatically remove it for you
% unless you've provided the [accepted] option to the icml2025
% package.

% List of affiliations: The first argument should be a (short)
% identifier you will use later to specify author affiliations
% Academic affiliations should list Department, University, City, Region, Country
% Industry affiliations should list Company, City, Region, Country

% You can specify symbols, otherwise they are numbered in order.
% Ideally, you should not use this facility. Affiliations will be numbered
% in order of appearance and this is the preferred way.
% \icmlsetsymbol{equal}{*}
\icmlsetsymbol{equal}{*}

  \begin{icmlauthorlist}
    \icmlauthor{Jinglin Li}{equal,csu}
    \icmlauthor{Jun Tan}{equal,csu}
    \icmlauthor{QI Fang}{csu}
    \icmlauthor{Ning Gui}{csu}
    % \icmlauthor{Firstname5 Lastname5}{yyy}
    % \icmlauthor{Firstname6 Lastname6}{sch,yyy,comp}
    % \icmlauthor{Firstname7 Lastname7}{comp}
    % %\icmlauthor{}{sch}
    % \icmlauthor{Firstname8 Lastname8}{sch}
    % \icmlauthor{Firstname8 Lastname8}{yyy,comp}
    %\icmlauthor{}{sch}
    %\icmlauthor{}{sch}
  \end{icmlauthorlist}

  \icmlaffiliation{csu}{School of Computer Science and Engineering, Central South University, Changsha, China}
  % \icmlaffiliation{comp}{Company Name, Location, Country}
  % \icmlaffiliation{sch}{School of ZZZ, Institute of WWW, Location, Country}

  \icmlcorrespondingauthor{QI Fang}{csqifang@csu.edu.cn}  
  \icmlcorrespondingauthor{Ning Gui}{ninggui@gmail.com}
  % \icmlcorrespondingauthor{Firstname2 Lastname2}{first2.last2@www.uk}

  % You may provide any keywords that you find helpful for describing your
  % paper; these are used to populate the "keywords" metadata in the PDF but
  % will not be shown in the document
  \icmlkeywords{Machine Learning, ICML}

\vskip 0.3in
]

\printAffiliationsAndNotice{\icmlEqualContribution}
% \printAffiliationsAndNotice{}

% this must go after the closing bracket ] following \twocolumn[ ...

% This command actually creates the footnote in the first column
% listing the affiliations and the copyright notice.
% The command takes one argument, which is text to display at the start of the footnote.
% The \icmlEqualContribution command is standard text for equal contribution.
% Remove it (just {}) if you do not need this facility.

% \begin{abstract}
% Probabilistic multivariate time series forecasting is crucial for real-world decision-making but remains challenging due to complex and evolving temporal dynamics. Parametric generative models often fail to capture multimodal uncertainty, while diffusion- and flow-based methods suffer from fixed global generative paths that limit adaptability to non-stationary behaviors. To overcome these limitations, we pursue an end-to-end modeling perspective and introduce PPM, a simple framework combining conditional prior learning, deterministic transformation, and kernel-based distribution refinement. PPM naturally models how uncertainty evolves from past to future and delivers more reliable probabilistic forecasts. Experiments show that PPM achieves competitive accuracy and strong robustness under distribution shifts.
% \end{abstract}

\begin{abstract}

Effectively modeling non-stationary dynamics in probabilistic multivariate time series(MTS) forecasting requires balancing expressiveness with robustness. Existing parametric approaches benefit from strong inductive biases but lack flexibility, whereas deep generative models struggle to capture complex temporal dependencies without extensive data and computation. We introduce Parametric Prior Mapping (PPM), a framework that injects parametric structural priors into a generative modeling process. Specifically, PPM utilizes a parametric estimator to derive a dynamic, adaptive prior that guides the learning of a complex predictive distribution via a learnable mapping. This design allows the model to retain the efficiency of parametric methods while exploiting the expressive power of generative models. Trained with a hybrid objective, PPM yields precise forecasts with well-calibrated uncertainty estimates. Empirical results show that PPM outperforms existing baselines in handling non-stationary data, offering a superior trade-off between accuracy and computational efficiency. The code is available at \url{https://github.com/ljl8336/PPM}.

\end{abstract}

\section{Introduction}

% Multivariate time series (MTS) are pervasive across a wide range of real-world applications.

% Compared to MTS forecasting, probabilistic MTS forecasting delivers not only point estimation but also uncertainty modeling, which is vital for decision-making in numerous high-stakes settings—such as weather prediction \cite{price2025weather}, financial analysis \cite{yates1991finance}, and transportation planning \cite{cheng2024traffic}—where both the precision and the dependability of forecasts substantially affect decisions. For non-stationary MTS, probabilistic forecasting becomes particularly difficult, as it requires accurately estimating the future data distribution under time-varying dynamics.

% Probabilistic MTS forecasting transcends point-wise prediction by modeling the underlying uncertainty distributions. This is crucial for high-stakes applications like weather \cite{price2025weather}, finance \cite{yates1991finance}, and transportation \cite{cheng2024traffic}, where decision-making relies heavily on the dependability of forecast intervals. The complexity of this task is compounded by non-stationarity, where time-varying dynamics demand that models not only predict future values but also continuously adapt to shifting statistical properties and evolving uncertainty patterns.

While conventional MTS  forecasting focuses on point estimates, probabilistic MTS forecasting provides a comprehensive characterization of predictive uncertainty. This capability is pivotal for decision-making in high-stakes domains like weather~\cite{price2025weather}, finance~\cite{yates1991finance}, and transportation \cite{cheng2024traffic} where the integrity and calibration of uncertainty estimates are as critical as point-wise accuracy. However, the task becomes significantly more complex in the presence of non-stationary dynamics, which necessitate the accurate tracking of evolving data distributions and time-varying stochasticity.

% as temporal dynamics are not fixed but evolve over time, with non-stationarity, distribution drift, and regime shifts inducing complex and time-varying uncertainty.

Existing methods generally fall into two categories: parametric approaches and deep generative models. Parametric methods define a specific likelihood for future trajectories and learn distribution parameters to quantify uncertainty, e.g., fixed Gaussian likelihood in DeepAR~\cite{salinas2020deepar} and a time-varying error covariance matrix for consecutive time series segments in BetterDeepAR~\cite{zheng2024betterdeepar}. While these models offer stability, scalability, and practical interpretability, they are generally constrained by the inconsistency between the assumed output family and the complex patterns under nonstationarity. By contrast, deep generative models learn the predictive distribution more flexibly, often without restrictive output assumptions. In recent years, many generative models have been proposed, e.g., denoising diffusion ~\cite{rasul2021timegrad} models and flow-based models~\cite{kollovieh2024tsflow}. These methods generally quantify uncertainty by learning to map simple noise into realistic future samples, allowing them to better capture the diverse range of possible future trajectories.

% Existing methods generally fall into two categories: parametric approaches and deep generative models. Parametric methods define a specific likelihood for future trajectories and learn distribution parameters to quantify uncertainty, such as the Gaussian likelihood in DeepAR \cite{salinas2020deepar} or the Student-$t$ distribution in GluonTS \cite{alexandrov2019gluonts}. To better capture changing dynamics, recent work like \citet{zheng2024betterdeepar} has introduced time-varying covariance matrices built from consecutive data segments. While these models are stable, scalable, and easy to interpret, they are often limited by their rigid assumptions, which struggle to fit the complex and shifting patterns found in non-stationary data.In contrast, deep generative models offer more flexibility by avoiding fixed distribution assumptions. This field has grown rapidly with the development of diffusion models \cite{rasul2021timegrad} and flow-based architectures \cite{kollovieh2024tsflow}. These methods model uncertainty by learning to transform simple noise into realistic future samples, allowing them to better represent the diverse possibilities of future trajectories.

% , while our solution can effective identify it

Theoretically, deep generative models can learn a non-parametric mapping to the prediction distribution from any prior distribution~\cite{lu2020universal}. However, in non-stationary time series with finite samples and limited computational power, the form of the prior can substantially affect the reachability of the sampling trajectory and its modeling capabilities~\cite{lee2024ant}. To address this problem, recent models introduce designs to better approximate the true prediction distribution. TMDM~\cite{li2024TMDM} uses $\mathcal{N}(f(\bm{x}), \mathbf{I})$ as an endpoint, with $f(\bm{x})$ representing changing averages with limitations of its fixed unit deviation $\mathbf{I}$. To address this issue, NsDiff~\cite{ye2025nsdiff} employs a fixed-length sliding window to compute changing variance as a deviation prior.

Although existing diffusion-based methods like TMDM and NsDiff attempt to address prior mismatch, they fundamentally lack the flexibility to handle dynamic shifts in data distribution. Figure~\ref{fig:intro} highlights this limitation using the Traffic dataset. We observe a strong correlation between traffic volume and variance: stable, low-volume periods at the daily trough (typically 5$\sim$6AM) contrast sharply with  volatile, congested rush hours(5$\sim$6PM). The right panel demonstrates that both TMDM and NsDiff struggle to adapt to these fluctuations. By enforcing a rigid initialization scheme, these methods fail to model the time-varying aleatoric uncertainty, resulting in miscalibrated predictive distributions.

% From the aforementioned discussion, we understand that both parameter-based methods and generative methods have their pros and cons, which are complementary to each other. 
% Thus, a natural idea is to combine the efficiency and stability of parametric forecasting with the expressive power of deep generative models, enabling the model to adaptively learn complex future dynamics conditioned on the input series. Motivated by this fact, this paper introduces a novel \textbf{P}arametric \textbf{P}rior \textbf{M}apping (\textbf{PPM}) Framework for probabilistic MTS forecasting that uses the fast estimation capabilities of parameter-based methods to quantify the input distributions. We then obtain a prior sampling distribution by resampling on these parameters and learn a mapping from the prior distribution to the output distribution. The Negative Log-Likelihood (NLL) and the average MSE are combined as the loss function \textcolor{red}{to allow for both accuracy and distribution shift. } Our contributions are summarized as follows:

\begin{figure}[t]
  \centering
  \includegraphics[width=1.0\linewidth]{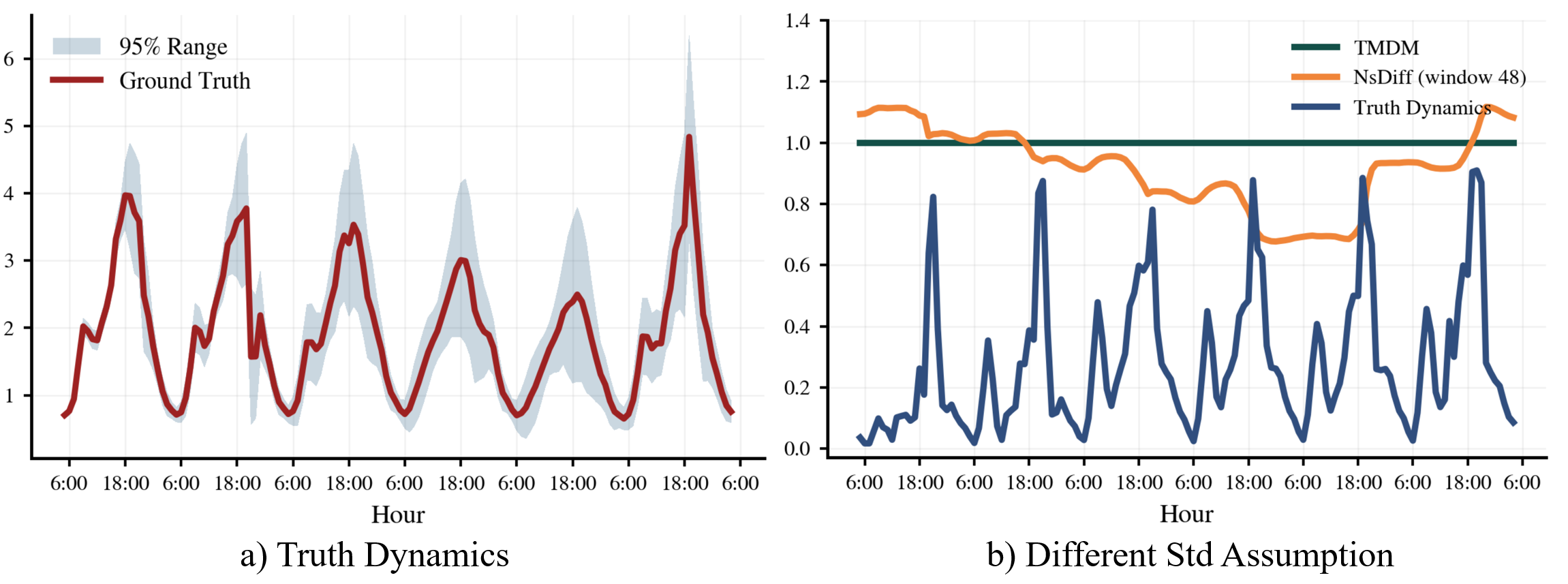}
    \vspace{-10pt}
  \caption{Comparisons of source distribution from different baselines for the Traffic dataset. The top figure shows the true value and deviation, as during rush hour, truth dynamics have more uncertainty, and during the middle night, the traffic is rather steady. However, the deviation previously used/calculated by TMDM and NsDiff is far from this fact.}
  \label{fig:intro}

\end{figure}
Building upon these insights, it is evident that parametric and generative approaches possess complementary strengths. While parametric models provide an efficient inductive bias for uncertainty estimation, generative models offer the necessary expressivity to capture complex, non-stationary dynamics. To this end, we propose Parametric Prior Mapping (\textbf{PPM}), a novel framework that synergistically integrates these two paradigms. Specifically, PPM leverages the fast estimation capabilities of parametric methods to characterize input distributions, from which an adaptive conditional prior is derived. We then learn a mapping that transforms this structured prior into a flexible sample-based predictive distribution. To ensure both point-wise precision and distributional robustness, we optimize PPM using a hybrid objective that combines Negative Log-Likelihood (NLL) with Mean Squared Error (MSE), effectively balancing predictive calibration with trajectory accuracy. Our contributions are summarized as follows:

\begin{itemize}[leftmargin=1em]

    % \item We introduce a novel framework of addressing the inherent complexity of time series through end-to-end nonparametric estimation of predictive distributions, which enables flexible modeling of complex, evolving uncertainty patterns.
    \item \textbf{Conceptual Synergy}: We propose PPM, a novel end-to-end framework that synergizes parametric estimation with deep generative modeling. By bridging the gap between rigid parametric priors and flexible generative outputs, PPM captures complex, non-stationary dynamics without relying on manually-tuned heuristic priors.
    \item   \textbf{Technical Innovation}: We introduce a parametric push-forward mechanism to rapidly derive an adaptive conditional prior from input sequences, coupled with a  mapping module that refines resampled prior distributions into target trajectories. To ensure training stability and distributional fidelity,  a hybrid objective integrating NLL and MSE is proposed, enabling the model to track evolving uncertainties while maintaining point-wise precision.
    \item \textbf{Empirical Excellence}: Extensive evaluations across diverse benchmarks demonstrate that PPM significantly outperforms six state-of-the-art baselines, achieving up to a 31.2\% reduction in CRPS and 44.3\% in QICE compared to the second-best model. Notably, PPM delivers a $2\times$ to $100\times$ speedup in inference relative to leading diffusion-based models, highlighting its superior trade-off between predictive accuracy and computational efficiency.

\end{itemize}

% that organizes forecasting into three intuitive steps based on prior learning, transformation, and distribution refinement. 

\section{Related Work}

\subsection{Parametric Probabilistic Forecasting}
% Parametric probabilistic forecasting typically assumes an explicit likelihood family (e.g., Gaussian, Student-$t$, or Negative Binomial) and predicts output distribution parameters by minimizing the negative log-likelihood (NLL). DeepAR~\cite{salinas2020deepar} is a representative approach that uses an autoregressive RNN to model conditional dynamics and predict per-step likelihood parameters. \cite{zheng2024betterdeepar} relaxes the i.i.d.\ assumption and redesigns the mini-batch construction to learn time-varying autocorrelation structures of forecast errors. \citet{zheng2024correlatederrors} further captures cross-series and cross-lag dependencies via structured covariance parameterizations. Beyond these, hybrid approaches combine deep feature extraction with structured probabilistic models, such as Deep State Space Models~\cite{rangapuram2018state} and Deep Factors~\cite{wang2019factors}, to improve data efficiency and interpretability. In parallel, quantile regression provides a distribution-free alternative y directly predicting multiple conditional quantiles: MQ-RNN~\cite{wen2017mqrnn} produces multi-horizon quantiles with a sequence-to-sequence architecture, TFT~\cite{lim2021tft} typically adopts quantile (pinball) losses to produce well-calibrated prediction intervals, while more recent methods further enable \emph{any-quantile} forecasting with a single unified model~\cite{smyl2024anyquantile}.

Parametric probabilistic forecasting typically assumes explicit likelihood families (e.g., Gaussian) and minimizes negative log-likelihood (NLL). 
DeepAR~\cite{salinas2020deepar} employs autoregressive RNNs to predict per-step parameters. 
To improve correlation modeling, \cite{zheng2024betterdeepar} relaxes i.i.d.\ assumptions to learn time-varying error structures, while \citet{zheng2024correlatederrors} captures cross-series dependencies via structured covariance. 
Hybrid approaches like DeepState~\cite{rangapuram2018state} and Deep Factors~\cite{wang2019factors} integrate deep networks with structured models to enhance interpretability. 
In parallel, quantile regression offers a distribution-free alternative: MQ-RNN~\cite{wen2017mqrnn} generates multi-horizon quantiles via seq2seq architectures, and TFT~\cite{lim2021tft} utilizes pinball losses for calibrated intervals, with recent advances extending this to unified \emph{any-quantile} forecasting~\cite{smyl2024anyquantile}.

% To better handle high-dimensional dependencies, Salinas et al.~\cite{salinas2019lowrankcopula} propose a low-rank Gaussian copula output layer on top of an RNN backbone, enabling scalable modeling of time-varying correlations with flexible (possibly non-Gaussian) marginals.

% D3VAE~\cite{li2022d3vae} is an autoregressive model that incorporates a bidirectional variational autoencoder and component separation for temporal data augmentation. Due to low generation efficiency and error accumulation in autoregressive models, non-autoregressive approaches have emerged. TimeDiff~\cite{shen2023timediff} applies diffusion over the entire sequence to generate conditional outputs. Moreover, many models exploit conditional information to facilitate generative learning.

\subsection{Deep Generative Model}
In recent years, Deep Generative Models have received increasing attention and witnessed significant advancements in Probabilistic Time Series Forecasting. TimeGrad~\cite{rasul2021timegrad} combines the autoregressive method and a diffusion probabilistic model to better model high-dimensional distributions. D3VAE~\cite{li2022d3vae} is another autoregressive model that incorporates a bidirectional Variational Autoencoder and component separation to achieve Temporal Data Augmentation.  TimeDiff~\cite{shen2023timediff} leverages a diffusion process on the entire sequence to generate outputs based on given conditions. Furthermore, many models effectively exploit conditional information to facilitate the learning of generative models. TMDM~\cite{li2024TMDM} integrates conditional information into the diffusion forward process with a Transformer model to enhance the estimation of uncertainty distributions over sequences. NsDiff~\cite{ye2025nsdiff} additionally incorporates an explicit variance estimation module to parameterize the prior uncertainty, trained under supervision from variances computed over sliding-window segments. D3U~\cite{li2025d3u} simplifies the forecasting problem by modeling residuals to avoid non-stationarity; however, this makes its performance heavily dependent on the quality of the required point predictor.

\begin{figure*}[t]
    \centering
    \includegraphics[width=0.85\linewidth]{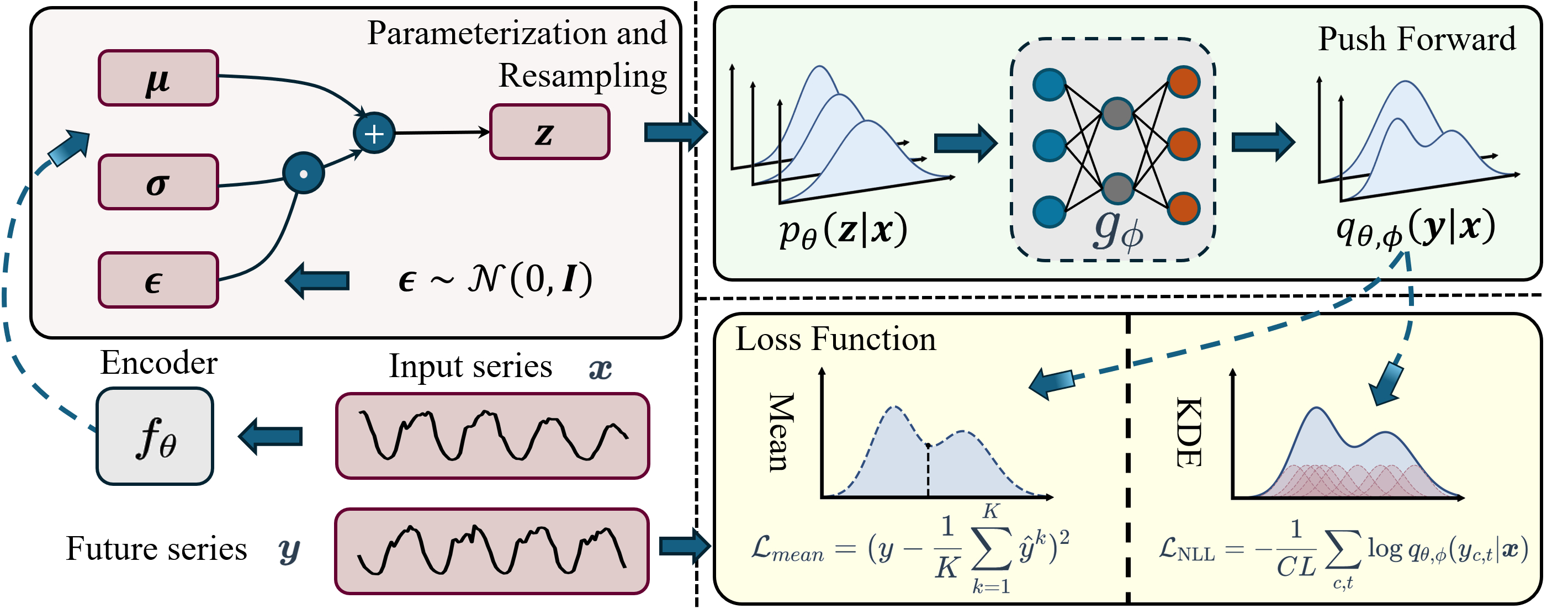}
    \caption{PPM is trained in three stages: encode historical context to estimate the prior’s parameters and resample a sample-based prior; push the prior forward to obtain the predictive output distribution; then use KDE to estimate the conditional predictive density, minimizing averaged NLL (MLE) over the horizon with an auxiliary averaged MSE term.}
    \label{fig:model}
\end{figure*}
\section{Background}
% \subsection{Preliminary}

\subsection{Probabilistic Time Series Forecasting}
% Given a history multivariate time series $\bm{x} \in \mathbb{R}^{H\times C}=\{{x}_1,{x}_2,...,{x}_C | {x}_c \in \mathbb{R}^H\}$, probabilistic MTS forecasting aims to estimate the distribution of the subsequent future time series $p(\bm{y}|\bm{x})$, where $\bm{y}\in\mathbb{R}^{L\times C} =\{{y}_1,{y}_2,...,{y}_C | {y}_c \in \mathbb{R}^L\}$. where $H$ and $L$ denote the lookback window size and forecast window size, respectively. $C$ denotes the number of variables in the MTS data. Typically, the data are organized as input–output pairs $(\bm{x},\bm{y})^{(i)}$. 

% Under this scenario, directly modeling the density function of this high-dimensional distribution is intractable. Consequently, state-of-the-art sampling-based methods learn a model, parameterized by $\theta$, that represents this distribution, denoted as $q_{\theta}(\bm{y}|\bm{x})$. The final probabilistic forecast is then represented by a set of $S$ such samples, $\mathcal{Y}_{samples}=\{\hat{\bm{y}}^{(i)}\}^S_{i=1}$. This set serves as an empirical Monte Carlo approximation of the true conditional distribution. 

Given a historical multivariate time series $\bm{x} \in \mathbb{R}^{H\times C}=\{{x}_1,{x}_2,...,{x}_C | {x}_c \in \mathbb{R}^H\}$, probabilistic MTS forecasting aims to estimate the conditional distribution of the future trajectory $p(\bm{y}|\bm{x})$, where $\bm{y}\in\mathbb{R}^{L\times C} =\{{y}_1,{y}_2,...,{y}_C | {y}_c \in \mathbb{R}^L\}$. Let $H$, $L$, and $C$ denote the lookback window, prediction horizon, and variable count, respectively. 

% The goal is to learn a model $\theta$ for predictive distribution $q_{\theta}(\bm{y}|\bm{x})$. 

% such that the future uncertainty can be represented by a set of samples $\hat{\mathcal{Y}} = \{\hat{\bm{y}}^{(i)}\}_{i=1}^K$.

Directly modeling the exact density of this high-dimensional distribution is often intractable. Consequently, state-of-the-art sampling-based methods approximate the target distribution $p_{\theta}(\bm{y}|\bm{x})$ using a model parameterized by $\theta$. The final probabilistic forecasts take the form of an empirical Monte Carlo approximation, represented by a set of $K$ generated samples $\hat{\mathcal{Y}} = \{\hat{\bm{y}}^{(i)}\}_{i=1}^K$, where each $\hat{\bm{y}}^{(i)} \sim p_{\theta}(\cdot|\bm{x})$.

\subsection{Push-forward Distribution}
\label{sec:push_forward}

We formulate the predictive distribution using the \emph{push-forward} measure, which formalizes how a probability distribution transforms under a deterministic mapping.

Let $\bm{z}$ be a latent variable with prior $p_{\bm{z}}$ and $T: \mathcal{Z} \rightarrow \mathcal{Y}$ be a measurable map. The \emph{push-forward distribution} $p_{\bm{y}} = T_{\#} p_{\bm{z}}$ is the distribution of $\bm{y} = T(\bm{z})$, formally satisfying:
\begin{equation}
    (T_{\#} p_{\bm{z}})(A) \coloneqq p_{\bm{z}}\left(T^{-1}(A)\right), \quad \forall A \subseteq \mathcal{Y}.
\end{equation}

This framework unifies modern generative models like Normalizing Flows \cite{papamakarios2021normalizingflow} and Diffusion \cite{ho2020ddpm}, which essentially learn a transport map from a prior to the data distribution. While theoretically possessing universal approximation capabilities \cite{villani2008optimal}, learning such maps from generic noise under limited compute is suboptimal. Motivated by information-theoretic insights \cite{xu2020entrotheory},  we propose using a structure-aware prior rather than generic noise to ease the transport complexity and improve expressivity.

\section{PPM Method}

In this section, we present the overall end-to-end achitecture of PPM. The core intuition behind PPM is to construct a high-fidelity conditional predictive distribution by refining a tractable parametric prior via a learnable push-forward mapping. The workflow is illustrated in Figure \ref{fig:model}.

\subsection{Training Scheme and Inference}
\paragraph{Training Stage}

The training pipeline of PPM decomposes into three sequential phases: 1) \textbf{ Parametric Prior Induction:} Given the historical context, we first employ an encoder to infer the sufficient statistics (e.g., mean and variance) of a parametric distribution. We then resample from this distribution to generate an initial set of latent samples, effectively constructing a context-aware prior (corresponding to the \textit{upper-left} module in Figure~\ref{fig:model}). 2) \textbf{ Distributional Push-forward:} 
To capture complex, non-Gaussian dynamics that simple parametric forms cannot model, we push the resampled prior through a learnable non-linear mapping. This step transforms the coarse prior into an expressive, sample-based output distribution, as depicted in the \textit{upper-right} panel of Figure~\ref{fig:model}. 3) \textbf{Hybrid Objective Optimization:} To align the generated samples with the ground truth, we employ a hybrid loss function. Specifically, we utilize Kernel Density Estimation (KDE) to approximate the continuous probability density of the generated samples, allowing for the minimization of the Negative Log-Likelihood (NLL). This is augmented by an averaged MSE term to ensure trajectory consistency. This optimization process (illustrated in the \textit{lower-right} panel) encourages the output distribution to accurately approximate the true predictive density.

\paragraph{Inference Stage.} 
Once trained, PPM serves as an efficient generative forecaster. During inference, the model directly outputs a sample-based predictive distribution by pushing the conditional prior through the learned mapping. Notably, the size of the generated ensemble is determined solely by the number of draws, allowing for flexible trade-offs between estimation precision and computational cost.

% In the following section, we would illustrate the detailed design of each module. 

% The training scheme of PPM consists of three steps: \textbf{(i) Learn the prior distribution from historical context:} Given the historical context, infer the prior's parametric statistics using an encoder and resample from it to generate a sample-based distribution corresponding to the upper-left part of Figure~\ref{fig:model}. \textbf{(ii) Push forward the prior distribution:} Since the prior captures the context-dependent uncertainty in a parameterized form, we push it forward through a learned mapping to obtain an expressive output distribution that can model complex, non-Gaussian predictive behaviors, as shown in the upper-right panel of the model figure. \textbf{(iii) Calculate the loss and optimize:} Employ KDE to estimate the conditional predictive density, minimize the averaged NLL over the prediction horizon for MLE, and incorporate an averaged MSE term as a complementary loss. In this step, we can encourage the output distribution to approximate the true predictive distribution, as illustrated in the lower-right part of the model figure.

% \paragraph{Inference Stage:}Once trained, PPM can output a sample-based distribution, matching the true data distribution by MLE. Thus, at inference time, the output sample-based distribution can be directly used as the predictive distribution over future time series, and the number of generated samples is exactly the number of resampled draws from the conditional prior.

% $p(\cdot|\bm{x})$ $q_{\theta,\phi}(\cdot|\bm{x})$

\subsection{Parametric Prior Induction}
To initiate the generative process, we first construct a tractable conditional prior distribution $p_{\theta}(\bm{z}|\bm{x})$ that encapsulates the stochastic properties of the historical context. 

\textbf{Backbone Encoder.} We employ a temporal encoder $f_{\theta}(\cdot)$ to extract representations from the historical sequence $\bm{x}$. While our framework is \textit{backbone-agnostic}: allowing for the integration of various architectures such as Transformers or RNNs. we instantiate $f_{\theta}(\cdot)$ as a lightweight Multi-Layer Perceptron (MLP) in this work to balance predictive capability with computational efficiency.

\textbf{Prior Parameterization.} Given the historical series $\bm{x}$, the encoder maps the input to the sufficient statistics of the prior distribution:
\begin{equation}
    [\bm{\mu}, \bm{\sigma}] = f_{\theta}(\bm{x}), \quad \text{where } \bm{\mu}, \bm{\sigma} \in \mathbb{R}^{C\times D}.
\end{equation}
Here, $D$ denotes the dimensionality of the latent space. Based on these parameters, we define the conditional prior as a {factorized (diagonal) multivariate Gaussian}:
\begin{equation}
    p_{\theta}(\bm{z}|\bm{x}) = \mathcal{N}\left(\bm{z}; \bm{\mu}, \text{diag}(\bm{\sigma}^2)\right).
\end{equation}
This parametric form allows for efficient sampling and gradient propagation (via the reparameterization trick). 

\subsection{Distributional Push-forward}

To bridge the gap between the latent parametric space and the target data space, we employ the \textit{reparameterization trick}~\cite{kingma2013vae}. This technique allows us to generate differentiable samples from the conditional prior while preserving the gradients for backpropagation. Specifically, we draw a set of $K$ latent samples $\{\bm{z}^{(k)}\}_{k=1}^K$ via the transformation:
\begin{equation}
    \bm{z}^{(k)} = \bm{\mu} + \bm{\sigma} \odot \bm{\epsilon}^{(k)}, \quad \text{where } \bm{\epsilon}^{(k)} \sim \mathcal{N}(\mathbf{0}, \mathbf{I}).
\end{equation}
Next, we define the predictive distribution as the push-forward of this sampled prior through a learnable mapping function $g_{\phi}: \mathbb{R}^{C \times D} \to \mathbb{R}^{L \times C}$. The induced predictive density is formally given by:
\begin{equation}
    q_{\phi}(\bm{y}|\bm{x}) = (g_{\phi})_{\#} p_{\theta}(\bm{z}|\bm{x}).
\end{equation}
We obtain empirical samples of the forecast $\hat{\bm{y}}$ by applying the mapping to the latent draws: $
    \hat{\bm{y}}^{(k)} = g_{\phi}(\bm{z}^{(k)})$. The set of transformed samples $\{\hat{\bm{y}}^{(k)}\}_{k=1}^K$  constitutes the empirical approximation of the predictive distribution $q_{\theta,\phi}(\cdot|\bm{x})$.

\textbf{Architecture Design.} A key advantage of our framework is that the learned prior $p_{\theta}(\bm{z}|\bm{x})$ can encode rich, context-dependent semantic information (due to the over-complete latent dimension and end-to-end design). Consequently, the burden on the transport map is significantly reduced. We find that a lightweight channel-independent Multi-Layer Perceptron (MLP) is sufficient to realize this mapping. Specifically, we implement $g_{\phi}(\cdot)$ as a two-layer MLP with $\mathrm{GeLU}$ activation, which projects the $D$-dimensional latent codes back to the $L$-dimensional forecast horizon.

% \subsection{Distributional Push-forward}
%  To transform the parametric distribution into a mappable sample-based distribution, we employ the reparameterization method introduced in \cite{kingma2013vae}. Through this approach, we can draw $K$ samples from the prior $p_{\theta}(\bm{z}|\bm{x})$ using $\bm{z}=\bm{\mu} + \bm{\sigma} \odot \bm{\epsilon}$, where $\bm{\epsilon} \sim \mathcal{N}(0,\textbf{I})$. 

% Next, we employ a neural network $g_{\phi}$ as the mapping function and define the push-forward distribution:
% $$p_{\bm{y|x}}=g_{\phi,\#}p_{\bm{z|x}}$$
% Specifically, we draw $\bm{z}\sim p_{\bm{z}|\bm{x}}$ and generate output samples via:
% $$\hat{\bm{y}}= g_{\phi}(\bm{z}), \quad \hat{\bm{y}}\sim q_{\theta,\phi}(\cdot|\bm{x})$$
% Since the high-dimensional prior learned in the previous step already encodes rich context-dependent information, a lightweight MLP is sufficient to realize the push-forward mapping and transform prior samples into the forecast space. Following this design choice, we implement $g_{\phi}(\cdot)$ with a two-layer MLP and $\mathrm{GeLU}(\cdot)$ activation. $q_{\theta,\phi}(\cdot\mid \bm{x})$ denotes the resulting conditional output distribution.

% where $g_{\phi}(\cdot)$ is instantiated as a two-layer MLP with the $\mathrm{GeLU}(\cdot)$ activation, and 

% The samples $\hat{\bm{y}}$ from $\mathcal{Y}_{samples}$ of forecasting can be generate through:

\subsection{Optimization Objective}
\label{sec:loss}

PPM is optimized using a hybrid objective that balances probabilistic calibration with trajectory accuracy. The primary objective is to maximize the likelihood of the ground truth $\bm{y}$ under the learned predictive distribution.

% PPM still works on the principle of Maximum Likelihood Estimation (MLE).

\textbf{KDE-based NLL.}  The Negative Log-Likelihood (NLL) for a single training instance $(\bm{x}, \bm{y})^{(i)}$ is defined as:
\begin{equation}
    \mathcal{L}^{(i)}_{\mathrm{NLL}}\triangleq 
    -\frac{1}{C L}\sum_{c=1}^C \sum_{t=1}^L\log q_{\theta,\phi}({y}_{c,t}|\bm{x})
\end{equation}
It's notable that we model the marginal distribution at each time step $p(y_{c,t})$, which does not imply that we assume different time steps to be independent or identically distributed. Since our model is implicit (outputting samples rather than density parameters), the exact likelihood is intractable. To enable gradient-based optimization, we employ Kernel Density Estimation (KDE)~\cite{dehnad1987kde} to approximate the continuous probability density $q_{\theta,\phi}({y}_{c,t}|\bm{x})$ from the discrete sample set $\{\hat{\bm{y}}^{(k)}\}_{k=1}^{K}$:  
\begin{equation}
    q_{\theta,\phi}({y}_{c,t}|\bm{x})\approx \hat q_{h}(y_{c,t}|\bm{x}) =\frac{1}{Kh} \sum_{k=1}^{K}\mathcal{K}\left(\frac{{y}_{c,t}-\hat{{y}}^{(k)}_{c,t}}{h}\right)
    \label{equ:prob}
\end{equation}
% \begin{equation}
%     \mathcal{L}_{\mathrm{NLL}} \approx -\frac{1}{C L}\sum_{c=1}^C \sum_{t=1}^L \log \left( \frac{1}{K h} \sum_{k=1}^{K} \mathcal{K}\left(\frac{y_{c,t} - \hat{y}_{c,t}^{(k)}}{h}\right) \right),
% \end{equation}
where $\hat q_{h}(y_{c,t}|\bm{x})$ is the density estimated by KDE, $h$ is the bandwidth parameter and $\mathcal{K}(\cdot)$ is the kernel function. Using a Gaussian kernel, this simplifies to:
\begin{equation}
    q_{\theta,\phi}({y}_{c,t}|\bm{x}) \approx \frac{1}{\sqrt{2\pi} K h} \sum_{k=1}^{K} \exp\left(-\frac{(y_{c,t} - \hat{y}_{c,t}^{(k)})^2}{2h^2}\right).
\end{equation}
In practice, we compute this efficiently in the log domain using the Log-Sum-Exp trick to ensure numerical stability.

\textbf{Auxiliary Mean MSE Regularization.} Relying solely on KDE-based NLL can lead to optimization instability, particularly in the early stages of training. If the generated samples are far from the ground truth, the Gaussian kernel values become negligible, causing gradients to vanish. To mitigate this and provide a stable, global gradient signal, we incorporate a Mean Squared Error (MSE) term based on the sample's mean:
\begin{equation}
    \mathcal{L}^{(i)}_{\mathrm{MM}} = \left\| \bm{y} - \frac{1}{K}\sum_{k=1}^{K}\hat{\bm{y}}^{(k)} \right\|^2.
\end{equation}
This term acts as a \emph{mean-seeking} anchor, forcing the predictive distribution to center around the ground truth trajectory.

\textbf{Final Loss.} The total objective function is a weighted sum of the probabilistic and deterministic losses:
\begin{equation}
    \mathcal{L}_{\mathrm{total}} = \alpha \cdot \mathcal{L}_{\mathrm{NLL}} + \mathcal{L}_{\mathrm{MM}},
\end{equation}
where $\alpha$ is a hyperparameter balancing distribution sharpness and point-wise accuracy.

\section{Theoretical Properties of PPM}
\label{sec:theory}

In this section, we further analyze our method from a theoretical perspective, focusing on consistency and finite-sample errors, the expressiveness of the push-forward conditional distribution, and the gradient/optimization properties that underpin training stability. Full details of the theory can be found in Appendix \ref{app:theory_pfkde}.

\paragraph{Theoretical Setup:} For clarity we analyze a single scalar coordinate $(c,t)$ and omit subscripts when no confusion arises.
Given history $\bm{x}$, the model induces a conditional distribution over forecasts via a push-forward map
$\hat y = g_{\phi}(\bm{z})$ with $\bm{z} \sim p_{\theta}(\cdot| \bm{x})$ (e.g., reparameterized Gaussian).
Let $q_{\theta,\phi}(y| \bm{x})$ denote the corresponding conditional density.

% \paragraph{KDE likelihood with log-domain truncation.}
Given $K$ i.i.d.\ samples $\{\hat y^k\}_{k=1}^K \sim q_{\theta,\phi}(\cdot| \bm{x})$, PPM uses KDE:
\begin{equation}
\hat q_h(y| \bm{x})
=\frac{1}{Kh}\sum_{k=1}^{K}\mathcal K\!\left(\frac{y-\hat y^k}{h}\right),
\label{eq:kde_def}
\end{equation}
and computes the log-likelihood in a numerically stable manner via log-sum-exp (Gaussian kernel), followed by a floor:
\begin{equation}
\log \tilde q_h(y| \bm{x})
:=\max\{\log \hat q_h(y| \bm{x}),\,\log\varepsilon\},
\quad \varepsilon>0.
\label{eq:main_logtrunc}
\end{equation}
This truncation prevents extremely small estimated densities from causing unstable gradients.

\paragraph{Consistency and finite-$(K,h)$ error decomposition:}
Let $q_h(\cdot|\bm{x}) := (q_{\theta,\phi}(\cdot|\bm{x}) * \mathcal K_h)(\cdot)$ be the smoothed density.
Under standard KDE assumptions (bounded kernel with finite second moment and twice differentiable $q_{\theta,\phi}$),
the truncated NLL admits an explicit decomposition into a finite-$K$ stochastic term and a smoothing-bias term.

\begin{theorem}[Finite-$(K,h)$ NLL error bound (informal)]
\label{thm:main_nll}
Let $\hat{\mathcal L}^{(i)}_{\mathrm{NLL}}
= -\frac{1}{CL}\sum_{c,t}\log\max\{\hat q_h(y_{c,t}|\bm{x}),\varepsilon\}$.
Then for any $\delta\in(0,1)$, with probability at least $1-\delta$,
\begin{equation}
\big|\hat{\mathcal L}^{(i)}_{\mathrm{NLL}}-\mathcal L^{(i)}_{\varepsilon}\big|
\;\lesssim\;
\underbrace{\frac{h^2}{\varepsilon}}_{\text{smoothing bias}}
\;+\;
\underbrace{\frac{\sqrt{\log(CL/\delta)}}{\varepsilon h\sqrt{K}}}_{\text{finite-$K$ fluctuation}},
\label{eq:main_nll_bound}
\end{equation}
where
\begin{equation}
\mathcal L^{(i)}_{\varepsilon}
:=
-\frac{1}{CL}\sum_{c,t}\log\max\{q_{\theta,\phi}(y_{c,t}|\bm{x}),\varepsilon\}.
\end{equation}
% Consequently, if $h\to 0$ and $Kh^2\to\infty$, then $\hat{\mathcal L}^{(i)}_{\mathrm{NLL}}\to \mathcal L^{(i)}_{\varepsilon}$ in probability.
\end{theorem}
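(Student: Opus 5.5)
The plan is to reduce the statement to a pointwise bound at each coordinate $(c,t)$ and then average. The key tool is that $u\mapsto \log\max\{u,\varepsilon\}$ is $1/\varepsilon$-Lipschitz on $[0,\infty)$: its derivative is $1/u\le 1/\varepsilon$ on $u>\varepsilon$ and $0$ below. Therefore, for each $(c,t)$,
\begin{equation*}
\bigl|\log\max\{\hat q_h(y_{c,t}|\bm{x}),\varepsilon\}-\log\max\{q_{\theta,\phi}(y_{c,t}|\bm{x}),\varepsilon\}\bigr|
\le \frac{1}{\varepsilon}\,\bigl|\hat q_h(y_{c,t}|\bm{x})-q_{\theta,\phi}(y_{c,t}|\bm{x})\bigr| .
\end{equation*}
Averaging over the $CL$ coordinates, the triangle inequality lets me bound $|\hat{\mathcal L}^{(i)}_{\mathrm{NLL}}-\mathcal L^{(i)}_{\varepsilon}|$ by $\varepsilon^{-1}\max_{c,t}|\hat q_h-q_{\theta,\phi}|$ evaluated at the observed points $y_{c,t}$. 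The truncation floor is what makes the whole argument work, since it removes any need for a lower bound on the true density.

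Next I split the density error through the smoothed density $q_h=q_{\theta,\phi}*\mathcal K_h$:
\[
|\hat q_h-q_{\theta,\phi}|\le |\hat q_h-q_h|+|q_h-q_{\theta,\phi}| .
\]

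\emph{Bias term.} I use the standard second-order Taylor argument. Write
\[
q_h(y)-q(y)=\int \mathcal K(u)\,[q(y-hu)-q(y)]\,du .
\]
Expand $q(y-hu)$ to second order. The first-order term vanishes because the kernel is symmetric, so $\int u\mathcal K(u)\,du=0$. The remainder is at most $\tfrac{h^2}{2}\|q''\|_\infty\int u^2\mathcal K(u)\,du$. This step requires assuming $\sup|q''_{\theta,\phi}|<\infty$, which is how I read ``twice differentiable'' in the informal statement. Dividing by $\varepsilon$ gives the $h^2/\varepsilon$ term.

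\emph{Fluctuation term.} With $y_{c,t}$ fixed (it is the target, independent of the $K$ model draws), $\hat q_h(y_{c,t})$ is an average of $K$ i.i.d. variables $h^{-1}\mathcal K((y-\hat y^k)/h)$. Each lies in $[0,\|\mathcal K\|_\infty/h]$, and its mean is exactly $q_h(y_{c,t})$. Hoeffding's inequality gives
\[
\Pr\bigl(|\hat q_h-q_h|>s\bigr)\le 2\exp\!\bigl(-2Ks^2h^2/\|\mathcal K\|_\infty^2\bigr).
\]
Set the right side to $\delta/(CL)$ and take a union bound over the $CL$ coordinates. With probability at least $1-\delta$, every coordinate then satisfies
\[
|\hat q_h-q_h|\le \|\mathcal K\|_\infty\sqrt{\log(2CL/\delta)/(2K)}\,/h .
\]
Dividing by $\varepsilon$ gives the second term, with constants absorbed into $\lesssim$.

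The main obstacle is mostly bookkeeping. First, the samples across different $(c,t)$ share the same latent draws and so are dependent. This is harmless because I only need a union bound, not independence across coordinates. Second, the bias bound needs uniform control of $q''$, which I will state explicitly as an assumption. A Bernstein refinement could improve $1/h$ to $1/\sqrt h$, but it is unnecessary for the stated rate.
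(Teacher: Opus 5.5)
Your proposal is correct and follows essentially the same route as the paper. Both use Hoeffding's inequality for the bounded summands $h^{-1}\mathcal K((y-\hat y^k)/h)$ with a union bound over the $CL$ coordinates, a second-order Taylor expansion giving the $\tfrac{m_2}{2}Mh^2$ smoothing bias, and the $1/\varepsilon$-Lipschitz property of $\log\max\{\cdot,\varepsilon\}$ to transfer density errors to the truncated log-loss. The differences are cosmetic: the paper splits at the loss level through the intermediate $\mathcal L^{(i)}_{h,\varepsilon}$ (floor applied to $q_h$) rather than at the density level, and your Hoeffding constant is slightly tighter than the paper's.
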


% This bound clarifies the role of KDE in PPM: the training objective is not only consistent in the ideal limit, but also admits an explicit, controlled approximation error at finite $(K,h)$.
% Moreover, Eq.~\eqref{eq:main_nll_bound} reveals a practical trade-off: smaller $h$ sharpens the likelihood but amplifies the
% finite-$K$ noise via $1/(h\sqrt K)$, while larger $h$ reduces variance at the expense of $O(h^2)$ smoothing bias.
% This expresses that both $K$ and $h$ matter for stable training.
\noindent\textit{Proof.} Full proof in Appendix \ref{app:consistency}. This bound clarifies the role of KDE in PPM: the training objective is not only consistent in the ideal limit,
but also admits an explicit, controlled approximation error at finite $(K,h)$.
In particular, Eq.~\eqref{eq:main_nll_bound} implies consistency:
letting $h\to 0$ and $Kh^2\to\infty$ drives both the $O(h^2/\varepsilon)$ smoothing bias
and the $O(1/(\varepsilon h\sqrt K))$ finite-$K$ fluctuation to zero, hence
$\hat{\mathcal L}^{(i)}_{\mathrm{NLL}}\xrightarrow{p}\mathcal L^{(i)}_{\varepsilon}$.
Moreover, Eq.~\eqref{eq:main_nll_bound} reveals a practical trade-off:
smaller $h$ sharpens the likelihood but amplifies finite-$K$ noise via $1/(h\sqrt K)$,
while larger $h$ reduces variance at the expense of an $O(h^2)$ smoothing bias.
The bound shows that $K$ and $h$ must be co-tuned to ensure stable training.

% For each history $\bm{x}$, PPM generates forecasts by a push-forward map $\hat y=g_{\phi}(\bm{z})$ with $\bm{z}\sim p_{\theta}(\cdot|\bm{x})$.
% Under mild regularity of the target conditional distribution and standard universal approximation of MLPs, the
% push-forward family is dense in $W_1$ (hence in weak topology with moment control).

% \begin{theorem}[Universality of conditional push-forward (informal)]
% \label{thm:main_ua}
% Assume $P(\cdot|\bm{x})$ has finite first moment. Then for any $\epsilon>0$ and fixed $\bm{x}$,
% there exists $\phi$ such that the generated conditional distribution
% $q_{\phi}(\cdot|\bm{x})=(g_{\phi})_{\#}\mathcal N(0,1)$ satisfies
% $W_1(q_{\phi}(\cdot|\bm{x}),\,P(\cdot|\bm{x}))<\epsilon$.
% \end{theorem}

% \noindent\textit{Proof.} Full proof in Appendix \ref{app:expressiveness}.

% Theorem~\ref{thm:main_ua} justifies using a simple conditional prior together with a learnable map:
% The push-forward construction is sufficiently expressive to approximate complex prediction distributions.

\paragraph{Expressiveness of push-forward:} For each history $\bm{x}$, PPM generates forecasts via a push-forward map
$\hat y = g_{\phi}(\bm{z})$ with $\bm{z}\sim p_{\theta}(\cdot\mid \bm{x})$.
Under mild regularity of the target conditional distribution and the universal approximation property of MLPs, this conditional push-forward family is dense in $W_1$ (and thus in the weak topology with first-moment control).

\begin{theorem}[Universality of conditional push-forward (informal)]
\label{thm:main_ua}
Assume the ground-truth conditional distribution $p^{*}(\cdot\mid \bm{x})$ has a finite first moment.
Then for any $\epsilon>0$ and fixed $\bm{x}$, there exists parameters $\phi$ such that the generated conditional law
\[
q_{\phi}(\cdot\mid \bm{x}) := (g_{\phi})_{\#} p_{\theta}(\cdot\mid \bm{x})
\]
satisfies
\[
W_1\!\big(q_{\phi}(\cdot\mid \bm{x}),\, p^{*}(\cdot\mid \bm{x})\big) < \epsilon.
\]
\end{theorem}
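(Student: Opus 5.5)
The argument has two stages. First, build an exact transport from the prior to the target. Second, approximate that transport by the MLP $g_{\phi}$ and control the $W_1$ error through a coupling. Fix $\bm{x}$ and work with the scalar coordinate from the Theoretical Setup. Assume the prior $p_{\theta}(\cdot\mid\bm{x})=\mathcal N(\bm\mu,\mathrm{diag}(\bm\sigma^2))$ is non-degenerate, i.e.\ at least one coordinate has $\sigma_j>0$.

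\textbf{Step 1 (exact transport).} Pick such a coordinate $z_j$ and set $u=\Phi((z_j-\mu_j)/\sigma_j)$, which is uniform on $(0,1)$. Let $F^{*}$ be the CDF of $p^{*}(\cdot\mid\bm{x})$ and $Q^{*}=(F^{*})^{-1}$ its generalized quantile function. Define
\[
T^{*}(\bm z)=Q^{*}\!\big(\Phi((z_j-\mu_j)/\sigma_j)\big),
\]
so that $T^{*}_{\#}p_{\theta}(\cdot\mid\bm{x})=p^{*}(\cdot\mid\bm{x})$ exactly. The map $T^{*}$ is measurable and nondecreasing in $z_j$, but it may be discontinuous or unbounded. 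The finite first moment gives $\int_0^1|Q^{*}(u)|\,du<\infty$, i.e.\ $T^{*}\in L^1(p_{\theta})$.

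\textbf{Step 2 (approximation in $L^1$).} Couple the two laws through the same $\bm z$; this gives
\[
W_1\big((g_{\phi})_{\#}p_{\theta},\,p^{*}\big)\le \mathbb E_{\bm z\sim p_{\theta}}\,|g_{\phi}(\bm z)-T^{*}(\bm z)|.
\]
It therefore suffices to find $\phi$ with $\|g_{\phi}-T^{*}\|_{L^1(p_{\theta})}<\epsilon$. I would do this in three reductions:
\begin{itemize}
\item \emph{Truncation.} Replace $Q^{*}$ by $Q_M=\max(-M,\min(M,Q^{*}))$. This costs at most $\int_0^1|Q^{*}-Q_M|\,du$, which tends to $0$ as $M\to\infty$ by dominated convergence, using the first moment.
\item \emph{Continuous approximation.} Continuous compactly supported functions are dense in $L^1$ of a finite measure. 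So there is a continuous bounded $\tilde T$, depending only on $z_j$, within $\epsilon/3$ of the truncated map in $L^1(p_{\theta})$.
\item \emph{Restriction to a compact set.} Choose $R$ with $p_{\theta}(|z_j-\mu_j|>R)$ small enough. Apply the universal approximation theorem for GeLU MLPs (GeLU is non-polynomial, so the Leshno et al.\ theorem applies) to get $g_{\phi}$ uniformly within $\epsilon/3$ of $\tilde T$ on $[\mu_j-R,\mu_j+R]$. On the Gaussian tail, use that $\tilde T$ is bounded by $M$ and arrange that $g_{\phi}$ is also bounded, for instance by clipping.
\end{itemize}
Summing the three contributions gives the bound $\epsilon$.

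\textbf{Main obstacle.} The hard step is controlling $g_{\phi}$ outside the compact set. An unclipped two-layer GeLU MLP grows linearly, but the Gaussian tail is light. Since $\mathbb E|z_j|\mathbf 1\{|z_j|>R\}\to 0$, a linear-growth bound $|g_{\phi}(\bm z)|\le A+B|z_j|$ makes the tail integral small, provided $A$ and $B$ do not blow up as the approximation is refined. I would try first to secure this by composing with a saturating output layer: approximate the clip $y\mapsto\max(-M,\min(M,y))$ by a difference of two GeLU units, whose output stays bounded by roughly $M$ plus a small slack. This keeps everything within the two-layer architecture up to adding width, and makes the tail term at most about $2M\,p_{\theta}(|z_j-\mu_j|>R)$, which can be made smaller than the remaining budget.

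Finally, the multivariate output $\mathbb R^{L\times C}$ reduces to the scalar statement coordinatewise, exactly as the paper analyzes a single $(c,t)$.
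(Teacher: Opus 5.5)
Your proof is correct and follows the paper's skeleton. First you build an exact transport $T$ with $T_{\#}p_{\theta}(\cdot\mid\bm x)=p^{*}(\cdot\mid\bm x)$. Then you use the same-latent coupling $W_1\le\mathbb E|g_{\phi}(\bm z)-T(\bm z)|$, which is exactly the paper's lemma on $W_1$ control under a fixed latent. You differ from the paper in both ingredients.

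\textbf{Transport.} The paper invokes an abstract Borel isomorphism $\psi:\mathbb R^{D}\to(0,1)$ with $\psi_{\#}p_{\theta}=\mathrm{Unif}(0,1)$ and sets $T_{\bm x}=F_{\bm x}^{-1}\circ\psi$. You instead use one non-degenerate Gaussian coordinate and $\Phi$. This gives an explicit map that is monotone in a single variable.

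\textbf{Approximation.} The paper proves nothing here. It simply posits, as an assumption, that the network class is dense in $L^{1}(p_{\theta}(\cdot\mid\bm x))$. You derive the density from truncation, $C_c$-density, Leshno et al.\ on a compact window, and tail control.

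\textbf{What each route buys.} The paper's route is general over the prior: it works for any non-atomic $p_{\theta}$, consistent with its Uniform/Laplace/Student-$t$ ablations. The price is that all the analytic content sits in an assumption, and since a Borel isomorphism is typically wildly discontinuous, that assumption carries the entire argument. Your route is self-contained for the Gaussian prior. Like the paper's argument, it needs only the finite first moment. The generalized inverse satisfies $Q^{*}(U)\sim p^{*}$ for any CDF, so the continuous-CDF hypothesis the paper adds is superfluous in both proofs.

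\textbf{Two small corrections.}
\begin{enumerate}
\item Composing $g_{\phi}$ with an approximate clip built from GeLU units adds a hidden layer. So it does not stay within a one-hidden-layer (``two-layer'') MLP, as you claim. You can avoid the tail issue entirely. The unit $s\mapsto\mathrm{GeLU}(s+1)-\mathrm{GeLU}(s)$ is bounded and nonconstant. By Hornik's (1991) theorem, one-hidden-layer networks in this unit are therefore dense in $L^{1}(\mu)$ for every finite measure $\mu$. These are just one-hidden-layer GeLU networks of double width. This removes the need for clipping or any linear-growth bookkeeping, and it shows the paper's density assumption actually holds for such networks.
\item Your closing remark about $\mathbb R^{L\times C}$ gives only coordinatewise, marginal approximations, not approximation of the joint law. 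That is all the scalar theorem asserts, but the reduction should not be read as saying more.
\end{enumerate}
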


\noindent\textit{Proof.} Full proof is deferred to Appendix~\ref{app:expressiveness}. Theorem~\ref{thm:main_ua} demonstrates the expressiveness of the push-forward parameterization:
by learning the mapping $g_{\phi}$, the induced conditional distribution $(g_{\phi})_{\#}p_{\theta}(\cdot\mid \bm{x})$
can approximate highly complex prediction distributions in $W_1$.

\paragraph{Gradient Structure and Stabilization:} For the Gaussian kernel, define $s_k(y)=\exp\!\left(-\frac{(y-\hat y^{(k)})^2}{2h^2}\right)$ and responsibilities
$\bar\omega_j(y)=\frac{s_j(y)}{\sum_{k=1}^K s_k(y)}$.
Ignoring the truncation for clarity, the per-coordinate KDE-NLL gradient w.r.t.\ sample $\hat y^j$ is
\begin{equation}
\frac{\partial}{\partial \hat y^{(j)}}\big[-\log \hat q_h(y|\bm{x})\big]
=
\frac{1}{h^2}\,\bar\omega_j(y)\,(\hat y^{(j)}-y).
\label{eq:main_grad_nll}
\end{equation}
With the log-truncation in \eqref{eq:main_logtrunc}, the gradient is further multiplied by
$\mathbf{1}\{\hat q_h(y|\bm{x})\ge \varepsilon\}$ (see Appendix~\ref{app:optimization}).

As $h$ decreases, $\bar\omega_j(y)$ concentrates on the nearest samples, yielding winner-take-all-like gradients
and higher variance when $K$ is finite. PPM therefore adds an MM anchor on the sample mean
$\bar y=\frac{1}{K}\sum_k \hat y^{(k)}$, whose gradient is dense:
\begin{equation}
\frac{\partial}{\partial \hat y^{(j)}}(y-\bar y)^2=\frac{2}{K}(\bar y-y).
\label{eq:main_grad_mm}
\end{equation}
Combining \eqref{eq:main_grad_nll}--\eqref{eq:main_grad_mm} yields complementary updates:
The KDE-NLL term shapes distributional structure through responsibility-weighted residuals, while the MM term
stabilizes training by consistently anchoring the first moment, especially in early stages with finite $(K,h)$.
Full statements and proofs are provided in Appendix~\ref{app:optimization}.

\section{Experiment}
\subsection{Experiment Setup}

\textbf{Datasets:} Seven real-world datasets with diverse spatiotemporal dynamics were chosen, concluding ETTh1, ETTh2, ETTm1, ETTm2, Electricity, Traffic, and Weather. Table \ref{tab:datasets} presents basic statistical information about these datasets. We also report the average variance over the Fourier spectrum to examine the evolving dynamics \cite{ye2024fan} in the table, and larger variance implies more complex data dynamics. Further details can be found in Appendix \ref{app:dataset}.

\begin{table}[h]
\vspace{-6pt}
  \centering
  % \caption{Summary of dataset statistics.}
  % \label{tab:datasets}
\caption{Summary of dataset statistics.}
  \resizebox{0.9\linewidth}{!}{%
    \begin{tabular}{lcccc}
      \toprule
      Dataset & Variables & Freq. & Length & Variance\\
      \midrule
      ETTh1    & 7   & 1 Hour       & 14307 & 3.690\\
      ETTh2     & 7   & 1 Hour       & 14307 & 1.013\\
      ETTm1     & 7   & 15 Minutes   & 69680 & 3.330\\
      ETTm2     & 7   & 15 Minutes   & 69680 & 1.648\\
      Electricity      & 321 & 1 Hour       & 26304 & 2.645\\
      Traffic          & 862 & 1 Hour       & 17544 & 14.225\\
      Weather          & 21  & 10 Minutes   & 52696 & 0.387\\
      \bottomrule
    \end{tabular}%
  }
  \label{tab:datasets}
  \vspace{-6pt}
\end{table}
\begin{table*}[!ht]
    \centering
           \caption{Performance comparisons on seven real-world datasets regarding CRPS and QICE(\%). The \textbf{best}/\underline{second} results are highlighted in \textbf{bold}/\underline{underline}, respectively. Lower CRPS and QICE(\%) values indicate better performance. }
     \resizebox{0.95\textwidth}{!}{
    \begin{tabular}{c|cc|cc|cc|cc|cc|cc|cc}
    \toprule
        Dataset & \multicolumn{2}{c|}{ETTh1} & \multicolumn{2}{c|}{ETTh2} &\multicolumn{2}{c|}{ETTm1} & \multicolumn{2}{c|}{ETTm2} & \multicolumn{2}{c|}{Weather} & \multicolumn{2}{c|}{Electricity} & \multicolumn{2}{c}{Traffic}   \\ 
        \cmidrule{1-15} 
        Method & CRPS & QICE & CRPS & QICE & CRPS & QICE & CRPS & QICE & CRPS & QICE & CRPS & QICE & CRPS & QICE  \\
        \midrule
        DeepAR & 0.466 & 2.215 & 0.738 & 5.067 & 0.497 & 2.559 & 0.378 & 4.422 & 0.293 & 7.186 & 0.376 & \underline{3.775} & 0.496 & 5.488 \\
        \cmidrule{2-15}
        TimeGrad & 0.636 & 5.031 & 0.990 & 7.921 & 1.001 & 8.097 & 0.644 & 2.744 & 0.362 & 2.954 & 0.405 & {5.788} & {0.694} & {6.546}  \\ 
        \cmidrule{2-15} 
        TimeDiff & {0.457} & 14.821 & 0.508 & 7.663 & 0.466 & 14.772 & {0.313} & 7.735 & 0.304 & 7.819 & 0.744 & 9.536 & 0.774 & 9.433  \\ 
        \cmidrule{2-15} 
        D3VAE & 0.548 & 5.124 & 1.098 & 6.687 & 0.519 & 6.034 & 0.818 & 5.582 & 0.427 & 7.985 & 0.449 & 6.705 & 0.504 & 6.961  \\ 
        \cmidrule{2-15} 
        DiffusionTS & 0.539 & 3.041 & 1.077 & 5.690 & 0.640 & 6.997 & 0.835 & 5.987 & 0.375 & \textbf{2.351} & 0.493 & {5.021} & 0.477 & \underline{4.843}  \\ 
        \cmidrule{2-15} 
        TMDM & 0.468 & {2.354} & {0.382} & \textbf{1.955} & {0.364} & {2.024} & {0.322} & \underline{1.843} & \underline{0.240} & {3.674} & {0.462} & {10.995} & {0.562} & {10.873}  \\ 
        \cmidrule{2-15}
        NsDiff & \underline{0.417} & \underline{1.564} & \underline{0.349} & {2.125} & \underline{0.336} & \textbf{1.174} & \underline{0.270} & {2.343} & {0.253} & {5.134} & \underline{0.286} & {7.595} & \underline{0.367} & {8.366}  \\ 
        \cmidrule{2-15}
        PPM & \textbf{0.337} & \textbf{1.481} & \textbf{0.306} & \underline{2.011} & \textbf{0.314} & \underline{1.782} & \textbf{0.237} & \textbf{1.580} & \textbf{0.215} & \underline{2.684} & \textbf{0.206} & \textbf{2.435} & \textbf{0.252} & \textbf{2.744}  \\ 
    \bottomrule
    \end{tabular}
    }
    \label{tab:CRPS}
\end{table*}
\begin{table*}[!ht]
    \centering   
         \caption{Performance comparison on seven real-world datasets based on MSE and MAE.  The \textbf{best}/\underline{second} results are highlighted in \textbf{bold}/\underline{underline}, respectively. Lower MSE and MAE values indicate better performance.
     }
    \resizebox{0.95\textwidth}{!}{
    \begin{tabular}{c|cc|cc|cc|cc|cc|cc|cc}
    \toprule
         Dataset & \multicolumn{2}{c|}{ETTh1} & \multicolumn{2}{c|}{ETTh2} & \multicolumn{2}{c|}{ETTm1} & \multicolumn{2}{c|}{ETTm2} & \multicolumn{2}{c|}{Weather} & \multicolumn{2}{c|}{Electricity} & \multicolumn{2}{c}{Traffic}   \\ 
        \cmidrule{1-15} 
         Method & MSE & MAE & MSE & MAE & MSE & MAE & MSE & MAE & MSE & MAE & MSE & MAE & MSE & MAE  \\ 
        \midrule
        DeepAR & 0.797 & 0.624 & 1.800 & 0.992 & 0.870 & 0.697 & 0.535 & 0.508 & 0.259 & 0.333 & 0.304 & 0.377 & 0.968 & 0.524 \\
        \cmidrule{2-15} 
         TimeGrad & 1.232  & 0.909  & 2.449  & 1.230  & 2.779  & 1.190  & 1.101  & 0.811  & 0.562  & 0.510  & 0.460  & 0.477  & 1.647  & 0.814   \\ 
        \cmidrule{2-15} 
         TimeDiff & \underline{0.493}  & \underline{0.473}  & 0.544  & 0.527  & 0.539  & 0.480  & 0.286  & 0.352  & 0.266  & 0.316  & 0.836  & 0.758  & 1.422 & 0.788  \\ 
        \cmidrule{2-15} 
         D3VAE & 0.906  & 0.707  & 3.392  & 1.520  & 0.815  & 0.637  & 2.106  & 1.140  & 0.445 & 0.449 & 0.520 & 0.528 & 1.097 & 0.577  \\ 
        \cmidrule{2-15} 
         DiffusionTS & 0.917  & 0.714  & 2.941  & 1.318  & 1.036 & 0.755 & 1.500 & 0.918 & 0.452 & 0.490 & 0.594 & 0.585 & 0.989 & 0.560  \\ 
        \cmidrule{2-15} 
         TMDM & 0.742  & 0.642  & 0.535  & 0.474  & 0.514  & 0.460  & 0.411  & 0.409  & 0.275  & 0.314  & 0.212  & 0.327  & 0.711  & 0.415   \\ 
        \cmidrule{2-15}
         NsDiff & {0.687}  & {0.563}  & \underline{0.448}  & \underline{0.448}  & \underline{0.461}  & \underline{0.447}  & \underline{0.278}  & \underline{0.344}  & \underline{0.251}  & \underline{0.290}  & \underline{0.201}  & \underline{0.303}  & \underline{0.640}  & \underline{0.374}   \\ 
        \cmidrule{2-15}
         PPM & {\textbf{0.447}}  & \textbf{0.432}  & \textbf{0.376}  & \textbf{0.394}  & \textbf{0.381}  & \textbf{0.390}  & \textbf{0.247}  & \textbf{0.306}  & \textbf{0.242}  & \textbf{0.268}  & \textbf{0.182}  & \textbf{0.267}  & \textbf{0.524} & \textbf{0.323}  \\ 
    \bottomrule
    \end{tabular}
    }
  
    \label{tab:MSE}
\end{table*}

\textbf{Baselines:} We select several strong forecasting baselines for comparison, including: DeepAR \cite{salinas2020deepar}, TimeGrad \cite{rasul2021timegrad}, D3VAE \cite{li2022d3vae}, TimeDiff \cite{shen2023timediff}, DiffusionTs \cite{yuan2024diffusionts}, TMDM \cite{li2024TMDM}, and NsDiff \cite{ye2025nsdiff}. The description of these baselines can be found in Appendix \ref{app:baseline}.

\textbf{Metrics:} The performance of probabilistic forecasting is evaluated with the Continuous Ranked Probability Score (CRPS) \cite{matheson1976crps} and Quantile Interval Coverage Error  (QICE) \cite{han2022card}. Additionally, point forecasting performance is assessed using MSE and Mean Absolute Error (MAE). For these metrics, smaller values indicate better performance. Detailed descriptions of these metrics can be found in Appendix \ref{app:Metrics}.

\textbf{Implementation details:}
In the experiments, we set the lookback window size H 96 and prediction length L  192, respectively. The parameters of KDE-NLL loss, including bandwidth $h$ and loss weighting $\alpha$, are set to 0.3 and 0.1, respectively, for all datasets. The number of samples in triaining stage is set as 100. At inference, 100 samples are used to approximate the estimated distribution. For the baseline models, we utilize their default parameters. All experiments are implemented using PyTorch and executed on RTX A100. Further implementation details and sensitivity analysis are provided in Appendix \ref{app:implementation} and Appendix \ref{app:sensitiveity}.

\subsection{Main Results}
To evaluate the performance of PPM in probabilistic MTS forecasting, we conducted experiments on seven real-world datasets and compared it against several competitive baseline methods, and all experiments were repeated 5 times for the mean. The backbone for $f_{\theta}(\cdot)$ is an MLP combined. As summarized in Table \ref{tab:CRPS} and Table \ref{tab:MSE}, PPM consistently achieves state-of-the-art (SOTA) performance, demonstrating superior capabilities in distribution estimation and point forecasting. Notably, the improvement is more pronounced on traffic datasets with higher variability as shown in the Table \ref{tab:datasets}, demonstrating PPM’s capability to model evolving time series. We further evaluate alternative backbones and observe consistent gains with more expressive architectures; detailed results are provided in Appendix \ref{app:conditional}.

\subsection{Sample Showcases}
% \begin{figure*}[h]
%   \centering
%   \includegraphics[width=0.8\textwidth]{Figure/case_study.png}
%   \caption{Probabilistic prediction interval comparisons across various datasets with NsDiff.}
%   \label{fig:case}
%   \vspace{-5pt}
% \end{figure*}

% To provide a clearer understanding of PPM performance, we visualize representative samples from ETTh1 and Traffic datasets along with the 50\% and 95\% prediction intervals in Fig.~\ref{fig:case}. We also compare PPM against the state-of-the-art diffusion-based model NsDiff. As shown in the figure, PPM effectively models the distribution of future sequences and expresses stronger concentration and accuracy in its predictions. Especially in the traffic dataset, PPM captures the characteristic uncertainty pattern in the Traffic dataset, showing higher uncertainty during daytime rush hours and consistently lower uncertainty at late night and early morning, demonstrating our model’s ability to capture complex, time-varying distributions. More showcases can be found in Appendix G.
% Figure~\ref{fig:case} visualizes PPM forecasts against the SOTA diffusion model NsDiff on ETTm1 and Traffic. PPM demonstrates superior distribution modeling with sharper concentration and accuracy. Notably, it accurately captures time-dependent uncertainty patterns, such as increased variance during rush hours in Traffic data, validating its capability to model complex distributions. 

Figure~\ref{fig:case} visualizes PPM forecasts against the SOTA diffusion model NsDiff on two datasets. We observe that while NsDiff's sliding-window prior suffices for ETTm1 due to its locally stable statistics, it fails to adapt to the highly dynamic Traffic. In contrast, PPM accurately captures time-varying uncertainty patterns, such as increased variance during rush hours and consistently lower uncertainty at early morning in Traffic, validating its capability to model complex distributions. More cases can be found in Appendix \ref{app:case}.

% demonstrates superior distribution modeling with sharper concentration

\begin{figure}[h]
  \centering
  \includegraphics[width=1.\linewidth]{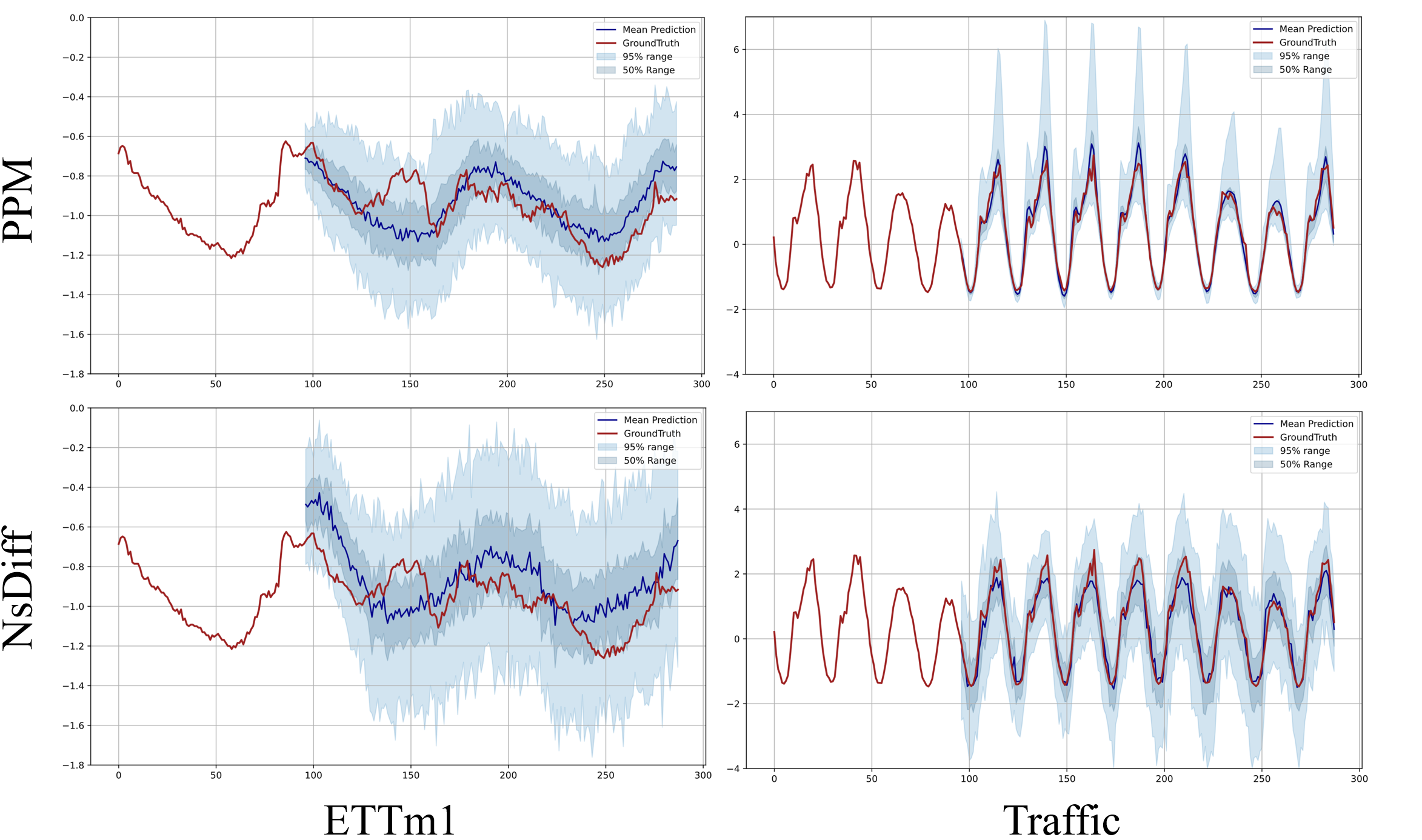}
  \caption{Probabilistic prediction interval comparisons on ETTm1 and Traffic datasets with NsDiff.}
  \label{fig:case}
  \vspace{-15pt}
\end{figure}

% (ETTh1, ETTm1, and Traffic)

% \begin{figure}[th]
%   \centering
%   \includegraphics[width=0.9\linewidth]{Figure/efficincy_analysis2.png}
%   \caption{Inference time comparison with different models on Traffic dataset, with history window $H=96$, future length $L=192$. The number of samples is set to 100.}
%   \label{fig:efficiency}
%   \vspace{-5pt}
% \end{figure}

% The result indicates that PPM improve predictive accuracy with lower inference time.
\begin{table}[ht]
    \centering
    \footnotesize
    \caption{Push-Forward with different prior distribution.}
    % \resizebox{0.85\linewidth}{!}{
    \begin{tabular}{c|c|cc|cc}
        \toprule
        Dataset & Metric & Gauss. & + PM & Unif. & + PM  \\ 
        \midrule
        % \multirow{4}{*}{\makecell{ETTh1}} & QICE & \textbf{1.337}\ & {1.481}\ & 3.924\ & \textbf{1.483}\  \\ 
        % ~ & CRPS & 0.350 & \textbf{0.337} & 0.352 & \textbf{0.337}  \\ 
        % ~ & MSE & 0.481 & \textbf{0.447} & 0.466 & \textbf{0.447}  \\ 
        % ~ & MAE & 0.459 & \textbf{0.432} & 0.448 & \textbf{0.432}  \\ 
        % \midrule
        \multirow{4}{*}{\makecell{ETTm2}} & QICE & 2.286\ & \textbf{1.580}\ & 4.280\ & \textbf{1.626}\  \\ 
        ~ & CRPS & \textbf{0.237} & \textbf{0.237} & 0.241 & \textbf{0.236}  \\ 
        ~ & MSE & 0.250 & \textbf{0.247} & 0.250 & \textbf{0.246}  \\ 
        ~ & MAE & 0.309 & \textbf{0.306} & 0.309 & \textbf{0.306}  \\ 
        \midrule
        \multirow{4}{*}{\makecell{Traffic}} & QICE & \textbf{2.414}\ & 2.744\ & 3.715\ & \textbf{2.616}\  \\ 
        ~ & CRPS & 0.266 & \textbf{0.252} & 0.271 & \textbf{0.251}  \\ 
        ~ & MSE & 0.542 & \textbf{0.524} & 0.543 & \textbf{0.520}  \\ 
        ~ & MAE & 0.335 & \textbf{0.323} & 0.336 & \textbf{0.321}  \\         
    \bottomrule
    \end{tabular}
    % }    
    \label{tab:priordistribution}%
    \vspace{-15pt}
\end{table}

\begin{figure*}[th]
  \centering

  % ===== 左侧：大图 =====
  
  % ===== 右侧：三小图 + 表 =====
  \begin{minipage}[t]{0.48\textwidth}
    \vspace{0pt}
    \centering

    % --- 右上：三张小图并排 ---
    \begin{subfigure}[t]{0.32\linewidth}
      \centering
      \includegraphics[width=\linewidth]{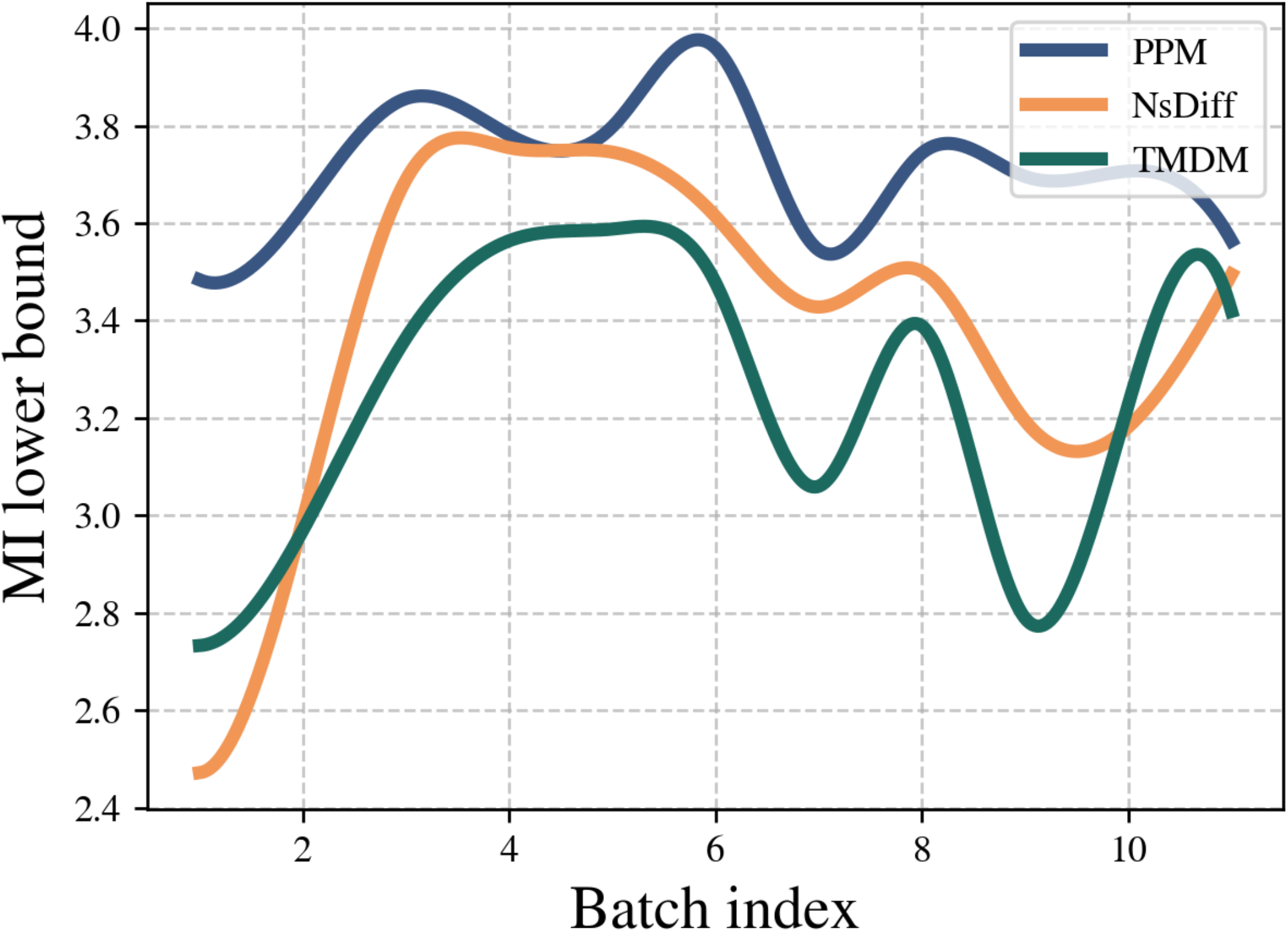}
      \caption{ETTh1}
    \end{subfigure}\hfill
    \begin{subfigure}[t]{0.32\linewidth}
      \centering
      \includegraphics[width=\linewidth]{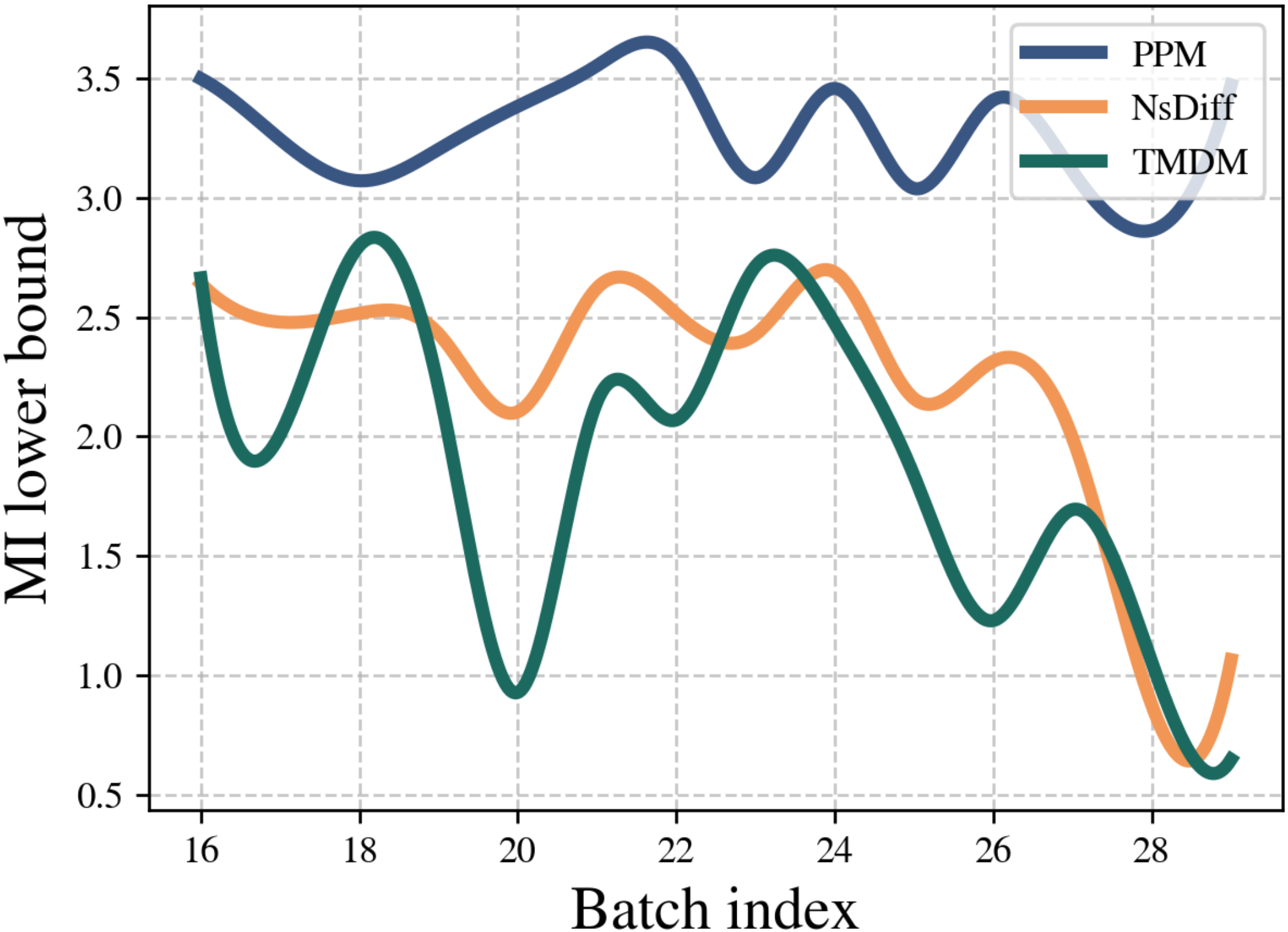}
      \caption{ETTm1}
    \end{subfigure}\hfill
    \begin{subfigure}[t]{0.32\linewidth}
      \centering
      \includegraphics[width=\linewidth]{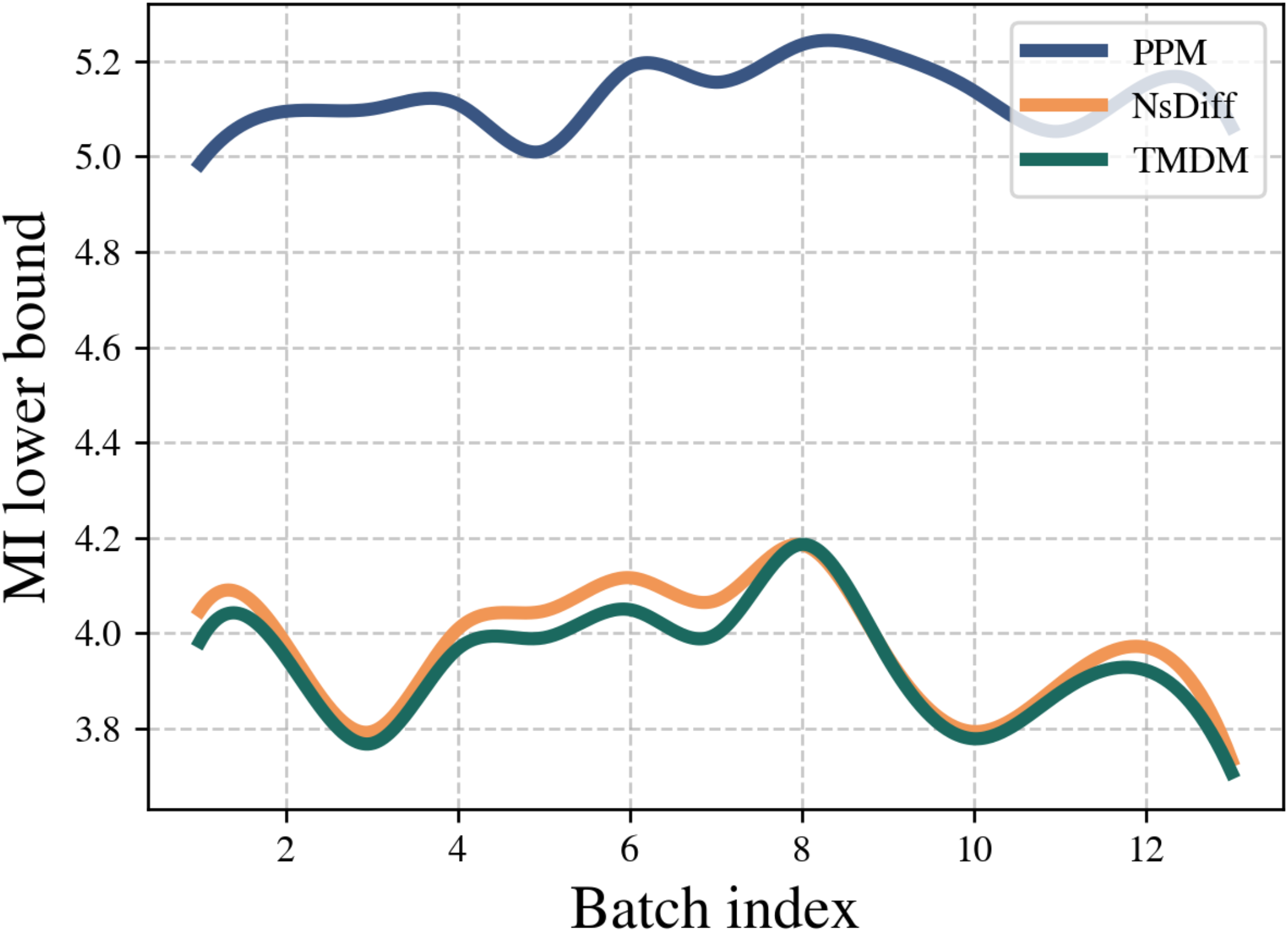}
      \caption{Traffic}
    \end{subfigure}

    % \vspace{2mm}
    \captionof{figure}{Mutual-information lower bound between the input $\bm{x}$ and the prior latent variable $\bm{z}$ on the test set of ETTh1, ETTm1, and Traffic (batch size=256).}
    \label{fig:infonce}

    \vspace{3mm}
    \captionof{table}{Ablation studies of the PPM objective function.}
    % --- 右下：表（不要用 table 浮动体） ---
    \resizebox{\linewidth}{!}{
    \begin{tabular}{c|ccc|ccc}
      \toprule
      Dataset  & \multicolumn{3}{c|}{ETTm1} & \multicolumn{3}{c}{Electricity}   \\
      \midrule
      Metric   & MSE & CRPS & QICE & MSE & CRPS & QICE \\
      \midrule
      w/o-NLL  & \textbf{0.371} & 0.345 & 6.342 & \textbf{0.182} & 0.257 & 8.317 \\
      w/o-MM   & 0.407 & 0.324 & 1.912 & 0.191 & 0.213 & 5.697 \\
      PPM      & 0.381 & \textbf{0.314} & \textbf{1.782} & \textbf{0.182} & \textbf{0.206} & \textbf{2.435} \\
      \bottomrule
    \end{tabular}}
    
    \label{tab:loss}
  \end{minipage}\hfill
  \begin{minipage}[t]{0.48\textwidth}
    \vspace{0pt}
    \centering
    \includegraphics[width=\linewidth]{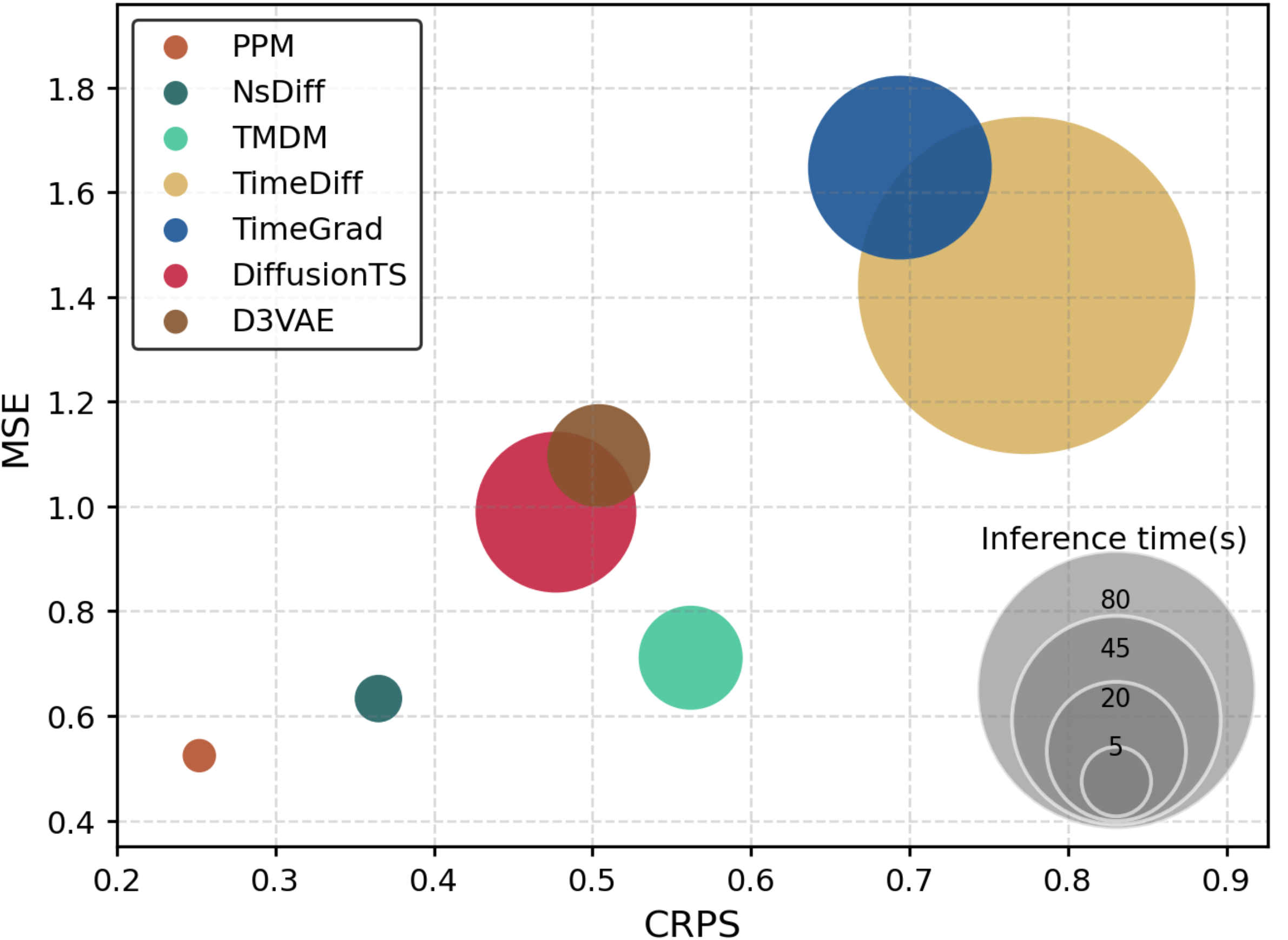}
    \captionof{figure}{Inference time comparison with different models on Traffic dataset, with history window $H=96$, future length $L=192$. The number of samples is set to 100.}
    \label{fig:efficiency}
  \end{minipage}
  % \vspace{-15pt}
\end{figure*}

\subsection{Analysis for PPM Framework}
\label{sec:5.3}

We dissect PPM’s internal mechanisms following its generative pipeline. First, we validate the push-forward robustness in recovering complex distributions from Parametric priors. Second, to assess conditioning quality, we quantify information retention in the learned prior. Finally, we verify how the NLL and Mean MSE objectives complement each other to guide this process. These analyses affirm the synergy between our architectural choices and learning objectives.

\subsubsection{Push-Forward Mapping}

% To demonstrate the efficient ability of this module to capture the complex distribution of time series, we perform analysis experiments in Table \ref{tab:priordistribution}. As shown in the table, after pushing forward the prior distribution, the ability of models to model the complex distribution is enhanced. We also test the performance with different prior distributions, such as the Uniform distribution, considering the degree of similarity between the empirical distribution and a Gaussian distribution. These results indicate that, even when using a uniform prior that deviates substantially from the empirical data distribution, our push-forward distribution can still effectively steer the prior toward the empirical distribution. The results in the table further demonstrate the stability and robustness of our method under different choices of prior distributions. More analysis for the form of prior distribution can be found in Appendix \ref{app:prior}.
Table \ref{tab:priordistribution} demonstrates the module's effectiveness in capturing complex distributions in ETTm1 and Traffic datasets. \textbf{Gauss.} and \textbf{Unif.} mean that the prior distribution is directly used for predictive distribution, while \textbf{+PM} indicates adding our push-forward for end-to-end estimation. The results show that applying the push-forward projection consistently improves performance. We further evaluated different priors, such as the Uniform distribution. Notably, even with a prior that substantially deviates from empirical data, our method effectively steers it toward the target distribution. These results confirm the stability and robustness of PPM regarding prior selection. Additional analysis of prior form is provided in Appendix \ref{app:prior}. We also evaluate the effectiveness of the end-to-end design in the Appendix \ref{app:e2e}.

\subsubsection{Parametric prior analysis}

% \begin{figure}[h]
%   \centering
%   \begin{subfigure}[t]{0.32\linewidth}
%     \centering
%     \includegraphics[width=\linewidth]{Figure/infonce_1_1.png}
%     \caption{ETTh1}
%   \end{subfigure}\hfill
%   \begin{subfigure}[t]{0.32\linewidth}
%     \centering
%     \includegraphics[width=\linewidth]{Figure/infonce_2_1.png}
%     \caption{ETTm1}
%   \end{subfigure}\hfill
%   \begin{subfigure}[t]{0.32\linewidth}
%     \centering
%     \includegraphics[width=\linewidth]{Figure/infonce_3_1.png}
%     \caption{Traffic}
%   \end{subfigure}  
%   \caption{Mutual-Information lower bound between the input $\bm{x}$ and the prior latent variable $\bm{z}$ on the test set of ETTh1, ETTm1, and Traffic (batch size=256).}
%   \vspace{-5pt}
%   \label{fig:infonce}
% \end{figure}

To further substantiate the advantage of our method in parametric prior modeling, we compute the mutual-information lower bound (MI lower bound)~\cite{oord2018mibound}
between the input sequence $\bm{x}$ and the prior latent variable $\bm{z}$ on the test sets of ETTh1, ETTm1, and Traffic (batch size $=256$) in the Figure \ref{fig:infonce}. The "Batch index" on the x-axis serves as a proxy for the progression of time, corresponding to sequentially unshuffled data windows of test sets. A higher MI lower bound indicates that the prior representation $\bm{z}$ preserves more information from $\bm{x}$, which is beneficial for effectively conditioning the subsequent forecasting. 

Overall, our method attains a higher and smoother MI lower bound across most batch indices, indicating that the parameterized prior preserves richer and more stable conditional information from the input, providing a more reliable conditioning signal for forecasting. Notably, on Traffic, with complex dynamics, our curve remains stable and maintains a margin over the baselines, suggesting a robust input-aware prior that captures the conditioning information. An ablation study on the latent variable $\bm{z}$, comparing a fixed identity covariance matrix with a learned parameterized variance, is provided in Appendix\ref{app:identity}.

\subsubsection{Objective Function}

% \begin{table}[!ht]
%     \centering
%     \caption{Ablation studies of the PPM objective function}
%     \resizebox{0.95\linewidth}{!}{
%     \begin{tabular}{c|ccc|ccc}
%         \toprule
%         Dataset  & \multicolumn{3}{c|}{ETTm1} & \multicolumn{3}{c}{Electricity}   \\ 
%         \midrule
%         Metric  & MSE & CRPS & QICE & MSE & CRPS & QICE  \\ 
%         \midrule
%         w/o-NLL & \textbf{0.371} & 0.345 & 6.342 & \textbf{0.182} & 0.257 & 8.317  \\ 
%         w/o-MM & 0.407 & 0.324 & {1.912} & {0.191} & {0.213} & {5.697}  \\ 
%         PPM & {0.381} & \textbf{0.314} & \textbf{1.782} & \textbf{0.182} & \textbf{0.206} & \textbf{2.435}  \\ 
%         \bottomrule
%     \end{tabular}
%     }    

%     \label{tab:loss}
% \end{table}

% Only WTA & 0.796 & 0.707 & 8.048 & 4.292 & 0.498 & 7.745  \\ 
% WTA+MM & 0.402 & 0.393 & 5.902 & 0.190 & 0.234 & 6.024  \\ 

To investigate the contributions of the PPM's objective function, we conduct ablation experiments on ETTm1 and Electricity datasets in Table \ref{tab:loss}. 
% Specifically, the loss function of our PPM consists of the Negative Log Likelihood (NLL) loss for MLE, and Mean MSE (MM) loss for stable point-prediction. 
Specifically, we evaluated the following variants: \textbf{w/o-MM}: PPM loss without Mean MSE term; \textbf{w/o-NLL}: PPM loss without Negative Log Likelihood term. It can be observed that removing the NLL term leads to performance degradation mainly in CRPS and QICE, highlighting the effectiveness of the NLL term in guiding the model to learn the predictive distribution. Similarly, removal of the MM loss results in a performance drop mainly in MSE, indicating the importance of MM loss to stabilize training and enhance the ability of point forecasting. We also analyze the impact of different kernel functions on the results in the Appendix \ref{app:kernelfunc}.

% \textbf{Only WTA}: Winner-Takes-All loss for learning only; \textbf{WTA+MM}: Winner-Takes-All loss with Mean MSE term. And for WTA loss, it also learns the marginal distribution for each point.

% We also compare our loss with the Winner-Takes-All loss to further validate the effectiveness of our method. 

\subsection{Inference Complexity}
We analyze the inference-time complexity of our framework. Given history $\bm{x}$, PPM first evaluates the backbone $f_\theta(\bm{x})$ and then applies a single-pass map $g_\phi(\cdot)$ to $K$ reparameterized draws to obtain $K$ samples. Hence, 
\[
T_{\text{PPM}}
=
O\!\left(
B\cdot \mathrm{Cost}(f_\theta)
+
BK\cdot \mathrm{Cost}(g_\phi)
\right).
\]
 By contrast, diffusion forecasters require a $T$-step denoising chain per sample:
\[
T_{\text{Diffusion}}
=
O\!\left(
BKT\cdot \mathrm{Cost}(g)
\right).
\]
Where $g$ denotes the denoising network. Compared to diffusion, PPM replaces the sequential $T$-step refinement with a single mapping, leading to an approximately $\Theta(T)$ reduction in the dominant inference term, and it parallelizes naturally over the $K$ samples. Figure~\ref{fig:efficiency} reports inference-time comparisons on Traffic ($H=96$, $L=192$, batch size $1$). It can be observed that our model achieves the best point-forecasting and probabilistic modeling performance while delivers a 2× to 100× speedup in inference relative to leading diffusion-based models. However, computing the KDE likelihood (e.g., during training) requires kernel aggregation, which introduces an additional $O(BCLK)$ computational overhead to $T_{\text{PPM}}$. Since this step is typically bypassed at inference, PPM strikes a trade-off between training complexity and inference latency. 

% However, if KDE likelihood is computed (e.g., in training), kernel aggregation adds $O(BCLK)$ to $T_{\text{PPM}}$, which is usually omitted at inference. PPM makes a trade-off between training costs and inference time.

% More analysis of training costs can be found in the appendix.

\section{Conclusion}
% We introduced Parametric Prior Mapping (PPM), a framework that synergizes the efficiency of parametric methods with the expressiveness of generative modeling. By learning a push-forward mapping from an adaptive parametric prior, PPM effectively captures complex non-stationary dynamics while avoiding the computational overhead of iterative diffusion. Extensive experiments demonstrate that PPM achieves state-of-the-art accuracy and calibration with up to $100\times$ faster inference, offering a robust and scalable solution for probabilistic time-series forecasting.

% Our framework leverages KDE to estimate the conditional predictive density, which introduces a bandwidth hyperparameter $h$ that controls the bias--variance trade-off. Although performance is generally stable within a reasonable range, extreme or rapidly shifting regimes may still amplify sensitivity to $h$. Future work will explore more robust density estimators and adaptive bandwidth strategies (e.g., learnable or local/multi-bandwidth KDE), as well as alternative training objectives beyond likelihood, such as strictly proper scoring rules (e.g., CRPS), to improve calibration and reduce dependence on KDE-specific tuning.

We introduced Parametric Prior Mapping (PPM), a framework that combines the efficiency of parametric methods with the expressiveness of generative modeling. By learning a push-forward mapping from a parametric prior, PPM captures complex non-stationary dynamics while avoiding the overhead of iterative diffusion. Experiments show PPM achieves state-of-the-art accuracy and calibration with up to $100\times$ faster inference, offering a scalable solution for probabilistic time-series forecasting.

\textbf{Limitations:} Our method uses KDE to estimate the conditional predictive density, with a bandwidth $h$ that controls the bias--variance trade-off. Although performance is usually stable within a reasonable range, extreme or rapidly shifting regimes can amplify sensitivity to $h$. Future work will explore more robust density estimators and adaptive bandwidth strategies (e.g., learnable or local/multi-bandwidth KDE), as well as training objectives beyond likelihood, such as strictly proper scoring rules (e.g., CRPS), to improve calibration and reduce dependence on KDE tuning.

In addition, the current PPM does not take into account the joint distribution modeling of the labels. Although independent-point learning is an effective strategy for time series forecasting, further advancements in this direction will be explored in the future, for example, by employing the energy score or variogram score as a learning objective.

\nocite{langley00}
% \newpage

\section*{Impact Statement}

This paper presents work whose goal is to advance the field of Machine Learning. There are many potential societal consequences of our work, none which we feel must be specifically highlighted here.

\bibliography{references}
\bibliographystyle{icml2026}

%%%%%%%%%%%%%%%%%%%%%%%%%%%%%%%%%%%%%%%%%%%%%%%%%%%%%%%%%%%%%%%%%%%%%%%%%%%%%%%
%%%%%%%%%%%%%%%%%%%%%%%%%%%%%%%%%%%%%%%%%%%%%%%%%%%%%%%%%%%%%%%%%%%%%%%%%%%%%%%
% APPENDIX
%%%%%%%%%%%%%%%%%%%%%%%%%%%%%%%%%%%%%%%%%%%%%%%%%%%%%%%%%%%%%%%%%%%%%%%%%%%%%%%
%%%%%%%%%%%%%%%%%%%%%%%%%%%%%%%%%%%%%%%%%%%%%%%%%%%%%%%%%%%%%%%%%%%%%%%%%%%%%%%
\newpage
\appendix
\onecolumn
% \section{Appendix}

% =========================
% PPM: Theory (ready-to-paste)
% =========================

% =========================================================
% Appendix: Theoretical Analysis of PPM
% =========================================================

\section{Additional Theoretical Results for PPM}
\label{app:theory_pfkde}

\paragraph{Log-domain stabilization via truncation (analysis matches implementation).}
In implementation, we compute $\log \hat q_h$ by the log-sum-exp trick and apply a lower truncation:
\begin{equation}
\log \tilde q_h(y| \bm{x})
:= \max\big\{\log \hat q_h(y| \bm{x}),\ \log \varepsilon\big\},
\qquad \varepsilon>0,
\label{eq:log_trunc}
\end{equation}
equivalently $\tilde q_h(y| \bm{x}) = \max\big\{\hat q_h(y| \bm{x}),\ \varepsilon\big\}$.
We analyze this truncated objective since it coincides with the implemented training loss.
The per-example PPM NLL (marginal/composite form) is thus
\begin{equation}
\hat{\mathcal L}^{(i)}_{\mathrm{NLL}}(\theta,\phi)
:=
-\frac{1}{CL}\sum_{c,t}\log \tilde q_h(y_{c,t}| \bm{x}).
\label{eq:nll_hat}
\end{equation}

\paragraph{Gaussian kernel and logsumexp form.}
For the Gaussian kernel $\mathcal K(u)=\frac{1}{\sqrt{2\pi}}\exp(-u^2/2)$,
\begin{equation}
\hat q_h(y| \bm{x})
=\frac{1}{\sqrt{2\pi}Kh}\sum_{k=1}^{K}
\exp\!\left(-\frac{(y-\hat y^k)^2}{2h^2}\right),
\end{equation}
and its log-density can be computed stably as
\begin{equation}
\log \hat q_h(y| \bm{x})
=
-\log(\sqrt{2\pi}Kh)\;+\;
\operatorname{LSE}_{k=1}^K\left(
-\frac{(y-\hat y^k)^2}{2h^2}
\right),
\label{eq:logsumexp}
\end{equation}
where $\operatorname{LSE}(a_1,\dots,a_K):=\log\sum_{k=1}^K e^{a_k}$.
Finally we apply \eqref{eq:log_trunc}.

% ---------------------------------------------------------
\subsection{Consistency and finite-$(K,h)$ error decomposition}
\label{app:consistency}

\paragraph{Setup and assumptions.}
We use standard KDE regularity conditions: bounded kernel with finite moments and a twice-differentiable target density.
These are summarized below.

\begin{assumption}[Kernel and smoothness]
\label{ass:kernel_smooth}
The kernel $\mathcal K:\mathbb R\to\mathbb R_+$ satisfies:
(i) $\int \mathcal K(u)\,du = 1$; (ii) $\int u\,\mathcal K(u)\,du = 0$;
(iii) $\int u^2\mathcal K(u)\,du =: m_2<\infty$; (iv) $\|\mathcal K\|_\infty\le \kappa$
and $v_{\mathcal K}:=\int \mathcal K(u)^2\,du < \infty$.
Moreover, for each fixed $\bm{x}$, $q_{\theta,\phi}(\cdot| \bm{x})$ is twice continuously differentiable and
$\sup_y |\partial_{yy} q_{\theta,\phi}(y| \bm{x})| \le M$.
\end{assumption}

\begin{definition}[Kernel-smoothed density]
\label{def:qh}
Define the kernel-smoothed model density (convolution) by
\begin{equation}
q_h(y| \bm{x})
:=
\mathbb E_{\hat y\sim q_{\theta,\phi}(\cdot| \bm{x})}
\left[\frac{1}{h}\mathcal K\!\left(\frac{y-\hat y}{h}\right)\right]
= \big(q_{\theta,\phi}(\cdot| \bm{x}) * \mathcal K_h\big)(y),
\label{eq:qh_def}
\end{equation}
where $\mathcal K_h(u)=\frac{1}{h}\mathcal K(u/h)$.
We also define the truncated (floored) \emph{population} version
\begin{equation}
\tilde q_{h,\mathrm{pop}}(y| \bm{x}) := \max\{q_h(y| \bm{x}),\ \varepsilon\}.
\label{eq:qh_floor}
\end{equation}
\end{definition}

\begin{lemma}[Unbiasedness for the smoothed density]
\label{lem:unbiased_qh}
For any fixed $(\bm{x},y)$ and any $h>0$, the KDE estimator satisfies
$\mathbb E\big[\hat q_h(y| \bm{x})\big] = q_h(y| \bm{x})$,
where the expectation is over the $K$ i.i.d.\ samples $\{\hat y^k\}$.
\end{lemma}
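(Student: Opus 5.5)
The plan is a direct computation: use linearity of expectation, then the identical distribution of the samples, and conclude by matching the result with Definition~\ref{def:qh}. Fix $(\bm{x},y)$ and $h>0$, and recall from Eq.~\eqref{eq:kde_def} that
\[
\hat q_h(y\mid\bm{x})=\frac{1}{K}\sum_{k=1}^K \frac{1}{h}\mathcal K\!\left(\frac{y-\hat y^k}{h}\right).
\]
This is an empirical average of $K$ random variables $W_k:=\mathcal K_h(y-\hat y^k)$, where $\mathcal K_h(u)=\frac1h\mathcal K(u/h)$ as in Definition~\ref{def:qh}.

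The first step is to check that each $W_k$ is integrable. By Assumption~\ref{ass:kernel_smooth}, the kernel is nonnegative and bounded by $\kappa$, so $0\le W_k\le \kappa/h<\infty$ almost surely. Each $W_k$ therefore has a finite expectation, and linearity of expectation applies with no further conditions. This gives
\[
\mathbb E[\hat q_h(y\mid\bm{x})]=\frac1K\sum_{k=1}^K\mathbb E[W_k].
\]

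The second step uses the fact that the samples $\hat y^k$ are identically distributed with law $q_{\theta,\phi}(\cdot\mid\bm{x})$. Each term therefore equals
\[
\mathbb E_{\hat y\sim q_{\theta,\phi}(\cdot\mid\bm{x})}\!\left[\tfrac1h\mathcal K\!\left(\tfrac{y-\hat y}{h}\right)\right].
\]
This is exactly $q_h(y\mid\bm{x})$ in Eq.~\eqref{eq:qh_def}, so the average of $K$ equal terms is $q_h(y\mid\bm{x})$. To justify the convolution form, write the expectation as $\int \mathcal K_h(y-u)\,q_{\theta,\phi}(u\mid\bm{x})\,du=(q_{\theta,\phi}*\mathcal K_h)(y)$. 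This integral is finite because $\mathcal K_h\le\kappa/h$ and $q_{\theta,\phi}$ integrates to one.

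No step is a real obstacle; the only point needing care is integrability, which the bounded kernel settles. Two remarks belong in the writeup. Independence of the samples is not needed here; it matters only later, for the variance and concentration bounds. Unbiasedness holds for $\hat q_h$ itself, not for the floored $\tilde q_h$ or its logarithm: both are nonlinear, so the later argument must pass from this lemma to the log-truncated loss through a Lipschitz bound on $\log\max\{\cdot,\varepsilon\}$.
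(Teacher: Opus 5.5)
Your proof is correct and follows the paper's argument: linearity of expectation over the $K$ summands, then identical distribution of the $\hat y^k$ so that each term equals $\frac{1}{h}\mathbb E_{\hat y\sim q_{\theta,\phi}(\cdot\mid\bm{x})}[\mathcal K((y-\hat y)/h)]$, which is $q_h(y\mid\bm{x})$ by definition. Your integrability check via $0\le W_k\le\kappa/h$ and your observation that only identical distribution, not independence, is needed are both sound; the paper leaves these points implicit.
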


\begin{proof}
Starting from \eqref{eq:kde_def},
\[
\mathbb E[\hat q_h(y| \bm{x})]
=\mathbb E\left[\frac{1}{Kh}\sum_{k=1}^{K}\mathcal K\!\left(\frac{y-\hat y^k}{h}\right)\right]
=\frac{1}{Kh}\sum_{k=1}^{K}\mathbb E\left[\mathcal K\!\left(\frac{y-\hat y^k}{h}\right)\right].
\]
Since $\hat y^k \stackrel{i.i.d.}{\sim} q_{\theta,\phi}(\cdot| \bm{x})$, each term has the same expectation, hence the sum equals
$K$ times one term:
\[
\mathbb E[\hat q_h(y| \bm{x})]
=\frac{1}{h}\mathbb E_{\hat y\sim q_{\theta,\phi}(\cdot| \bm{x})}\!\left[\mathcal K\!\left(\frac{y-\hat y}{h}\right)\right]
= q_h(y| \bm{x}).
\]
\end{proof}

\begin{lemma}[Smoothing bias of $q_h$]
\label{lem:smoothing_bias}
Under Assumption~\ref{ass:kernel_smooth}, for any fixed $(\bm{x},y)$,
\begin{equation}
\big|q_h(y| \bm{x})-q_{\theta,\phi}(y| \bm{x})\big|
\le \frac{m_2}{2} M h^2.
\label{eq:smoothing_bias}
\end{equation}
\end{lemma}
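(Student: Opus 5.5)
The plan is the standard second-order Taylor argument for kernel smoothing bias. Fix $(\bm{x},y)$ and write $q(\cdot)=q_{\theta,\phi}(\cdot\mid\bm{x})$ for brevity.

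First, use the change of variables $\hat y = y - hu$ in the convolution of Definition~\ref{def:qh}. This gives
\[
q_h(y\mid\bm{x})=\int \frac{1}{h}\mathcal K\!\left(\frac{y-\hat y}{h}\right)q(\hat y)\,d\hat y=\int \mathcal K(u)\,q(y-hu)\,du .
\]
By Assumption~\ref{ass:kernel_smooth}(i), $\int\mathcal K=1$, so the difference can be written as a single integral:
\[
q_h(y\mid\bm{x})-q(y)=\int \mathcal K(u)\,\big[q(y-hu)-q(y)\big]\,du .
\]

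Second, since $q$ is $C^2$, expand $q(y-hu)$ by Taylor's theorem with Lagrange remainder:
\[
q(y-hu)=q(y)-hu\,q'(y)+\tfrac12 h^2u^2\,q''(\xi_{u})
\]
for some $\xi_u$ between $y-hu$ and $y$. Substituting this into the integral, the first-order term contributes $-h\,q'(y)\int u\,\mathcal K(u)\,du$. This vanishes by Assumption~\ref{ass:kernel_smooth}(ii).

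Third, bound the remainder. We have $|q''(\xi_u)|\le M$ uniformly in $u$, and $\mathcal K\ge 0$. Hence
\[
\Big|\tfrac12 h^2\int \mathcal K(u)\,u^2\,q''(\xi_u)\,du\Big|\le \tfrac12 h^2 M\int u^2\mathcal K(u)\,du=\tfrac{m_2}{2}Mh^2,
\]
where the last equality uses (iii). This is exactly \eqref{eq:smoothing_bias}.

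The only delicate point is justifying that the terms can be integrated separately. The zeroth- and second-order pieces are integrable by (i) and (iii). The first-order piece $u\,\mathcal K(u)$ is absolutely integrable because $|u|\le 1+u^2$, so $\int|u|\,\mathcal K\le 1+m_2<\infty$. Measurability of $u\mapsto q''(\xi_u)$ also need not be assumed: the remainder equals $q(y-hu)-q(y)+hu\,q'(y)$, which is measurable, and only its pointwise bound $\tfrac12 h^2u^2M$ is used. I therefore expect no real obstacle; the proof is a direct calculation.
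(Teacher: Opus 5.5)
Your proposal is correct and follows the paper's route. The paper's proof is a one-line appeal to a second-order Taylor expansion of $q_{\theta,\phi}(y-hu\mid\bm{x})$ around $y$, combined with $\int\mathcal K=1$, $\int u\,\mathcal K(u)\,du=0$ and $\sup|\partial_{yy}q_{\theta,\phi}|\le M$, which is exactly what you carry out; you additionally make explicit the change of variables and the integrability and measurability bookkeeping (e.g.\ $|u|\le 1+u^2$, and bounding the remainder $q(y-hu)-q(y)+hu\,q'(y)$ directly), which the paper leaves implicit.
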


\begin{proof}
A standard second-order Taylor expansion of $q_{\theta,\phi}(y-hu| \bm{x})$ around $y$, combined with
$\int \mathcal K(u)\,du=1$, $\int u \mathcal K(u)\,du=0$, and $\sup|\partial_{yy}q_{\theta,\phi}|\le M$,
yields \eqref{eq:smoothing_bias}.
\end{proof}

\begin{lemma}[Concentration around $q_h$]
\label{lem:kde_conc}
Under Assumption~\ref{ass:kernel_smooth}, for any $\delta\in(0,1)$ and fixed $(\bm{x},y)$,
with probability at least $1-\delta$,
\begin{equation}
\big|\hat q_h(y| \bm{x})-q_h(y| \bm{x})\big|
\le \frac{\kappa}{h}\sqrt{\frac{2\log(2/\delta)}{K}}.
\label{eq:kde_conc}
\end{equation}
\end{lemma}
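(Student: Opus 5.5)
The bound is a direct application of Hoeffding's inequality to the i.i.d.\ kernel evaluations. For fixed $(\bm{x},y)$, define the random variables
\[
Z_k := \frac{1}{h}\,\mathcal K\!\left(\frac{y-\hat y^k}{h}\right), \qquad k=1,\dots,K,
\]
so that $\hat q_h(y\mid\bm{x}) = \frac{1}{K}\sum_{k=1}^K Z_k$. Since the samples $\hat y^k$ are i.i.d.\ draws from $q_{\theta,\phi}(\cdot\mid\bm{x})$, the $Z_k$ are i.i.d. By Lemma~\ref{lem:unbiased_qh} their common mean is $q_h(y\mid\bm{x})$. Thus the deviation $\hat q_h - q_h$ is the centered empirical mean of $K$ i.i.d.\ variables.

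Next I establish boundedness. By Assumption~\ref{ass:kernel_smooth}, $\mathcal K$ takes values in $\mathbb R_+$ and satisfies $\|\mathcal K\|_\infty\le\kappa$. Hence $Z_k\in[0,\kappa/h]$ almost surely, and each variable has range at most $b-a=\kappa/h$.

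Hoeffding's inequality for bounded i.i.d.\ variables then gives, for every $s>0$,
\[
\Pr\big(|\hat q_h(y\mid\bm{x})-q_h(y\mid\bm{x})|\ge s\big)\le 2\exp\!\left(-\frac{2Ks^2}{(\kappa/h)^2}\right).
\]
Setting the right-hand side equal to $\delta$ and solving for $s$ yields
\[
s=\frac{\kappa}{h}\sqrt{\frac{\log(2/\delta)}{2K}}.
\]
This is smaller than the stated $\frac{\kappa}{h}\sqrt{\frac{2\log(2/\delta)}{K}}$ by a factor of $2$, so the claimed inequality holds with probability at least $1-\delta$. The same stated constant would also follow from the cruder centered range $[-\kappa/h,\kappa/h]$, so either way we are done.

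There is essentially no obstacle here. The only points to check are that nonnegativity of $\mathcal K$ is part of Assumption~\ref{ass:kernel_smooth}, which it is since $\mathcal K:\mathbb R\to\mathbb R_+$, and that independence of the samples is used only across $k$ for a single fixed $y$. The extension to all $CL$ coordinates simultaneously, needed for Theorem~\ref{thm:main_nll}, is a later union bound that replaces $\delta$ by $\delta/(CL)$. It is not part of this lemma.
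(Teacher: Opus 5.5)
Your proof is correct and matches the paper's argument: Hoeffding's inequality applied to the i.i.d.\ summands $\frac{1}{h}\mathcal K\big((y-\hat y^k)/h\big)\in[0,\kappa/h]$, whose common mean is $q_h(y\mid\bm{x})$. You are also right that the range $[0,\kappa/h]$ gives the constant $\sqrt{\log(2/\delta)/(2K)}$, which is smaller by a factor of $2$, and that the stated $\sqrt{2\log(2/\delta)/K}$ is exactly what the cruder range $2\kappa/h$ produces, so the lemma holds as written.
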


\begin{proof}
Each summand in \eqref{eq:kde_def} is bounded in $[0,\kappa/h]$ and i.i.d.
Applying Hoeffding's inequality to the average yields \eqref{eq:kde_conc}.
\end{proof}

\begin{lemma}[Log perturbation under truncation]
\label{lem:log_trunc_lip}
Define $\mathrm{clip}_{\varepsilon}(a):=\max\{a,\varepsilon\}$ for $a\ge 0$.
Then for any $a,b\ge 0$ and $\varepsilon>0$,
\begin{equation}
\big|\log(\mathrm{clip}_{\varepsilon}(a))-\log(\mathrm{clip}_{\varepsilon}(b))\big|
\le \frac{|a-b|}{\varepsilon}.
\label{eq:log_trunc_lip}
\end{equation}
\end{lemma}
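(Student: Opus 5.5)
The inequality follows from the elementary fact that $\log$ is Lipschitz with constant $1/\varepsilon$ on $[\varepsilon,\infty)$, combined with the observation that $\mathrm{clip}_\varepsilon$ is a $1$-Lipschitz map from $[0,\infty)$ into $[\varepsilon,\infty)$. The argument composes these two facts.

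\textbf{Step 1: $\mathrm{clip}_\varepsilon$ is $1$-Lipschitz.} Since $\mathrm{clip}_\varepsilon(a)=\max\{a,\varepsilon\}$ is the pointwise maximum of the $1$-Lipschitz function $a\mapsto a$ and a constant, we have
\[
|\mathrm{clip}_\varepsilon(a)-\mathrm{clip}_\varepsilon(b)|\le|a-b|.
\]
To verify this directly, split into cases.
\begin{itemize}
\item If both $a,b\ge\varepsilon$, the difference is exactly $|a-b|$.
\item If both $a,b<\varepsilon$, the difference is $0$.
\item If $a\ge\varepsilon>b$, the difference is $a-\varepsilon\le a-b$.
\end{itemize}
The remaining case is symmetric.

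\textbf{Step 2: the mean value theorem for $\log$ on $[\varepsilon,\infty)$.} Write $A=\mathrm{clip}_\varepsilon(a)$ and $B=\mathrm{clip}_\varepsilon(b)$, both in $[\varepsilon,\infty)$. By the mean value theorem,
\[
|\log A-\log B|=\frac{|A-B|}{\xi}
\]
for some $\xi$ between $A$ and $B$. Since $\xi\ge\varepsilon$, this gives $|\log A-\log B|\le |A-B|/\varepsilon$. Alternatively, one can use $\log A-\log B=\int_B^A t^{-1}\,dt$ together with $t^{-1}\le 1/\varepsilon$ on the range of integration.

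\textbf{Step 3: combine.} Chaining Steps 1 and 2 gives
\[
|\log\mathrm{clip}_\varepsilon(a)-\log\mathrm{clip}_\varepsilon(b)|\le|A-B|/\varepsilon\le|a-b|/\varepsilon,
\]
which is \eqref{eq:log_trunc_lip}.

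There is no real obstacle. The only point needing care is that the floor keeps both arguments of $\log$ at least $\varepsilon$, so the derivative bound $1/\varepsilon$ applies uniformly, including when $a$ or $b$ equals $0$. This uniform bound is what later lets Lemmas~\ref{lem:smoothing_bias} and \ref{lem:kde_conc} transfer to the $1/\varepsilon$-scaled terms in Theorem~\ref{thm:main_nll}.
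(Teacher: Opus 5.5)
Your proof is correct and matches the paper's argument: the paper likewise composes the $1$-Lipschitz property of $\mathrm{clip}_\varepsilon$ with the $(1/\varepsilon)$-Lipschitz property of $\log$ on $[\varepsilon,\infty)$. Your case analysis and mean-value-theorem step supply the details the paper's one-line proof leaves implicit.
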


\begin{proof}
$\mathrm{clip}_{\varepsilon}$ is $1$-Lipschitz, and $\log(\cdot)$ is $(1/\varepsilon)$-Lipschitz on $[\varepsilon,\infty)$.
Composing the two yields \eqref{eq:log_trunc_lip}.
\end{proof}

\begin{theorem}[Finite-$(K,h)$ NLL error decomposition and consistency (log-truncation)]
\label{thm:nll_consistency}
Let $\hat{\mathcal L}^{(i)}_{\mathrm{NLL}}$ be defined in \eqref{eq:nll_hat} with truncation
$\tilde q_h=\max\{\hat q_h,\varepsilon\}$.
Define the truncated ``population'' counterparts
\begin{equation}
\mathcal L^{(i)}_{h,\varepsilon}(\theta,\phi)
:=
-\frac{1}{CL}\sum_{c,t}\log \max\{q_{h}(y_{c,t}| \bm{x}),\varepsilon\},
\qquad
\mathcal L^{(i)}_{\varepsilon}(\theta,\phi)
:=
-\frac{1}{CL}\sum_{c,t}\log \max\{q_{\theta,\phi}(y_{c,t}| \bm{x}),\varepsilon\}.
\end{equation}
Under Assumption~\ref{ass:kernel_smooth}, for any $\delta\in(0,1)$, with probability at least $1-\delta$,
\begin{equation}
\big|\hat{\mathcal L}^{(i)}_{\mathrm{NLL}} - \mathcal L^{(i)}_{\varepsilon}\big|
\le
\underbrace{\frac{\kappa}{\varepsilon h}\sqrt{\frac{2\log(2CL/\delta)}{K}}}_{\text{finite-$K$ stochastic term}}
+
\underbrace{\frac{m_2 M}{2\varepsilon}h^2}_{\text{smoothing bias term}}.
\label{eq:nll_decomp_trunc}
\end{equation}
Consequently, if $h\to 0$ and $Kh^2\to\infty$, then
$\hat{\mathcal L}^{(i)}_{\mathrm{NLL}}(\theta,\phi)\to \mathcal L^{(i)}_{\varepsilon}(\theta,\phi)$ in probability.
\end{theorem}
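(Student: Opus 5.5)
The plan is a triangle inequality through the intermediate population objective $\mathcal L^{(i)}_{h,\varepsilon}$, which splits the error as
\[
\big|\hat{\mathcal L}^{(i)}_{\mathrm{NLL}}-\mathcal L^{(i)}_{\varepsilon}\big|
\le
\big|\hat{\mathcal L}^{(i)}_{\mathrm{NLL}}-\mathcal L^{(i)}_{h,\varepsilon}\big|
+\big|\mathcal L^{(i)}_{h,\varepsilon}-\mathcal L^{(i)}_{\varepsilon}\big|.
\]
Both terms are averages over the $CL$ coordinates of differences of the form $\log\mathrm{clip}_\varepsilon(a)-\log\mathrm{clip}_\varepsilon(b)$. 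By Lemma~\ref{lem:log_trunc_lip}, each such difference is at most $|a-b|/\varepsilon$. The average of $CL$ such terms is then bounded by the maximum over $(c,t)$ of $|a_{c,t}-b_{c,t}|/\varepsilon$. So it suffices to control density differences uniformly over coordinates.

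For the second (deterministic) term, take $a=q_h(y_{c,t}|\bm{x})$ and $b=q_{\theta,\phi}(y_{c,t}|\bm{x})$. Lemma~\ref{lem:smoothing_bias} gives $|a-b|\le \tfrac{m_2}{2}Mh^2$ for every coordinate, since the bound is uniform in $y$. This yields the smoothing bias term $\frac{m_2M}{2\varepsilon}h^2$ with no probability involved.

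For the first (stochastic) term, take $a=\hat q_h(y_{c,t}|\bm{x})$ and $b=q_h(y_{c,t}|\bm{x})$. I would apply Lemma~\ref{lem:kde_conc} at each of the $CL$ coordinates with confidence level $\delta/(CL)$. Each coordinate then satisfies $|\hat q_h-q_h|\le \frac{\kappa}{h}\sqrt{2\log(2CL/\delta)/K}$ with failure probability at most $\delta/(CL)$. A union bound gives the event holding simultaneously for all $(c,t)$ with probability at least $1-\delta$. This is valid even though the coordinates share the same $K$ draws $\hat{\bm y}^{(k)}$, because each coordinate's summands are i.i.d.\ across $k$ and the union bound requires no independence across coordinates.

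Lemma~\ref{lem:kde_conc} is stated for fixed $(\bm{x},y)$. This is the one point needing care: I would note explicitly that $\bm{x}$ and the observed $y_{c,t}$ are held fixed, i.e.\ the argument conditions on the training instance, and randomness comes only from the reparameterization noise $\bm\epsilon^{(k)}$. Dividing the uniform bound by $\varepsilon$ produces the finite-$K$ term, and adding the two pieces gives \eqref{eq:nll_decomp_trunc}.

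Consistency then follows directly. The bias term tends to $0$ as $h\to0$. For fixed $\delta$, the stochastic term is $O\big(1/(\varepsilon h\sqrt K)\big)=O\big(1/(\varepsilon\sqrt{Kh^2})\big)$, which tends to $0$ when $Kh^2\to\infty$. So for any $\eta>0$ and $\delta$, the deviation is eventually below $\eta$ with probability at least $1-\delta$, which is convergence in probability.
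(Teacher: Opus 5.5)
Your proposal is correct and follows essentially the same route as the paper's proof. Both go through the triangle inequality via $\mathcal L^{(i)}_{h,\varepsilon}$. The stochastic term is handled by Lemma~\ref{lem:kde_conc} at each of the $CL$ coordinates with a union bound, followed by Lemma~\ref{lem:log_trunc_lip}. The bias term is handled by Lemma~\ref{lem:smoothing_bias} combined with the same Lipschitz lemma. Your added remarks only make explicit details the paper leaves implicit: the per-coordinate confidence level $\delta/(CL)$, that no cross-coordinate independence is needed, and conditioning on the training instance $(\bm{x},\bm{y})$.
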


\begin{proof}
We decompose
\[
|\hat{\mathcal L}_{\mathrm{NLL}}-\mathcal L_{\varepsilon}|
\le |\hat{\mathcal L}_{\mathrm{NLL}}-\mathcal L_{h,\varepsilon}| + |\mathcal L_{h,\varepsilon}-\mathcal L_{\varepsilon}|.
\]
For the finite-$K$ term, apply Lemma~\ref{lem:kde_conc} at each of the $CL$ coordinates and use a union bound
to ensure probability at least $1-\delta$ overall. Then apply Lemma~\ref{lem:log_trunc_lip} to translate density error
into truncated log error. For the smoothing-bias term, apply Lemma~\ref{lem:smoothing_bias} and Lemma~\ref{lem:log_trunc_lip}
coordinate-wise. Both terms vanish under $h\to0$ and $Kh^2\to\infty$.
\end{proof}

\begin{corollary}[Rate interpretation]
\label{cor:rate}
Equation~\eqref{eq:nll_decomp_trunc} implies
\[
\big|\hat{\mathcal L}^{(i)}_{\mathrm{NLL}} - \mathcal L^{(i)}_{\varepsilon}\big|
=
O_p\!\left(\frac{1}{\varepsilon}\left(\frac{1}{h\sqrt{K}} + h^2\right)\right),
\]
illustrating the trade-off between finite-$K$ fluctuation and smoothing bias.
\end{corollary}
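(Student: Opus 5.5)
The corollary follows directly from the explicit high-probability bound \eqref{eq:nll_decomp_trunc} in Theorem~\ref{thm:nll_consistency}. Unpacking the $O_p$ notation, I need to show the following. For every $\eta\in(0,1)$ there is a constant $C_\eta$, independent of $K$ and $h$, such that
\[
\Pr\Big(\big|\hat{\mathcal L}^{(i)}_{\mathrm{NLL}} - \mathcal L^{(i)}_{\varepsilon}\big| > \tfrac{C_\eta}{\varepsilon}\big(\tfrac{1}{h\sqrt K}+h^2\big)\Big)\le \eta .
\]
Here $\varepsilon$ is allowed to stay explicit, and the hidden constants may depend on $\kappa$, $m_2$, $M$ and $CL$, all of which are fixed.

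The plan is to apply Theorem~\ref{thm:nll_consistency} with $\delta=\eta$. On an event of probability at least $1-\eta$ this gives
\[
\big|\hat{\mathcal L}^{(i)}_{\mathrm{NLL}} - \mathcal L^{(i)}_{\varepsilon}\big|
\le \frac{\kappa\sqrt{2\log(2CL/\eta)}}{\varepsilon}\cdot\frac{1}{h\sqrt K}+\frac{m_2M}{2\varepsilon}h^2 .
\]
I then set $C_\eta:=\max\{\kappa\sqrt{2\log(2CL/\eta)},\,m_2M/2\}$. Bounding each coefficient by $C_\eta$ shows the right-hand side is at most $\frac{C_\eta}{\varepsilon}\big(\frac{1}{h\sqrt K}+h^2\big)$. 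This is exactly the $O_p$ statement, and the two terms are the finite-$K$ fluctuation and the smoothing bias respectively.

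For the trade-off interpretation, I would minimize $\frac{1}{h\sqrt K}+h^2$ over $h$. The two terms balance at $h\asymp K^{-1/6}$, which gives the rate $O_p(K^{-1/3}/\varepsilon)$. I would also note that the consistency condition $h\to0$ together with $Kh^2\to\infty$ makes both terms vanish.

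There is no real obstacle here. The only point needing care is that $C_\eta$ must not depend on $(K,h)$, so that the statement is a genuine $O_p$ bound uniform along any sequence of $(K,h)$. This holds because the $\log(2CL/\delta)$ factor involves only $\delta$ and the fixed dimension $CL$.
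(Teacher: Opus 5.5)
Your proposal is correct and matches the paper, which states the corollary as an immediate consequence of Eq.~\eqref{eq:nll_decomp_trunc} with no separate proof. Your argument makes that reading-off explicit by taking $\delta=\eta$ and absorbing $\kappa\sqrt{2\log(2CL/\eta)}$ and $m_2M/2$ into a single constant independent of $(K,h)$; your remark that balancing the two terms at $h\asymp K^{-1/6}$ gives $O_p(K^{-1/3}/\varepsilon)$ is a correct refinement the paper does not spell out.
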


\subsection{Expressiveness of push-forward conditional generation.}
\label{app:expressiveness}

For each history $\bm{x}$, PPM generates forecasts by a push-forward map
\[
\hat y = g_{\phi}(\bm{z}),\qquad \bm{z}\sim p_{\theta}(\cdot\mid \bm{x}).
\]
Recall that for a measurable map $T:\mathcal Z\to\mathcal Y$ and a measure $p_{\bm z}$ on $\mathcal Z$,
the push-forward $T_{\#}p_{\bm z}$ is defined by $(T_{\#}p_{\bm z})(A)=p_{\bm z}(T^{-1}(A))$
for any measurable $A\subseteq\mathcal Y$.

\paragraph{Setup.}
Fix $\bm{x}$ and denote the ground-truth conditional distribution on $\mathbb R$ by $p^{*}(\cdot\mid \bm{x})$.
We show that the push-forward family $\{g_{\phi\#}p_{\theta}(\cdot\mid \bm{x})\}$ is dense in $W_1$.

\begin{assumption}[Target regularity]
\label{ass:target_reg_pf}
For each $\bm{x}$, $p^{*}(\cdot\mid \bm{x})$ on $\mathbb R$ has a continuous CDF and a finite first moment.
\end{assumption}

\begin{lemma}[Existence of a conditional transport from the prior]
\label{lem:transport_prior_pf}
Under Assumption~\ref{ass:target_reg_pf}, for each fixed $\bm{x}$ there exists a measurable map
$T_{\bm{x}}:\mathbb R^{D}\to\mathbb R$ such that
\[
(T_{\bm{x}})_{\#}p_{\theta}(\cdot\mid \bm{x}) \;=\; p^{*}(\cdot\mid \bm{x}).
\]
\end{lemma}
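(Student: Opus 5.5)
The plan is the classical two-step quantile construction: first map the prior to $\mathrm{Unif}(0,1)$ through a CDF, then map the uniform to the target through the generalized inverse CDF of $p^{*}(\cdot\mid\bm{x})$. Fix $\bm{x}$. The prior is $p_{\theta}(\cdot\mid\bm{x})=\mathcal N(\bm{\mu},\mathrm{diag}(\bm{\sigma}^2))$ on $\mathbb R^{D}$, and we assume $\sigma_1>0$ (implicit in the reparameterization). Let $\pi_1(\bm z)=z_1$ be the projection onto the first latent coordinate. Its law is the univariate Gaussian $\mathcal N(\mu_1,\sigma_1^2)$, whose CDF $\Phi_1(s)=\Phi((s-\mu_1)/\sigma_1)$ is continuous and strictly increasing.

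\textbf{Step 1 (probability integral transform).} Set $U:=\Phi_1(z_1)$. I would show $U\sim\mathrm{Unif}(0,1)$ using only continuity of $\Phi_1$. Because $\Phi_1$ is strictly increasing and continuous, for any $u\in(0,1)$ we have $\{\Phi_1(z_1)\le u\}=\{z_1\le\Phi_1^{-1}(u)\}$, so $\Pr(U\le u)=u$. Thus $U$ is a measurable function of $\bm z$ with uniform law.

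\textbf{Step 2 (quantile map).} Let $F^{*}$ be the CDF of $p^{*}(\cdot\mid\bm{x})$. Define its generalized inverse $Q^{*}(u):=\inf\{y\in\mathbb R: F^{*}(y)\ge u\}$ for $u\in(0,1)$; this is nondecreasing, left-continuous, and hence Borel measurable. Right-continuity of $F^{*}$ gives the standard equivalence $Q^{*}(u)\le y \iff u\le F^{*}(y)$. From this, $\Pr(Q^{*}(U)\le y)=\Pr(U\le F^{*}(y))=F^{*}(y)$ for every $y$, so $Q^{*}(U)\sim p^{*}(\cdot\mid\bm{x})$. Now set $T_{\bm x}:=Q^{*}\circ\Phi_1\circ\pi_1$. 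On the null set where $\Phi_1(z_1)\in\{0,1\}$ (empty here, since Gaussians have full support), assign an arbitrary constant. This map is a composition of measurable maps, and $(T_{\bm x})_{\#}p_{\theta}(\cdot\mid\bm{x})=p^{*}(\cdot\mid\bm{x})$ by the two steps together.

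The only delicate point is the equivalence $Q^{*}(u)\le y\iff u\le F^{*}(y)$. I would verify it directly. If $u\le F^{*}(y)$, then $y$ lies in the set defining $Q^{*}(u)$, so $Q^{*}(u)\le y$. Conversely, the infimum is attained by right-continuity, so $F^{*}(Q^{*}(u))\ge u$, and monotonicity of $F^{*}$ gives $F^{*}(y)\ge u$ whenever $y\ge Q^{*}(u)$.

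Neither the continuity of $F^{*}$ nor the finite first moment in Assumption~\ref{ass:target_reg_pf} is actually needed for existence. Those hypotheses matter only for the subsequent $W_1$ approximation of $T_{\bm x}$ by MLPs, so I would remark on this rather than use them. If $\sigma$ could vanish in every coordinate the prior would be degenerate and the construction would fail; that is why we assume at least one $\sigma_d>0$.
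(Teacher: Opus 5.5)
Your proof is correct, but it takes a more elementary route than the paper. The paper invokes the measure isomorphism theorem for standard Borel spaces to get a Borel isomorphism $\psi:\mathbb R^{D}\to(0,1)$ with $\psi(\bm z)\sim\mathrm{Unif}(0,1)$ for $\bm z\sim p_{\theta}(\cdot\mid\bm x)$, and then sets $T_{\bm x}=F_{\bm x}^{-1}\circ\psi$. You instead build the uniform variable explicitly, as the probability integral transform $U=\Phi_1(z_1)$ of a single Gaussian coordinate. You then apply the same generalized-inverse step, and you actually verify it through the equivalence $Q^{*}(u)\le y\iff u\le F^{*}(y)$, whereas the paper only asserts $F_{\bm x}^{-1}(U)\sim p^{*}(\cdot\mid\bm x)$.

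\textbf{What your route buys.}
\begin{itemize}
\item It is self-contained and avoids a descriptive-set-theoretic result that the paper neither proves nor cites precisely.
\item It makes explicit the non-degeneracy hypothesis (some $\sigma_d>0$, i.e.\ an atomless prior), which the paper's isomorphism also silently needs.
\item The resulting map is monotone in one coordinate. This is irrelevant under the paper's blanket $L^1$-density assumption, but it is a much more natural target for MLP approximation than a typically wildly discontinuous Borel isomorphism.
\end{itemize}

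\textbf{What the paper's route buys.} It applies verbatim to any atomless prior on $\mathbb R^{D}$. Yours needs a coordinate with an atomless marginal, which holds for all the product priors in the appendix, so this is not a real restriction here.

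\textbf{Two small polish points.}
\begin{itemize}
\item In Step~1 you say you use only continuity of $\Phi_1$, but your argument uses strict monotonicity. Both hold, so there is no harm.
\item You should project onto a coordinate $d$ with $\sigma_d>0$ rather than fixing $d=1$, to match your closing remark.
\end{itemize}

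Your observation that neither continuity of $F^{*}$ nor the first moment is needed for existence is right. The first moment is exactly what gives $\mathbb E|T_{\bm x}(\bm z)|=\int|y|\,p^{*}(dy\mid\bm x)<\infty$, which is the hypothesis needed to invoke Assumption~\ref{ass:ua_pf} downstream.
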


\begin{proof}
Since $\mathbb R^{D}$ is a standard Borel space, there exists a Borel isomorphism
$\psi:\mathbb R^{D}\to(0,1)$ such that if $\bm z\sim p_{\theta}(\cdot\mid \bm x)$ then
$U:=\psi(\bm z)\sim \mathrm{Unif}(0,1)$.
Let $F_{\bm{x}}$ be the CDF of $p^{*}(\cdot\mid \bm{x})$ and define the generalized inverse
$F_{\bm{x}}^{-1}(u):=\inf\{y\in\mathbb R:\,F_{\bm{x}}(y)\ge u\}$.
Set
\[
T_{\bm{x}}(\bm z)\;:=\;F_{\bm{x}}^{-1}\big(\psi(\bm z)\big).
\]
Then $T_{\bm{x}}(\bm z)=F_{\bm{x}}^{-1}(U)\sim p^{*}(\cdot\mid \bm x)$, which implies
$(T_{\bm{x}})_{\#}p_{\theta}(\cdot\mid \bm x)=p^{*}(\cdot\mid \bm x)$ by the definition of push-forward.
\end{proof}

\begin{lemma}[$W_1$ control by map approximation under a fixed latent]
\label{lem:w1_map_pf}
Let $\bm z\sim p_{\theta}(\cdot\mid \bm x)$ and let $g,T:\mathbb R^{D}\to\mathbb R$ be measurable.
Then
\begin{equation}
W_1\!\big(g_{\#}p_{\theta}(\cdot\mid \bm x),\; T_{\#}p_{\theta}(\cdot\mid \bm x)\big)
\le \mathbb E\big[\,|g(\bm z)-T(\bm z)|\,\big].
\label{eq:w1_map_pf}
\end{equation}
\end{lemma}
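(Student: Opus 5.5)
The plan is to use the coupling characterization of $W_1$: the distance is the infimum of $\mathbb E|Y-Y'|$ over all couplings $(Y,Y')$ of the two marginals. The proof then reduces to exhibiting one explicit coupling.

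First, fix $\bm x$ and let $\bm z\sim p_{\theta}(\cdot\mid\bm x)$. Set $Y:=g(\bm z)$ and $Y':=T(\bm z)$. Both are measurable functions of the same latent, so the joint law of $(Y,Y')$ is a probability measure $\pi$ on $\mathbb R^2$. Its first marginal is $g_{\#}p_{\theta}(\cdot\mid\bm x)$ and its second is $T_{\#}p_{\theta}(\cdot\mid\bm x)$; each follows directly from the definition of push-forward. Concretely, for measurable $A\subseteq\mathbb R$,
\[
\pi(A\times\mathbb R)=\Pr(g(\bm z)\in A)=p_{\theta}(g^{-1}(A)\mid\bm x),
\]
and similarly for $T$. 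Hence $\pi$ is an admissible coupling.

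Second, by the Kantorovich definition,
\[
W_1\bigl(g_{\#}p_{\theta}(\cdot\mid\bm x),\,T_{\#}p_{\theta}(\cdot\mid\bm x)\bigr)\le\int|y-y'|\,d\pi(y,y')=\mathbb E\,|g(\bm z)-T(\bm z)|.
\]
The final equality is the change-of-variables formula for the image measure, applied to the nonnegative measurable function $(y,y')\mapsto|y-y'|$. This yields \eqref{eq:w1_map_pf}.

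The only delicate point is a well-definedness caveat rather than a real obstacle. If either push-forward lacks a finite first moment, then $W_1$ may be infinite. In that case the right-hand side is also infinite whenever the left-hand side is, or the inequality holds trivially when the right-hand side is $+\infty$. The Tonelli form of change of variables covers this without any integrability assumption. I would state the lemma as holding in $[0,\infty]$ and remark that finiteness follows whenever $g(\bm z),T(\bm z)\in L^1$. This is the form needed later, where $T=T_{\bm x}$ from Lemma~\ref{lem:transport_prior_pf} has finite first moment by Assumption~\ref{ass:target_reg_pf}, and $g=g_\phi$ is chosen close to $T_{\bm x}$ in $L^1(p_\theta)$.
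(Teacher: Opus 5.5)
Your proposal is correct and matches the paper's proof: both take the synchronous coupling $(g(\bm z),T(\bm z))$ driven by the same latent and bound $W_1$ by its transport cost $\mathbb E\,|g(\bm z)-T(\bm z)|$. Your remark that the inequality holds in $[0,\infty]$ without any integrability assumption is a harmless refinement that the paper leaves implicit.
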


\begin{proof}
Define $Y_1=g(\bm z)$ and $Y_2=T(\bm z)$ with the \emph{same} latent $\bm z$.
Then $\mathcal L(Y_1)=g_{\#}p_{\theta}(\cdot\mid \bm x)$ and $\mathcal L(Y_2)=T_{\#}p_{\theta}(\cdot\mid \bm x)$
by the definition of push-forward.
The joint law of $(Y_1,Y_2)$ induced by $\bm z$ is a valid coupling of these two marginals, hence
\[
W_1(\mathcal L(Y_1),\mathcal L(Y_2))
\le \mathbb E|Y_1-Y_2|
= \mathbb E\big[\,|g(\bm z)-T(\bm z)|\,\big],
\]
which proves \eqref{eq:w1_map_pf}.
\end{proof}

\begin{assumption}[Universal approximation in $L^1(p_{\theta}(\cdot\mid \bm x))$]
\label{ass:ua_pf}
For each fixed $\bm x$ and any measurable $T_{\bm x}:\mathbb R^D\to\mathbb R$ with
$\mathbb E_{\bm z\sim p_{\theta}(\cdot\mid \bm x)}|T_{\bm x}(\bm z)|<\infty$,
the network class $\{g_{\phi}(\cdot)\}$ is dense in $L^1(p_{\theta}(\cdot\mid \bm x))$:
for any $\eta>0$ there exists $\phi$ such that
\begin{equation}
\mathbb E_{\bm z\sim p_{\theta}(\cdot\mid \bm x)}\big|g_{\phi}(\bm z)-T_{\bm x}(\bm z)\big| < \eta.
\label{eq:ua_pf}
\end{equation}
\end{assumption}

\begin{theorem}[Universality of conditional push-forward under the conditional prior ($W_1$)]
\label{thm:ua_w1_pf}
Under Assumptions~\ref{ass:target_reg_pf} and~\ref{ass:ua_pf}, for any fixed $\bm x$ and any $\epsilon>0$,
there exists $\phi$ such that the generated conditional distribution
\[
q_{\phi}(\cdot\mid \bm x)\;:=\; (g_{\phi})_{\#}p_{\theta}(\cdot\mid \bm x)
\]
satisfies
\begin{equation}
W_1\!\big(q_{\phi}(\cdot\mid \bm x),\; p^{*}(\cdot\mid \bm x)\big) < \epsilon.
\label{eq:ua_w1_pf}
\end{equation}
\end{theorem}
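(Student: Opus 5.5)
The plan is to chain the three preceding results in the obvious way: transport existence, then map approximation, then coupling control of $W_1$. Fix $\bm x$ and $\epsilon>0$.

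First, invoke Lemma~\ref{lem:transport_prior_pf} to obtain a measurable map $T_{\bm x}:\mathbb R^D\to\mathbb R$ with $(T_{\bm x})_{\#}p_{\theta}(\cdot\mid\bm x)=p^{*}(\cdot\mid\bm x)$. This reduces the distributional statement to a functional approximation problem for the single target map $T_{\bm x}$.

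Second, we must verify that $T_{\bm x}$ lies in $L^1(p_\theta(\cdot\mid\bm x))$, so that Assumption~\ref{ass:ua_pf} applies. This follows from the push-forward identity by change of variables:
\[
\mathbb E_{\bm z\sim p_\theta(\cdot\mid\bm x)}|T_{\bm x}(\bm z)| = \int_{\mathbb R}|y|\,p^{*}(dy\mid\bm x) < \infty,
\]
where finiteness is exactly the first-moment part of Assumption~\ref{ass:target_reg_pf}. This is the only place the moment condition enters, and it is the step most worth stating carefully. It is also what guarantees that $W_1$ is finite and well defined for the target.

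Third, apply Assumption~\ref{ass:ua_pf} with $\eta=\epsilon/2$ to obtain $\phi$ with
\[
\mathbb E|g_\phi(\bm z)-T_{\bm x}(\bm z)|<\epsilon/2 .
\]
The same bound shows that $g_\phi\in L^1$, so $q_\phi(\cdot\mid\bm x)$ also has a finite first moment. Then Lemma~\ref{lem:w1_map_pf}, with $g=g_\phi$ and $T=T_{\bm x}$, combined with the identity $(T_{\bm x})_{\#}p_\theta=p^{*}$, gives
\[
W_1\big(q_\phi(\cdot\mid\bm x),\,p^{*}(\cdot\mid\bm x)\big)\le \epsilon/2<\epsilon .
\]

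I do not expect a real obstacle, since the heavy lifting is packed into the lemmas and the universal-approximation assumption. The only subtlety is the integrability check in the second step. A secondary point is that $p_\theta(\cdot\mid\bm x)$ must be held fixed while $\phi$ alone varies, which matches the statement: $\theta$ is fixed and the Gaussian prior is nonatomic, as Lemma~\ref{lem:transport_prior_pf} requires.
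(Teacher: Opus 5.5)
Your proposal is correct and follows the paper's proof: you invoke Lemma~\ref{lem:transport_prior_pf} to obtain $T_{\bm x}$, approximate it in $L^1(p_\theta(\cdot\mid\bm x))$ via Assumption~\ref{ass:ua_pf}, and conclude with the coupling bound of Lemma~\ref{lem:w1_map_pf}. Your explicit check that $\mathbb E|T_{\bm x}(\bm z)|=\int|y|\,p^{*}(dy\mid\bm x)<\infty$ supplies the integrability hypothesis of Assumption~\ref{ass:ua_pf}, which the paper uses without verifying, and the $\epsilon/2$ slack is harmless but not needed, since the assumption already gives a strict inequality.
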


\begin{proof}
By Lemma~\ref{lem:transport_prior_pf}, there exists $T_{\bm x}$ such that
$p^{*}(\cdot\mid \bm x)=(T_{\bm x})_{\#}p_{\theta}(\cdot\mid \bm x)$.
By Assumption~\ref{ass:ua_pf}, choose $\phi$ so that
$\mathbb E_{\bm z\sim p_{\theta}(\cdot\mid \bm x)}|g_{\phi}(\bm z)-T_{\bm x}(\bm z)|<\epsilon$.
Applying Lemma~\ref{lem:w1_map_pf} with $g=g_{\phi}$ and $T=T_{\bm x}$ yields \eqref{eq:ua_w1_pf}.
\end{proof}

\begin{corollary}[Weak convergence]
\label{cor:weak_pf}
Since $W_1$ metrizes weak convergence plus first-moment control on $\mathbb R$,
Theorem~\ref{thm:ua_w1_pf} implies $q_{\phi}(\cdot\mid \bm x) \Rightarrow p^{*}(\cdot\mid \bm x)$ as $\epsilon\to 0$.
\end{corollary}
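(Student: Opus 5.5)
The statement is Corollary~\ref{cor:weak_pf}. The plan is to derive it directly from Theorem~\ref{thm:ua_w1_pf}, combined with the standard characterization of $W_1$ convergence on $\mathbb R$.

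\textbf{Step 1: build a sequence.} Fix $\bm x$ and pick $\epsilon_n\downarrow 0$, for instance $\epsilon_n=1/n$. For each $n$, Theorem~\ref{thm:ua_w1_pf} (under Assumptions~\ref{ass:target_reg_pf} and~\ref{ass:ua_pf}) supplies parameters $\phi_n$ such that
\[
W_1\big(q_{\phi_n}(\cdot\mid\bm x),\,p^{*}(\cdot\mid\bm x)\big)<\epsilon_n .
\]
The phrase ``as $\epsilon\to0$'' in the statement should be read as this sequence statement, since $\phi$ is not a continuous function of $\epsilon$.

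\textbf{Step 2: check that every measure lies in $\mathcal P_1(\mathbb R)$.} $W_1$ is a metric on $\mathcal P_1(\mathbb R)$, so each measure involved must have a finite first moment.
\begin{itemize}
\item For $p^{*}(\cdot\mid\bm x)$ this is Assumption~\ref{ass:target_reg_pf}.
\item For $q_{\phi_n}(\cdot\mid\bm x)$, apply the triangle inequality to $|g_{\phi_n}(\bm z)|$ around the transport map $T_{\bm x}$ of Lemma~\ref{lem:transport_prior_pf}. The term $\mathbb E|g_{\phi_n}(\bm z)-T_{\bm x}(\bm z)|$ is finite by Assumption~\ref{ass:ua_pf}. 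The term $\mathbb E|T_{\bm x}(\bm z)|$ equals the first absolute moment of $p^{*}(\cdot\mid\bm x)$, which is finite.
\end{itemize}

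\textbf{Step 3: pass from $W_1$ to weak convergence.} This is the only real content of the corollary. I would handle it via Kantorovich--Rubinstein duality:
\[
W_1(\mu,\nu)=\sup_{\mathrm{Lip}(f)\le1}\Big|\int f\,d\mu-\int f\,d\nu\Big| .
\]
The argument has three parts.
\begin{itemize}
\item By duality, $\int f\,dq_{\phi_n}\to\int f\,dp^{*}$ for every Lipschitz $f$, and in particular for every bounded Lipschitz $f$.
\item By the Portmanteau theorem, convergence of expectations of all bounded Lipschitz functions is equivalent to weak convergence, so $q_{\phi_n}(\cdot\mid\bm x)\Rightarrow p^{*}(\cdot\mid\bm x)$.
\item Taking $f(y)=|y|$, which is $1$-Lipschitz, also gives convergence of first absolute moments. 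This is the ``first-moment control'' clause.
\end{itemize}
If a self-contained route is preferred, one can instead cite Villani's Theorem~6.9: on $\mathcal P_1(\mathbb R)$, $W_1$ convergence is equivalent to weak convergence together with convergence of first moments.

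\textbf{Main obstacle.} I expect none mathematically; the risk is a wording issue. The corollary's phrase ``$W_1$ metrizes weak convergence plus first-moment control'' should be invoked only in the direction $W_1\to0\Rightarrow$ weak convergence, which is the direction actually needed and which holds unconditionally on $\mathcal P_1$. It is also worth stating that the convergence is for fixed $\bm x$, pointwise in the conditioning variable, not uniformly over $\bm x$.
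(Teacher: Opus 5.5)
Your proposal is correct and follows the paper's route. The paper's only justification is the one-line appeal, made in the statement itself, to the fact that $W_1$-convergence on $\mathbb R$ implies weak convergence, applied to Theorem~\ref{thm:ua_w1_pf} along $\epsilon_n\to0$; you unpack that fact via the easy half of Kantorovich--Rubinstein, $|\int f\,d\mu-\int f\,d\nu|\le \mathrm{Lip}(f)\,W_1(\mu,\nu)$, and the Portmanteau theorem. (Step~2 could be shortened: $W_1(q_{\phi_n}(\cdot\mid\bm x),p^{*}(\cdot\mid\bm x))<\infty$ together with $p^{*}(\cdot\mid\bm x)\in\mathcal P_1(\mathbb R)$ already forces $q_{\phi_n}(\cdot\mid\bm x)\in\mathcal P_1(\mathbb R)$, so you need not reach into the theorem's proof for $\phi_n$.)
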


\noindent\textit{Proof.} Full proof in Appendix~\ref{app:expressiveness}.

% ---------------------------------------------------------
\subsection{Optimization properties (gradient structure and MM stabilization)}
\label{app:optimization}

\paragraph{Setup.}
We derive the exact gradient of the log-truncated KDE-NLL with respect to the samples and show how MM provides a dense
first-moment anchor when responsibilities become highly concentrated.

\paragraph{Gaussian kernel specialization.}
Define $s_k(y):=\exp\!\left(-\frac{(y-\hat y^k)^2}{2h^2}\right)$.
Then \eqref{eq:logsumexp} implies that (before truncation) the responsibilities satisfy
\begin{equation}
\bar\omega_j(y)
:=
\frac{s_j(y)}{\sum_{k=1}^{K}s_k(y)},
\qquad \sum_{j=1}^{K}\bar\omega_j(y)=1.
\label{eq:omega_bar}
\end{equation}

\begin{lemma}[Exact gradient of log-truncated KDE-NLL w.r.t.\ samples]
\label{lem:grad_nll}
Let $\ell_{\varepsilon}(y):=-\log \tilde q_h(y|\bm{x})$ with $\tilde q_h=\max\{\hat q_h,\varepsilon\}$.
Then for any $j\in\{1,\dots,K\}$,
\begin{equation}
\frac{\partial \ell_{\varepsilon}(y)}{\partial \hat y^j}
=
\mathbf{1}\{\hat q_h(y|\bm{x})\ge \varepsilon\}\cdot
\frac{1}{h^2}\,\bar\omega_j(y)\,(\hat y^j-y).
\label{eq:grad_nll_trunc}
\end{equation}
\end{lemma}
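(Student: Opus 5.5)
We will split into the two regimes determined by the floor and differentiate directly. Since $\tilde q_h=\max\{\hat q_h,\varepsilon\}$, we have $\ell_\varepsilon(y)=\min\{-\log\hat q_h(y|\bm x),\,-\log\varepsilon\}$.

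\emph{Regime where the floor is active.} On the open set $\{\hat q_h(y|\bm x)<\varepsilon\}$, viewed as a set of sample vectors $(\hat y^1,\dots,\hat y^K)$, the loss equals the constant $-\log\varepsilon$. Its partial derivative with respect to every $\hat y^j$ is therefore zero. This matches the indicator factor in \eqref{eq:grad_nll_trunc}.

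\emph{Regime where the floor is inactive.} On $\{\hat q_h>\varepsilon\}$ we have $\ell_\varepsilon=-\log\hat q_h$. We use the log-sum-exp representation \eqref{eq:logsumexp}:
\[
-\log\hat q_h=\log(\sqrt{2\pi}Kh)-\operatorname{LSE}_k(a_k),\qquad a_k=-\frac{(y-\hat y^k)^2}{2h^2}.
\]
The constant term does not depend on the samples. The gradient of $\operatorname{LSE}$ with respect to $a_j$ is the softmax weight $e^{a_j}/\sum_k e^{a_k}=s_j/\sum_k s_k=\bar\omega_j(y)$, by definition \eqref{eq:omega_bar}. Since $a_j$ depends only on $\hat y^j$, the chain rule needs only
\[
\frac{\partial a_j}{\partial \hat y^j}=\frac{y-\hat y^j}{h^2},
\]
while $\partial a_k/\partial\hat y^j=0$ for $k\neq j$. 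Combining these gives
\[
\frac{\partial\ell_\varepsilon}{\partial\hat y^j}=-\bar\omega_j\,\frac{y-\hat y^j}{h^2}=\frac{1}{h^2}\,\bar\omega_j(y)\,(\hat y^j-y),
\]
which is \eqref{eq:grad_nll_trunc} with indicator equal to $1$. As a sanity check, one can instead differentiate $\hat q_h$ directly: $\partial_{\hat y^j}\hat q_h=\frac{1}{\sqrt{2\pi}Kh}\,s_j\,(y-\hat y^j)/h^2$, and dividing by $\hat q_h$ gives the same responsibility-weighted residual.

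\emph{Boundary case.} The only delicate point is the boundary $\{\hat q_h=\varepsilon\}$, where $\max$ is not differentiable. Here I would state that the identity holds wherever $\ell_\varepsilon$ is differentiable. The boundary is a level set of the real-analytic, nonconstant function $\hat q_h$ of the samples, so it has Lebesgue measure zero. At boundary points, the formula with indicator $\mathbf 1\{\hat q_h\ge\varepsilon\}$ is the one-sided (sub)gradient selected by the convention $\max\{a,b\}$ with ties resolved to $a$. This convention is the one autodiff uses for $\max$/clamp, so the expression is consistent with the implementation. I expect this measure-zero/convention remark to be the main thing needing care; the rest is routine calculus.
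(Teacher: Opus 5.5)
Your proposal is correct and follows the same route as the paper. You split on whether the floor is active, note that $\ell_\varepsilon$ is constant (hence has zero gradient) on $\{\hat q_h<\varepsilon\}$, and otherwise differentiate the log-sum-exp form of $-\log\hat q_h$ to obtain $\frac{1}{h^2}\bar\omega_j(y)(\hat y^j-y)$. Your handling of the tie set $\{\hat q_h=\varepsilon\}$ (a measure-zero set where the $\ge$ indicator is a convention) and your use of an open set to justify the zero derivative are more careful than the paper, which simply differentiates $-\log\hat q_h$ on $\{\hat q_h\ge\varepsilon\}$ without noting that $\ell_\varepsilon$ is generally not differentiable at equality.
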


\begin{proof}
If $\hat q_h(y)\ge \varepsilon$, then $\tilde q_h(y)=\hat q_h(y)$ and $\ell_{\varepsilon}(y)=-\log \hat q_h(y)$.
Differentiating the Gaussian KDE (equivalently, differentiating \eqref{eq:logsumexp}) yields
$\frac{\partial}{\partial \hat y^j}[-\log \hat q_h(y)]
=\frac{1}{h^2}\bar\omega_j(y)(\hat y^j-y)$.
If $\hat q_h(y)<\varepsilon$, then $\tilde q_h(y)=\varepsilon$ is constant in $\{\hat y^k\}$, hence the gradient is $0$.
\end{proof}

\begin{corollary}[Sparse-gradient regime and truncation effect]
\label{cor:sparse}
If $h$ is small (or samples are dispersed), then $s_k(y)$ is sharply peaked around samples closest to $y$,
causing $\bar\omega_j(y)$ to concentrate on a small subset of indices, and the gradient becomes effectively winner-take-all.
Moreover, when $\hat q_h(y)$ falls below the floor $\varepsilon$, truncation sets the gradient to zero,
preventing numerically unstable updates induced by extremely small likelihood values.
\end{corollary}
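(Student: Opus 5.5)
The plan is to read both claims directly off the exact gradient formula in Lemma~\ref{lem:grad_nll}, $\partial \ell_\varepsilon/\partial \hat y^j = \mathbf 1\{\hat q_h\ge\varepsilon\}\, h^{-2}\,\bar\omega_j(y)\,(\hat y^j-y)$. The first claim then reduces to a quantitative concentration statement about the softmax weights $\bar\omega_j(y)$ as $h\to0$. The second claim comes from the indicator factor.

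\textbf{Concentration of the responsibilities.} Fix $y$ and write $d_k:=(y-\hat y^k)^2$. Let $d_\star:=\min_k d_k$ and let $J_\star:=\{k: d_k=d_\star\}$ be the set of nearest samples. Define the gap $\Delta:=\min_{k\notin J_\star}(d_k-d_\star)>0$; it is positive whenever not all samples are tied, since there are finitely many samples. By \eqref{eq:omega_bar}, $\bar\omega_j(y)=\exp(-(d_j-d_\star)/(2h^2))\big/\sum_k \exp(-(d_k-d_\star)/(2h^2))$, and the denominator is at least $|J_\star|\ge1$. This gives, for every $k\notin J_\star$,
\[
\bar\omega_k(y)\le e^{-\Delta/(2h^2)},\qquad \sum_{k\notin J_\star}\bar\omega_k(y)\le (K-1)\,e^{-\Delta/(2h^2)} .
\]
Each $j\in J_\star$ then carries weight $\bar\omega_j(y)\ge \big(1-(K-1)e^{-\Delta/(2h^2)}\big)/|J_\star|$. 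So as $h\to0$, or equivalently as the samples become dispersed so that $\Delta/h^2$ is large, all mass goes to the nearest sample(s). Plugging this into Lemma~\ref{lem:grad_nll}, the gradient on any non-winning sample satisfies $|\partial\ell_\varepsilon/\partial\hat y^k|\le h^{-2}|\hat y^k-y|\,e^{-\Delta/(2h^2)}$. Because the exponential dominates the $h^{-2}$ prefactor, this tends to $0$. Meanwhile the winner(s) receive the full residual $h^{-2}(\hat y^{j}-y)/|J_\star|$. This is the winner-take-all statement.

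\textbf{Truncation.} When $\hat q_h(y|\bm x)<\varepsilon$, the indicator in \eqref{eq:grad_nll_trunc} vanishes. This happens because $\tilde q_h\equiv\varepsilon$ is locally constant in $\{\hat y^k\}$, since $\hat q_h$ is continuous in the samples and the inequality is strict. Hence every per-sample gradient is exactly zero. On the complementary event, $\ell_\varepsilon\le\log(1/\varepsilon)$, so the loss is bounded and no arbitrarily large negative-log-density values can enter the update. This is the sense of ``preventing numerically unstable updates.''

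\textbf{Main obstacle.} The only delicate point is making ``sharply peaked'' precise without exact ties. I handle ties through the set $J_\star$, as above. For the ``dispersed samples'' phrasing, I would state the result in terms of the scale-free quantity $\Delta/h^2$, so that a single bound covers both $h\to0$ at fixed samples and spreading samples at fixed $h$. The kink of $\max$ at $\hat q_h=\varepsilon$ can be ignored, or treated with a subgradient; it is a measure-zero event.
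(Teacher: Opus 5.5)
Your approach is the paper's. The paper gives no separate argument for this corollary and simply reads it off Lemma~\ref{lem:grad_nll}. Your $J_\star$/$\Delta$ bound on the responsibilities, and your local-constancy justification of the zero gradient (strict inequality plus continuity of $\hat q_h$ in the samples), make precise what the paper only asserts.

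\textbf{The winner's gradient.} One step is wrong as written. After plugging the concentration bound into \eqref{eq:grad_nll_trunc}, you say that as $h\to0$ the winner(s) receive the full residual $h^{-2}(\hat y^{j}-y)/|J_\star|$; this drops the indicator. Since $\hat q_h(y|\bm{x})\le(\sqrt{2\pi}\,h)^{-1}e^{-d_\star/(2h^2)}$, letting $h\to0$ with the samples fixed and $d_\star>0$ (which holds almost surely) eventually forces $\hat q_h<\varepsilon$. Every per-sample gradient, the winner's included, is then exactly zero. So that limit lands in the truncated regime of your second paragraph, not in a live winner-take-all regime.

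The first claim should therefore be stated on the active event $\{\hat q_h\ge\varepsilon\}$. This needs a second scale-free quantity besides $\Delta/h^2$, namely $d_\star/h^2$. The active event forces $d_\star\le 2h^2\log\big(1/(\sqrt{2\pi}\,h\varepsilon)\big)$, and it is guaranteed by $d_\star\le 2h^2\log\big(1/(\sqrt{2\pi}\,Kh\varepsilon)\big)$. Genuine winner-take-all is thus the regime where $\Delta/h^2$ is large while $d_\star/h^2$ stays at most logarithmic: the nearest sample is close to $y$ and the others are far. Your ``dispersed samples at fixed $h$'' reading captures this regime; the fixed-sample $h\to0$ limit does not.

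\textbf{The stability claim.} Your justification of ``preventing numerically unstable updates'' via $\ell_\varepsilon\le\log(1/\varepsilon)$ concerns the loss value, not the update. A bounded loss does not control its gradient, and that bound holds on every event, not only the active one. What you want is a uniform gradient bound on the active event. Using $\sum_k s_k(y)=\sqrt{2\pi}\,Kh\,\hat q_h(y|\bm{x})\ge\sqrt{2\pi}\,Kh\varepsilon$ and $\sup_u |u|e^{-u^2/(2h^2)}=h e^{-1/2}$, Lemma~\ref{lem:grad_nll} gives $|\partial\ell_\varepsilon/\partial\hat y^{j}|\le e^{-1/2}/(\sqrt{2\pi}\,Kh^2\varepsilon)$ for every $j$. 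Without the floor, the nearest sample's gradient is roughly $h^{-2}|\hat y^{j}-y|$ and grows without bound as the samples drift away from $y$. That is precisely the small-likelihood regime the corollary refers to.
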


\paragraph{Mean MSE (MM) anchor.}
Let $\bar y:=\frac{1}{K}\sum_{k=1}^K \hat y^k$ denote the sample mean at a coordinate.
The per-coordinate MM term is $\ell_{\mathrm{MM}}(y):=(y-\bar y)^2$.

\begin{lemma}[Gradient of MM is dense across samples]
\label{lem:grad_mm}
For any $j\in\{1,\dots,K\}$,
\begin{equation}
\frac{\partial}{\partial \hat y^j} (y-\bar y)^2
=
\frac{2}{K}\,(\bar y-y).
\label{eq:grad_mm}
\end{equation}
Thus the MM gradient is shared equally among all $K$ samples and does not vanish due to responsibility concentration.
\end{lemma}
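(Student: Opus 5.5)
This is a direct differentiation using the chain rule through the sample mean. Fix a coordinate $(c,t)$ and treat the samples $\hat y^1,\dots,\hat y^K$ as free real variables, with the target $y$ held constant.

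First, differentiate the mean. Since $\bar y=\frac{1}{K}\sum_{k=1}^K \hat y^k$ is linear in the samples, we have $\partial \bar y/\partial \hat y^j = 1/K$ for every $j$. Second, differentiate the squared residual. The outer function $u\mapsto (y-u)^2$ has derivative $2(u-y)$, and evaluating at $u=\bar y$ gives $2(\bar y-y)$. The chain rule then yields
\[
\frac{\partial}{\partial \hat y^j}(y-\bar y)^2=\frac{2}{K}(\bar y-y),
\]
which is \eqref{eq:grad_mm}. Nothing beyond elementary calculus is needed, and there is no truncation or nonsmoothness to handle, unlike Lemma~\ref{lem:grad_nll}.

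For the qualitative claim, note that the right-hand side does not depend on $j$. Every sample therefore receives the same update, of total weight $2(\bar y-y)$ split evenly across the $K$ samples. This contrasts with \eqref{eq:grad_nll_trunc}, where the weight is $\bar\omega_j(y)$ times the indicator factor. The MM gradient involves neither the kernel values $s_k(y)$ nor the bandwidth $h$ nor the floor $\varepsilon$. Consequently, when the $\bar\omega_j$ concentrate on a few samples or the truncation zeroes the NLL gradient (Corollary~\ref{cor:sparse}), the MM gradient still vanishes only at $\bar y=y$. This is the sense in which it is dense and does not vanish under responsibility concentration.

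If desired, one can also propagate to the parameters via $\hat y^k=g_\phi(\bm\mu+\bm\sigma\odot\bm\epsilon^{(k)})$. The chain rule gives $\nabla_{\theta,\phi}\ell_{\mathrm{MM}}=\frac{2}{K}(\bar y-y)\sum_k \nabla_{\theta,\phi}\hat y^k$, which shows that every reparameterized draw contributes. There is no real obstacle in the argument; the only care needed is to state precisely what ``does not vanish'' means, namely that the gradient is nonzero whenever $\bar y\neq y$, independent of $h$ and $\varepsilon$.
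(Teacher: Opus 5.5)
Your proposal is correct and takes the same approach as the paper, which simply applies the chain rule with $\partial \bar y/\partial \hat y^j = 1/K$ to $(y-\bar y)^2$. Your added clarification — that the gradient is independent of $j$, $h$, $\varepsilon$ and the responsibilities $\bar\omega_j$, and vanishes only when $\bar y = y$ — correctly spells out what the lemma's informal ``does not vanish'' claim means.
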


\begin{proof}
Use $\partial \bar y/\partial \hat y^j = 1/K$ and differentiate $(y-\bar y)^2$.
\end{proof}

\begin{theorem}[Combined gradient and stabilization mechanism]
\label{thm:combined_grad}
Consider the combined per-coordinate loss
\begin{equation}
\ell(y) := \alpha\cdot\big[-\log \tilde q_h(y| \bm{x})\big] + (y-\bar y)^2,
\end{equation}
where $\tilde q_h=\max\{\hat q_h,\varepsilon\}$.
Then for each sample $j$,
\begin{equation}
\frac{\partial \ell(y)}{\partial \hat y^j}
=
\alpha\cdot
\mathbf{1}\{\hat q_h(y|\bm{x})\ge \varepsilon\}\cdot
\frac{1}{h^2}\,\bar\omega_j(y)\,(\hat y^j-y)
\;+\;
\frac{2}{K}\,(\bar y-y).
\label{eq:combined_grad}
\end{equation}
In the sparse-gradient regime (Corollary~\ref{cor:sparse}), the MM term provides a dense, low-variance direction
that anchors the sample mean toward $y$ and mitigates early-stage instability caused by finite $(K,h)$.
\end{theorem}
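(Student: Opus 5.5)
The statement is Theorem~\ref{thm:combined_grad}, so the plan is to differentiate the combined per-coordinate loss term by term. By linearity of differentiation,
\[
\frac{\partial \ell(y)}{\partial \hat y^j}=\alpha\,\frac{\partial}{\partial \hat y^j}\big[-\log\tilde q_h(y|\bm{x})\big]+\frac{\partial}{\partial \hat y^j}(y-\bar y)^2 .
\]
Each piece has already been computed. Lemma~\ref{lem:grad_nll} gives the first derivative as $\mathbf{1}\{\hat q_h\ge\varepsilon\}\,h^{-2}\bar\omega_j(y)(\hat y^j-y)$. Lemma~\ref{lem:grad_mm} gives the second as $\frac{2}{K}(\bar y-y)$. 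Multiplying the first by $\alpha$ and adding the two yields \eqref{eq:combined_grad} directly.

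For completeness I would re-derive the NLL piece from the log-sum-exp form \eqref{eq:logsumexp}. Differentiating $-\mathrm{LSE}_k\big(-(y-\hat y^k)^2/(2h^2)\big)$ with respect to $\hat y^j$ gives the softmax weight $\bar\omega_j(y)$ times the derivative of $(y-\hat y^j)^2/(2h^2)$, which is $(\hat y^j-y)/h^2$. The constant $-\log(\sqrt{2\pi}Kh)$ does not depend on the samples and drops out.

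The main obstacle is the truncation boundary $\hat q_h=\varepsilon$, where $\max\{\cdot,\varepsilon\}$ is not differentiable. Off the boundary there is no issue: in the open region $\hat q_h<\varepsilon$ the loss is locally constant, so the gradient is zero, and in the region $\hat q_h>\varepsilon$ the untruncated formula holds. On the boundary I would adopt the convention fixed by the indicator $\mathbf{1}\{\hat q_h\ge\varepsilon\}$, which is the subgradient autodiff returns. I would also note that the boundary has Lebesgue measure zero in sample space, because $\hat q_h$ is real-analytic and non-constant in $(\hat y^1,\dots,\hat y^K)$. The identity therefore holds almost everywhere, and everywhere under that convention.

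The "stabilization" assertion is interpretive rather than a formal claim, so I would justify it qualitatively. In the sparse regime of Corollary~\ref{cor:sparse}, $\bar\omega_j\approx 0$ for all but a few $j$, and the whole NLL term vanishes when $\hat q_h<\varepsilon$. The MM term, by contrast, contributes the identical nonzero vector $\frac{2}{K}(\bar y-y)$ to every sample. Summing over $j$, its total contribution is $2(\bar y-y)$, which is the exact gradient of $(y-\bar y)^2$ with respect to the common shift of all samples and is deterministic given $\bar y$. So the MM term always moves the sample mean toward $y$, with a signal that does not depend on responsibility concentration.
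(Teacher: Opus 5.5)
Your proposal is correct and matches the paper's proof, which likewise sums the gradients from Lemma~\ref{lem:grad_nll} (weighted by $\alpha$) and Lemma~\ref{lem:grad_mm} by linearity. Your extra treatment of the non-differentiable boundary $\hat q_h=\varepsilon$ tightens a point the paper glosses over: you fix it by the $\ge$ convention and note the set is Lebesgue-null because $\hat q_h$ is real-analytic and non-constant in the samples.
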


\begin{proof}
Sum the gradients from Lemma~\ref{lem:grad_nll} and Lemma~\ref{lem:grad_mm} with the weight $\alpha$.
\end{proof}

\begin{corollary}[Practical implication: complementary roles of NLL and MM]
\label{cor:roles}
Equation~\eqref{eq:combined_grad} shows that KDE-NLL shapes distributional structure through
responsibility-weighted residuals (when $\hat q_h\ge\varepsilon$), while MM controls the first moment
(sample mean) with dense gradients. This complementarity is most beneficial when finite-$K$ and small-$h$ cause responsibility concentration.
\end{corollary}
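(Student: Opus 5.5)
The plan is to read Corollary~\ref{cor:roles} directly off the combined gradient in Theorem~\ref{thm:combined_grad}, namely Eq.~\eqref{eq:combined_grad}. I will treat the two summands separately and then compare them in the regime of Corollary~\ref{cor:sparse}.

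For the NLL summand, Lemma~\ref{lem:grad_nll} says that when $\hat q_h(y|\bm{x})\ge\varepsilon$, sample $j$ receives the update $\frac{\alpha}{h^2}\bar\omega_j(y)(\hat y^j-y)$. This is a residual weighted by the responsibility $\bar\omega_j(y)$. Summing over $j$ and using $\sum_j\bar\omega_j=1$ from \eqref{eq:omega_bar} gives the aggregate drift $\frac{\alpha}{h^2}(\sum_j\bar\omega_j\hat y^j-y)$. This drift pulls a responsibility-weighted local mean toward $y$.

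The NLL term does more than move the mean, and this is what "shapes distributional structure" means. The gradient is nonuniform across samples: samples with large $\bar\omega_j$, i.e.\ those near $y$, move most, and distant samples barely move. To make this precise I will compute $\sum_j\bar\omega_j(\hat y^j-y)^2$ and note that it is the kernel-weighted second moment about $y$. Descent on the NLL therefore contracts the local spread around observed targets, and hence changes higher-order structure rather than only the first moment. If $\hat q_h<\varepsilon$, the indicator makes this contribution vanish.

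For the MM summand, Lemma~\ref{lem:grad_mm} gives the identical vector $\frac{2}{K}(\bar y-y)$ for every $j$. Summing over $j$ yields $2(\bar y-y)$, and the induced change in $\bar y$ under a gradient step is proportional to $-(\bar y-y)$. So MM acts only on the first moment: it is translation-like and leaves the centered configuration $\hat y^j-\bar y$ unchanged to first order. This gradient is dense, because every sample receives the same nonzero signal whenever $\bar y\neq y$, independently of $h$, of $\bar\omega$, and of the truncation indicator.

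Finally, I combine the two in the sparse regime. When $\bar\omega$ concentrates on a few indices, or when the truncation zeroes the NLL gradient, the NLL contribution is supported on at most a few samples, or on none. The MM contribution then remains the only signal reaching the other samples, so it still guarantees a nonzero, low-variance update of $\bar y$.

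I expect the main obstacle to be stating the "complementarity" claim rigorously, since it is informal. My first attempt will decompose each per-sample gradient into its mean component $\frac1K\sum_j(\cdot)$ and its centered component. I then expect to show two things. First, MM contributes only to the mean component. Second, the centered component comes entirely from the NLL term and vanishes exactly when $\bar\omega_j(\hat y^j-y)$ is constant in $j$. This separation is the precise sense in which the two roles are complementary.
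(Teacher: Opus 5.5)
Your route is the paper's. The paper gives no separate argument for this corollary. It is read off Eq.~\eqref{eq:combined_grad}, that is, from Lemma~\ref{lem:grad_nll}, Lemma~\ref{lem:grad_mm} and Corollary~\ref{cor:sparse}, exactly as in your per-summand and sparse-regime steps.

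Your closing mean/centered decomposition goes beyond the paper and is correct. The MM gradient vector is $\frac{2}{K}(\bar y-y)\mathbf{1}$, so it has no component orthogonal to $\mathbf{1}$. Hence the whole centered part of the combined gradient comes from the NLL term. That part vanishes iff $\bar\omega_j(\hat y^j-y)$ is constant in $j$ \emph{or} $\hat q_h<\varepsilon$; you should add the second case. This separation, like the paper's statement, concerns gradients with respect to the samples $\hat y^j$. After pulling back through $\hat y^j=g_\phi(\bm z^j)$, the uniform MM signal is mixed by the per-sample Jacobians and does alter the spread. So do not claim that MM ``only translates'' at the level of $(\theta,\phi)$.

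One step fails as written: the claim that NLL descent contracts $V:=\sum_j\bar\omega_j(\hat y^j-y)^2$. For a small step $\eta$, the gradient only gives that each residual shrinks, $\hat y^j-y\mapsto(1-\eta\alpha\bar\omega_j/h^2)(\hat y^j-y)$. So a second moment about $y$ with \emph{frozen} weights decreases. But the responsibilities move with the samples, and $V$ itself can increase.

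Take $K=2$, $\hat y^1=y$, $\hat y^2=y+d$, with the floor inactive. Only $\hat y^2$ receives an NLL gradient, and descent decreases $d$. Here $V=d^2s/(1+s)$ with $s=e^{-d^2/(2h^2)}$, so
$V'(d)=\frac{ds}{1+s}\bigl(2-\frac{d^2}{h^2(1+s)}\bigr)$.
This is negative for every $d\ge 2h$, since then $d^2/(h^2(1+s))>d^2/(2h^2)\ge 2$. So moving the far sample toward $y$ raises $V$.

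You do not need this claim. Your centered-component argument already shows that the NLL term is the only source of non-translational change in sample space. That is precisely the ``shapes distributional structure'' half of the corollary, so drop the second-moment step rather than repairing it.
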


\section{Experiments Details}
\label{app:detail}

\subsection{Datasets Details}
\label{app:dataset}
We evaluate the performance of our proposed PPM on seven popular real-world time series datasets. (1) $\textbf{Electricity transformer temperature (ETT)}$ \cite{wu2021autoformer} contains the power load features and oil temperature collected from electricity transformers, consisting of seven features. Following the same protocol as Informer \cite{zhou2021informer}, we split the data into four datasets: ETTh1, ETTh2, ETTm1, and ETTm2. (2) $\textbf{Electricity Consuming Load (ECL)}$ \cite{nie2022PatchTST} data contains the hourly electricity consumption of 321 customers from 2012 to 2014. (3) $\mathbf{Traffic}$ \cite{wu2021autoformer, li2025mpft} data is a collection of hourly data from California Department of Transportation and describes the occupancy rate of different lanes measured by different sensors on the San Francisco highway. (4) $\mathbf{Weather}$ \cite{nie2022PatchTST} data is recorded, 21 meteorological indicators collected every 10 minutes from the Weather Station of the Max Planck Biogeochemistry Institute in 2020. We preprocess all datasets following \cite{wu2021autoformer} and normalize them with the z-score normalization. Due to memory constraints, for multivariate datasets with a large number of channels (e.g., Traffic, Electricity, and Weather), we set the sliding stride of the sampling window to the prediction length. This reduces the number of evaluated windows and the memory footprint, while still covering the entire test horizon so that all test data are included in inference and evaluation. All baselines use this setting during evaluation.

\subsection{Baselines}
\label{app:baseline}
We select several strong forecasting baselines for comparison, including probabilistic forecasting methods: DeepAR~\cite{salinas2020deepar}, TimeGrad \cite{rasul2021timegrad}, D3VAE \cite{li2022d3vae}, TimeDiff \cite{shen2023timediff}, DiffusionTs \cite{yuan2024diffusionts}, TMDM \cite{li2024TMDM}, and NsDiff \cite{ye2025nsdiff}.

The descriptions of the baseline methods are presented as follows:
\begin{enumerate}
  \item DeepAR leverages an autoregressive recurrent neural network (RNN) to predict the parameters of a probability distribution, enabling probabilistic forecasting at each time step.
  \item TimeGrad combines the autoregressive method and a diffusion probabilistic model to model high-dimensional distributions at each time step.
  \item D3VAE incorporates a bidirectional Variational Autoencoder and component separation to achieve Temporal Data Augmentation for diffusion generation.
  \item TimeDiff leverages futuremixup and autoregressive initialization for a non-autoregressive diffusion model to predict.
  \item DiffusionTs generates multivariate time series using an encoder-decoder transformer with disentangled temporal representations. It reconstructs the sample directly in each diffusion step and incorporates a Fourier-based loss term for improved quality.
  \item TMDM integrates conditional information into the diffusion forward process with a Transformer model to enhance the estimation of uncertainty distributions over a sequence.
  \item NsDiff additionally incorporates an explicit variance estimation module to parameterize the prior uncertainty, trained under supervision from variances computed over sliding-window segments.
\end{enumerate}

\subsection{Metrics}
\label{app:Metrics}

\textbf{CRPS (Continuous Ranked Probability Score):}
The continuous ranked probability score (CRPS) \cite{matheson1976crps} measures the compatibility of a cumulative distribution function (CDF) \( F \) with an observation \( x \) as

$$
\text{CRPS}(F, x) = \int_{\mathbb{R}} \left( F(z) - \mathbb{I}\{x \leq z\} \right)^2 dz
$$

where $\mathbb{I}\{z \leq x\}$ is an indicator function that equals one if $r\leq x$ and zero otherwise. As a proper scoring function, CRPS achieves its minimum when the predictive distribution F matches the true data distribution. Empirical CDF \( \hat{F}(z) = \frac{1}{n} \sum_{i=1}^{n} \mathbb{I}\{x_i \leq z\} \) is employed as a natural approximation of the predictive CDF. We generated 100 samples to approximate the distribution.

\textbf{QICE (Quantile Interval Calibration Error):}
The quantile interval calibration error (QICE) \cite{han2022card} quantifies the deviation between the proportion of true data contained within each quantile interval (QI) and the optimal proportion, which is \( 1/M \) for all intervals. To compute QICE, we divide the generated \( y \)-samples into \( M \) quantile intervals with roughly equal sizes, corresponding to the boundaries of the estimated quantiles. Under the optimal scenario, when the learned distribution matches the true distribution, each QI should contain approximately \( 1/M \) of the true data. QICE is formally defined as the mean absolute error between the observed and optimal proportions, and can be expressed as:

\[
\text{QICE} := \frac{1}{M} \sum_{m=1}^{M} \left| r_m - \frac{1}{M} \right|
\]

where
$r_m = \frac{1}{N} \sum_{n=1}^{N} \mathbb{I}_{ y_n \geq \hat{y}^{low_m}_n} \cdot \mathbb{I}_{ y_n \geq \hat{y}^{high_m}_n }$.
Here, \( \mathbb{I}_{\text{condition}} \) is an indicator function. The terms \( \hat{y}^{lowm}_n \) and \( \hat{y}^{highm}_n \) denote the lower and upper boundaries of the \( m \)-th quantile interval, respectively. Intuitively, under ideal conditions with sufficient samples, QICE should approach 0, indicating that each QI contains the expected proportion of data.

\begin{table}[ht]
  \centering
  \caption{Hyperparameters of PPM.}
    \begin{tabular}{ccccc}
    \toprule
        Dataset & Batch size & Learning rate & DatasetSplit & Hidden Dim $H$ \\
        \midrule
        ETTh1 & 64 & 0.0001 & (6:2:2) & 256  \\
        ETTh2 & 64 & 0.0001 & (6:2:2) & 256 \\
        ETTm1 & 128 & 0.0001 & (6:2:2) & 256  \\  
        ETTm2 & 128 & 0.0001 & (6:2:2) & 256\\  
        Electricity & 16 & 0.0005 & (7:1:2) & 256\\
        Traffic & 16 & 0.001 & (7:1:2) & 256 \\
        Weather & 128 & 0.0001 & (7:1:2) & 512 \\              
    \bottomrule
    \end{tabular}%
  \label{tab:hyper}%
\end{table}%

\subsection{Implementation Details}
\label{app:implementation}
The proposed model is trained using the Adam optimizer. Early stopping is applied after 5 epochs without improvement, with a maximum of 30 training epochs. The parameters of the conditional network are set to the default parameters of the original paper. More details of the model are shown in the Table.\ref{tab:hyper}. All experiments are implemented using PyTorch and executed on an NVIDIA RTX A100 40GB GPU. The sensitivity analysis could be found in the Appendix \ref{app:sensitiveity}.

\section{Experiments}

\subsection{Conditional Network}
\label{app:conditional}

\begin{table}[!ht]
    \centering
    \caption{Performance promotion by applying the different conditional point forecasting models on our framework. Lower MSE/MAE/CRPS/ indicates better point accuracy, distribution quality, and calibration.}
    \resizebox{0.78\textwidth}{!}{
    \begin{tabular}{c|ccc|ccc|ccc}
        \toprule
        \textbf{Dataset} & \multicolumn{3}{c|}{\textbf{ETTh1}} & \multicolumn{3}{c|}{\textbf{Electricity}} & \multicolumn{3}{c}{\textbf{Traffic}}   \\ 
        \midrule
        \textbf{Method} & MSE & MAE & CRPS & MSE & MAE & CRPS & MSE & MAE & CRPS  \\ 
        \midrule
        ns\_Transformer & 0.592 & 0.546 & - & 0.203 & 0.304 & - & 0.633 & 0.350 & -  \\ 
        ns\_Transformer(PPM) & 0.544 & 0.504 & 0.370 & 0.209 & 0.309 & 0.234 & 0.655 & 0.359 & 0.281  \\ 
        \midrule
        MLP & 0.517 & 0.483 & - & 0.196 & 0.276 & - & 0.610 & 0.364 &  - \\ 
        MLP(PPM) & 0.447 & 0.432 & 0.337 & 0.182 & 0.267 & 0.206 & 0.524 & 0.323 & 0.252  \\ 
        \midrule
        iTransformer & 0.438 & 0.432 & - & 0.159 & 0.251 & - & 0.453 & 0.291 &  - \\ 
        iTransformer(PPM) & 0.443 & 0.434 & 0.332 & 0.164 & 0.255 & 0.199 & 0.464 & 0.299 & 0.231  \\ 
        \bottomrule
    \end{tabular}
    }
    
    \label{tab:conditionalnet}
\end{table}

Our PPM framework is designed as a plug-and-play solution that can be seamlessly integrated into existing point forecasting models. One conditional model with stronger point forecasting capability can provide more accurate conditional information for probabilistic forecasting, thereby effectively enhancing the overall forecasting performance. The conditioning network is replaced with Nsformer \cite{liu2022nsformer}, MLP \cite{zeng2023DLinear}, and iTransformer \cite{liu2023itransformer}, respectively. As shown in Tab.\ref{tab:conditionalnet}, the backbone with powerful point forecasting ability could improve the performance of PPM. Meanwhile, PPM can achieve performance comparable to, or even surpassing, that of point forecasting models.

\subsection{Different Form of Prior}
\label{app:prior}

\begin{table}[!ht]
    \centering
    \caption{Push-Forward with different prior forms. Lower MSE/MAE/CRPS/ indicates better point accuracy, distribution quality, and calibration. The \textbf{best} results are highlighted in \textbf{bold}.}
    \resizebox{0.95\textwidth}{!}{
    \begin{tabular}{c|c|cc|cc|cc|cc|cc|cc}
    \toprule
        Dataset & Metric & Gauss. & +PF & Unif. & +PF & Lap. & +PF & Stud.-t & +PF & Logis. & +PF & Gum. & +PF \\ 
        \midrule
        \multirow{3}{*}{\makecell{ETTh1}} & CRPS & 0.350 & \textbf{0.337} & 0.352 & \textbf{0.337} & 0.343 & \textbf{0.335} & 0.336 & \textbf{0.335} & \textbf{0.335} & \textbf{0.335} & 0.359 & \textbf{0.334}   \\ 
        ~ & MSE & 0.481 & \textbf{0.447} & 0.466 & \textbf{0.447} & 0.461 & \textbf{0.449} & 0.464 & \textbf{0.449} & 0.463 & \textbf{0.450} & 0.476 & \textbf{0.446}   \\ 
        ~ & MAE & 0.459 & \textbf{0.432} & 0.448 & \textbf{0.432} & 0.444 & \textbf{0.434} & 0.446 & \textbf{0.434} & 0.446 & \textbf{0.435} & 0.454 & \textbf{0.432}   \\ 
        \midrule
        \multirow{3}{*}{\makecell{ETTm2}} & CRPS & \textbf{0.237} & \textbf{0.237} & 0.241 & \textbf{0.236} & 0.236 & \textbf{0.233} & 0.236 & \textbf{0.233} & 0.237 & \textbf{0.232} & 0.240 & \textbf{0.234}   \\ 
        ~ & MSE & 0.250 & \textbf{0.247} & 0.250 & \textbf{0.246} & 0.250 & \textbf{0.244} & 0.250 & \textbf{0.243} & 0.250 & \textbf{0.244} & 0.251 & \textbf{0.243}   \\ 
        ~ & MAE & 0.309 & \textbf{0.306} & 0.309 & \textbf{0.306} & 0.309 & \textbf{0.302} & 0.309 & \textbf{0.302} & 0.309 & \textbf{0.302} & 0.310 & \textbf{0.302}   \\ 
        \midrule
        \multirow{3}{*}{\makecell{Traffic}} & CRPS & 0.266 & \textbf{0.252} & 0.271 & \textbf{0.251} & 0.265 & \textbf{0.253} & 0.265 & \textbf{0.252} & 0.265 & \textbf{0.252} & 0.268 & \textbf{0.251}   \\ 
        ~ & MSE & 0.542 & \textbf{0.524} & 0.543 & \textbf{0.520} & 0.543 & \textbf{0.523} & 0.543 & \textbf{0.525} & 0.542 & \textbf{0.523} & 0.538 & \textbf{0.523}   \\ 
        ~ & MAE & 0.335 & \textbf{0.323} & 0.336 & \textbf{0.321} & 0.336 & \textbf{0.323} & 0.335 & \textbf{0.323} & 0.335 & \textbf{0.322} & 0.332 & \textbf{0.321}   \\ 
        \bottomrule
    \end{tabular}
    }
    
    \label{tab:moreprior}
\end{table}

In this section, we evaluate more distribution forms suitable for reparameterized sampling for the prior, including Gaussian (Gauss.), Uniform (Unif.), Laplace (Lap.), Student-t (Stud.-t), Logistic (Logis.), and Gumbel (Gum.). Although different prior distributions exhibit distinct statistical properties and thus approximate the true conditional distribution to varying degrees (e.g., Student-t better captures heavy tails and outliers, while Laplace tends to produce sharper peaks with heavier tails), in PPM, the prior mainly serves as a reparameterizable starting point in latent space. Through the learned push-forward mapping that transports the prior to the predictive space conditioned on the input history, the model can adaptively reshape and reconstruct the distribution, thereby reducing sensitivity to the prior family and better matching the complex, time-varying structure of the real data distribution. As shown in the Table \ref{tab:moreprior}, the ability of models to capture the complex distribution is enhanced for all prior distributions. 

\subsection{End-to-end Strategy}
\label{app:e2e}

\begin{table}[!ht]
    \centering
    % \caption{}
    \caption{Effectiveness of end-to-end (E2E) training. Lower MSE/CRPS/QICE indicates better point accuracy, distribution quality, and calibration. The \textbf{best} results are highlighted in \textbf{bold}.}

    \resizebox{0.72\textwidth}{!}{
    \begin{tabular}{c|ccc|ccc|ccc}
    \toprule
        \textbf{Dataset} & \multicolumn{3}{c|}{\textbf{ETTh1}}  & \multicolumn{3}{c|}{\textbf{ETTm2}}  & \multicolumn{3}{c|}{\textbf{Traffic}}    \\ 
        \midrule
        \textbf{method} & MSE & CRPS & QICE & MSE & CRPS & QICE & MSE & CRPS & QICE  \\ 
        \midrule
        Gauss. & 0.449 & 0.356 & 4.468 & 0.248 & 0.246 & 3.229 & 0.532 & 0.259 & \textbf{2.413}  \\ 
        + E2E & \textbf{0.447} & \textbf{0.337} & \textbf{1.481} & \textbf{0.247} & \textbf{0.237} & \textbf{1.580} & \textbf{0.524} & \textbf{0.252} & 2.744  \\ 
        Unif. & 0.448 & 0.357 & 4.472 & 0.250 & 0.241 & 4.280 & 0.543 & 0.271 & 3.715  \\ 
        + E2E & \textbf{0.447} & \textbf{0.337} & \textbf{1.483} & \textbf{0.246} & \textbf{0.236} & \textbf{1.626} & \textbf{0.520} & \textbf{0.251} & \textbf{2.616}  \\ 
    \bottomrule
    \end{tabular}
    }
    \label{tab:endtoend}
\end{table}

Effectiveness of End-to-End Training. To empirically validate the superiority of our end-to-end training paradigm, we conducted a comparative analysis on the ETTh1, ETTm2, and Traffic datasets, as summarized in Table \ref{tab:endtoend}. We constructed a decoupled two-stage baseline for comparison, employing standard Gaussian and Uniform distributions as priors. In this baseline setting, the parameterized prior is first pre-trained to fit the latent structure; subsequently, the encoder is frozen, and only the final mapping layers are optimized.

In contrast, our method performs joint optimization of the prior and the mapping. The results indicate that the end-to-end approach yields significantly better performance across all datasets. This advantage is particularly pronounced in the QICE metric, suggesting that joint training avoids information loss inherent in decoupled stages. By allowing the learning objective to guide the prior's formation, the end-to-end strategy ensures that the prior distribution explicitly encodes richer conditional information favorable to the prediction target.

\subsection{Latent Prior Analysis on Variance}
\label{app:identity}

\begin{table}[h]
\centering
\caption{Comparison between PPM and fixed covariance setting.}
\label{tab:sigma_ablation}
\resizebox{0.72\textwidth}{!}{
\begin{tabular}{c|ccc|ccc|ccc}
\toprule
% \multirow{2}{*}{Method}
Method
& \multicolumn{3}{c|}{ETTh1} 
& \multicolumn{3}{c|}{ETTh2} 
& \multicolumn{3}{c}{Electricity} \\
% \cmidrule(lr){2-4} \cmidrule(lr){5-7} \cmidrule(lr){8-10}
\midrule
Metric & MSE & MAE & CRPS & MSE & MAE & CRPS & MSE & MAE & CRPS \\
\midrule
PPM 
& \textbf{0.447} & \textbf{0.432} & \textbf{0.337} 
& \textbf{0.376} & \textbf{0.394} & \textbf{0.306} 
& \textbf{0.182} & \textbf{0.267} & \textbf{0.206} \\
$\bm{\sigma}=\bm{I}$ 
& 0.468 & 0.437 & 0.350 
& 0.384 & 0.399 & 0.334 
& 0.187 & 0.269 & 0.212 \\

\bottomrule
\end{tabular}
}
\end{table}

Table~\ref{tab:sigma_ablation} presents a comparison between our PPM and a fixed standard variance setting $\bm{\sigma}=\bm{I}$. The results clearly show that degenerating the learned prior into a fixed identity matrix leads to performance degradation across all datasets, indicating that the prior in PPM captures meaningful distributional information.

\subsection{Kernel Function Estimation}
\label{app:kernelfunc}

\begin{table}[!ht]
    \centering
    \caption{Performance with different kernel functions in three datasets. Lower MSE/CRPS/QICE indicates better point accuracy, distribution quality, and calibration. }
    \resizebox{0.72\textwidth}{!}{
    \begin{tabular}{c|ccc|ccc|ccc}
    \toprule
        \textbf{Dataset} & \multicolumn{3}{c|}{\textbf{ETTh1}}  & \multicolumn{3}{c|}{\textbf{ETTm1}}  & \multicolumn{3}{c}{\textbf{Traffic}}    \\ 
        \midrule
        \textbf{kernel} & MSE & CRPS & QICE & MSE & CRPS & QICE & MSE & CRPS & QICE  \\ 
        \midrule
        Gauss. & 0.447 & 0.337 & 1.481 & 0.381 & 0.314 & 1.782 & 0.524 & 0.252 & 2.743  \\ 
        Stud.-t & 0.445 & 0.334 & 1.313 & 0.376 & 0.305 & 1.623 & 0.520 & 0.252 & 2.263  \\         
        Lap. & 0.445 & 0.334 & 1.429 & 0.376 & 0.305 & 1.773 & 0.519 & 0.256 & 3.567  \\         
        Logis. & 0.443 & 0.336 & 2.604 & 0.373 & 0.306 & 2.763 & 0.521 & 0.262 & 3.549  \\ 
        Cauchy. & 0.443 & 0.334 & 2.005 & 0.374 & 0.305 & 2.247 & 0.520 & 0.253 & 2.577  \\ 
        \bottomrule
    \end{tabular}
    }
    
    \label{tab:kernel}
\end{table}

We compares probabilistic forecasting performance across five KDE kernels: Gaussian (Gauss.), Student-t (Stud.-t), Laplace (Lap.), Logistic (Logis.), and Cauchy (Cauchy.) on three datasets (ETTh1, ETTm1, and Traffic), evaluated by MSE, CRPS, and QICE (lower is typically better). As shown in the Table \ref{tab:kernel}, QICE varies much more than MSE/CRPS, indicating that changing the kernel mainly reshapes the predictive density (especially its tail mass and quantile coverage) while having a limited effect on point errors. Student-t achieves consistently lower QICE across all three datasets because its heavier-tailed form better accommodates time-varying and occasional extreme fluctuations in non-stationary series, leading to improved calibration of prediction intervals/quantiles.

% \subsection{Training Cost}
% \label{app:traincost}

% We also present the cost in the training stage of PPM and other baselines. 

\subsection{Sensitivity Analysis}
\label{app:sensitiveity}

\subsubsection{Sampling count $K$}
To evaluate the sensitivity of the parameters $K$, we present the QICE and CRPS results on the ETTh1, ETTm1, Traffic and Weather datasets in Figure\ref{fig:K} with changing the sampling count $K$. The dashed vertical line in the figure indicates the value used in the main experiments. As shown in the figure, increasing the sampling count $K$ from small values leads to a clear improvement in both probabilistic accuracy (CRPS) and calibration (QICE). This is mainly because, with small $K$, the Monte Carlo approximation underlying KDE is coarse and the sample set provides insufficient coverage of the distribution support, resulting in high-variance density estimates. Increasing $K$ improves support coverage and stabilizes the KDE likelihood estimate, thereby substantially enhancing distribution quality and calibration.
With further increasing $K$, performance becomes stable and may slightly degrade. This indicates diminishing returns: beyond a moderate $K$, the KDE estimate is already sufficiently stable and additional samples contribute little new information; moreover, under fixed or weakly adaptive bandwidth settings, overly large $K$ can introduce mild over-smoothing or over-dispersion effects, leading to small metric fluctuations. Overall, PPM remains stable across a wide range of $K$, suggesting that the method is not strongly sensitive to this hyperparameter and can maintain reliable performance in practice.

\begin{figure}[h]
    \centering
    \begin{minipage}{0.24\textwidth}
        \centering
        \includegraphics[width=\linewidth]{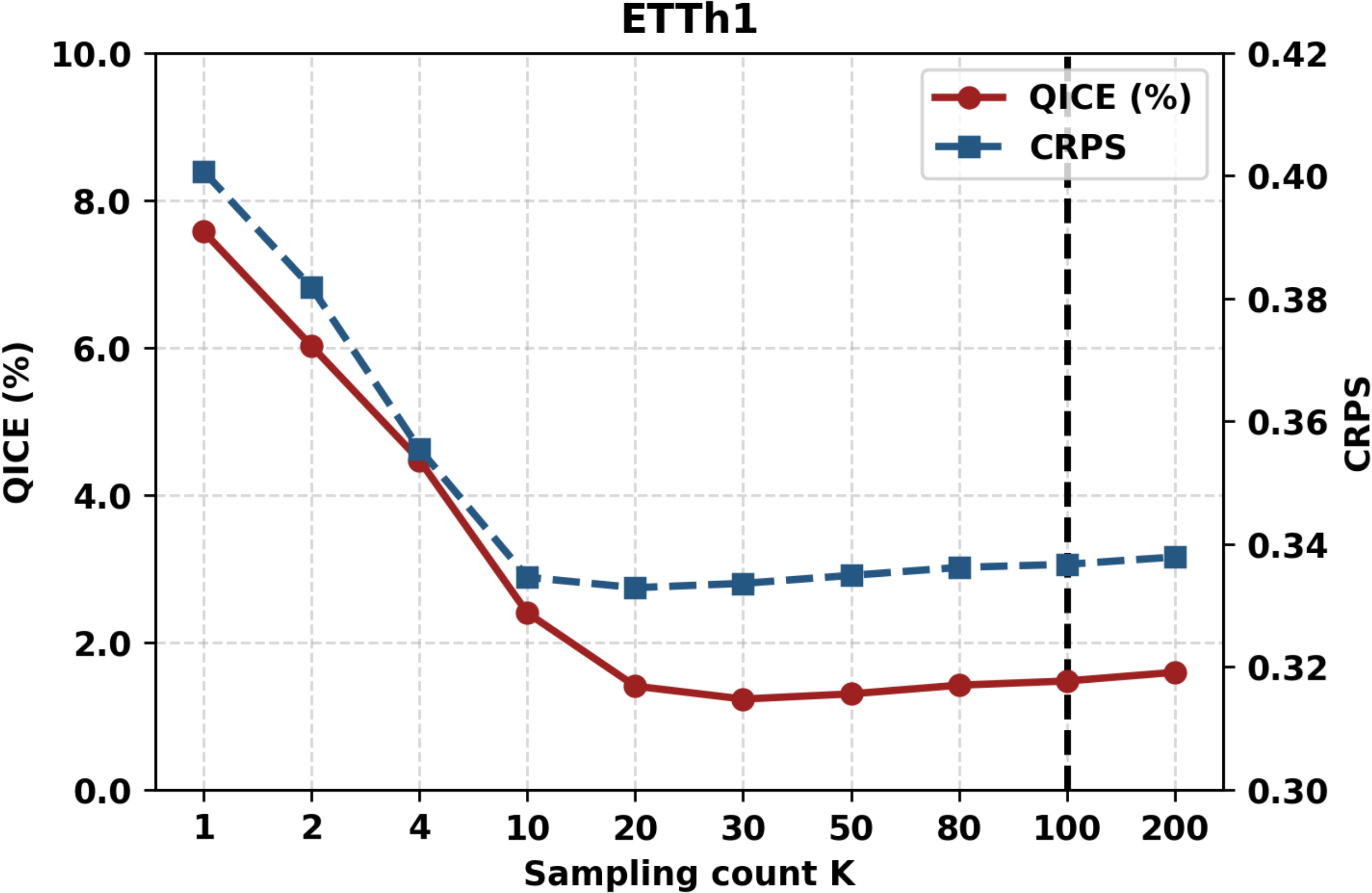}
    \end{minipage}\hfill
    \begin{minipage}{0.24\textwidth}
        \centering
        \includegraphics[width=\linewidth]{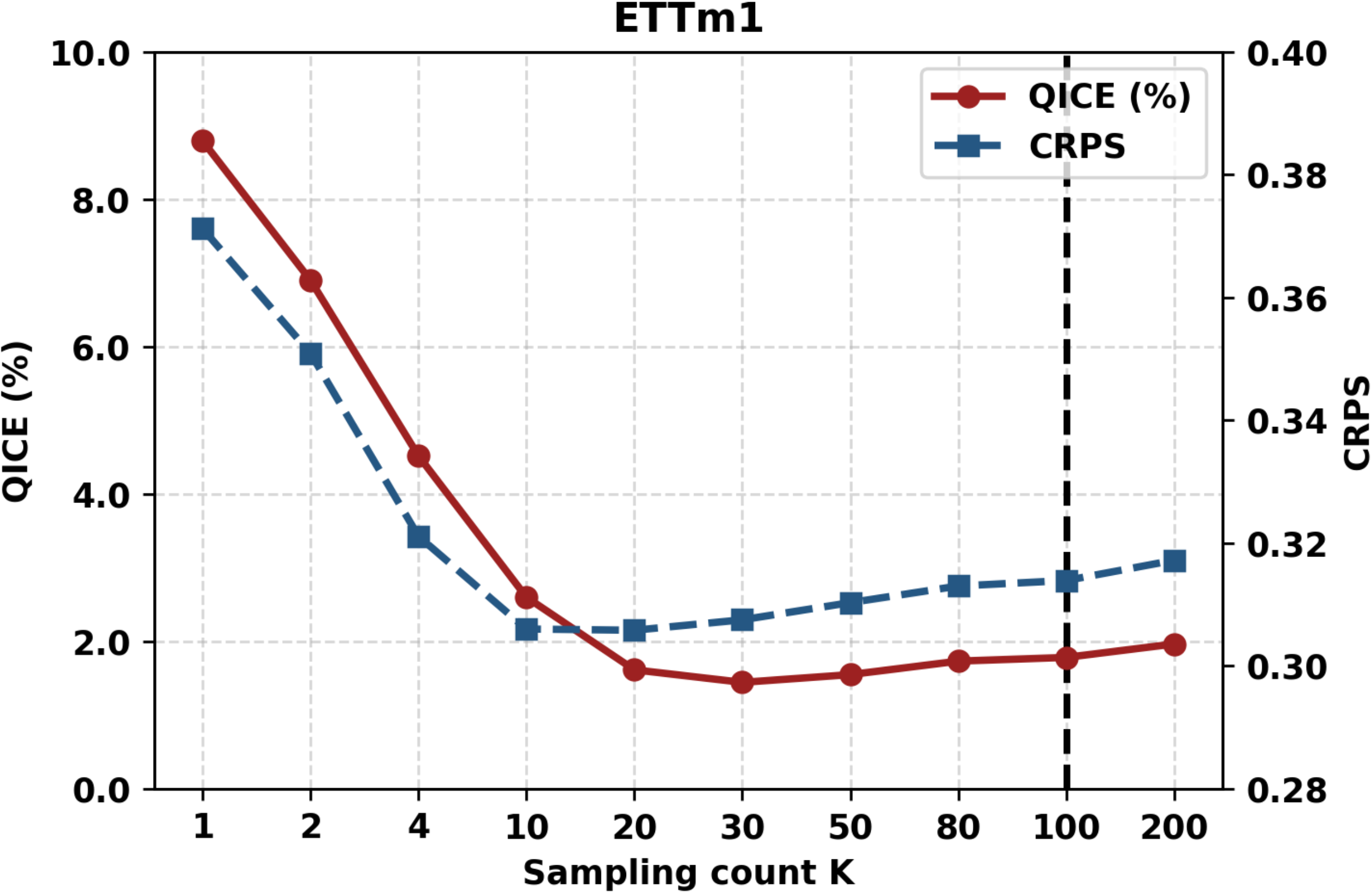}
    \end{minipage}\hfill
    \begin{minipage}{0.24\textwidth}
        \centering
        \includegraphics[width=\linewidth]{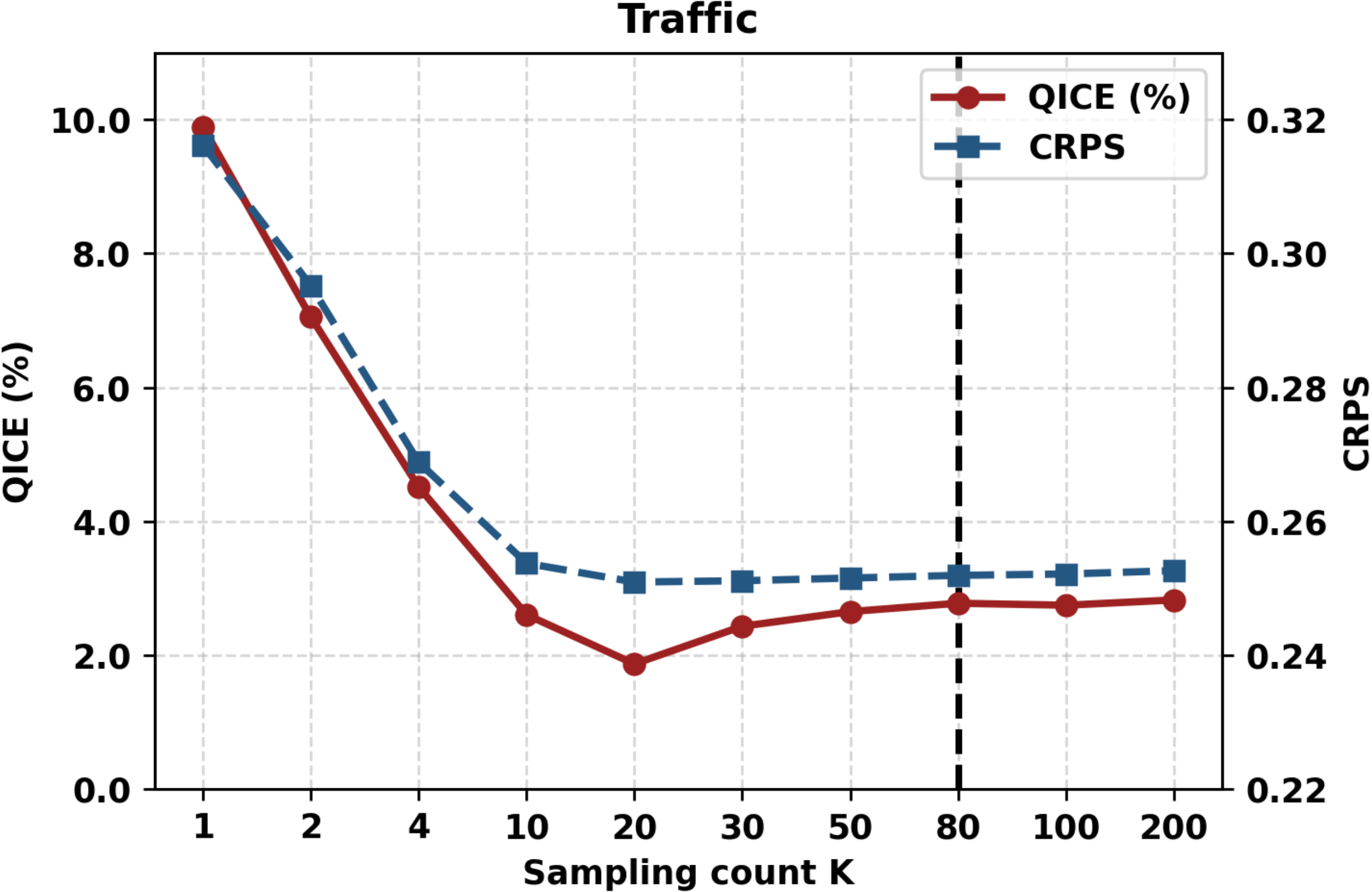}
    \end{minipage}\hfill
    \begin{minipage}{0.24\textwidth}
        \centering
        \includegraphics[width=\linewidth]{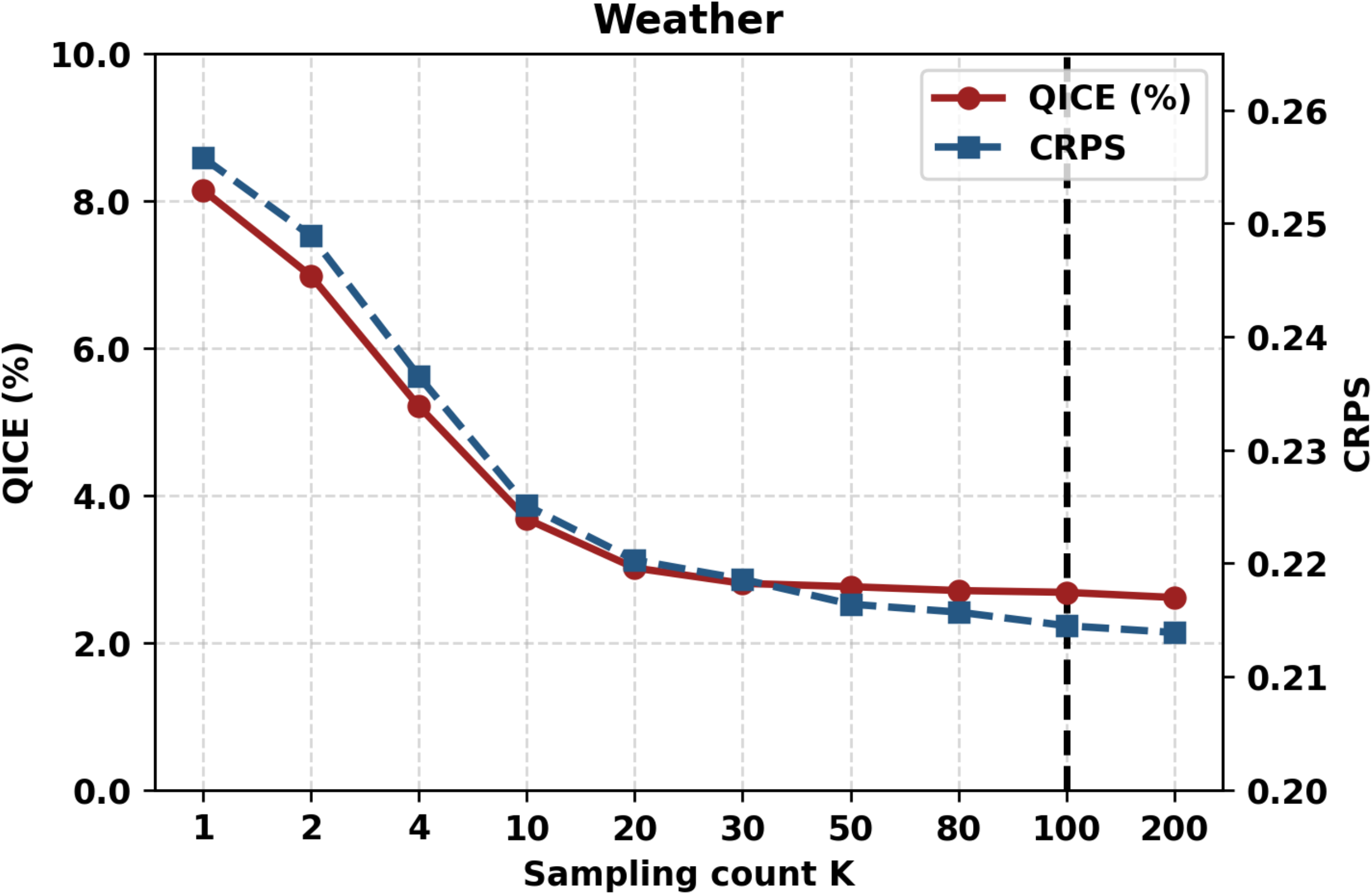}
    \end{minipage}
    \caption{Analysis for sampling count K.}
    \label{fig:K}
\end{figure}

\subsubsection{Weight coefficient \texorpdfstring{$\alpha$}{alpha}}
We also present the QICE and CRPS results on the ETTh1, ETTm1, Traffic and Weather datasets in Figure\ref{fig:alpha} with changing the Weight coefficient $\alpha$. The dashed vertical line in the figure indicates the value used in the main experiments. Overall, both QICE and CRPS remain nearly flat across a wide range $\alpha \in [0.05, 2.0]$, with only minor fluctuations on all four datasets, indicating that PPM is not sensitive to the choice of $\alpha$. This suggests that the NLL (distribution learning) term and the MM (mean-anchoring) term provide a stable complement during training: moderate changes in $\alpha$ lead only to limited trade-offs rather than substantial changes in the resulting predictive distribution.

\begin{figure}[h]
    \centering
    \begin{minipage}{0.24\textwidth}
        \centering
        \includegraphics[width=\linewidth]{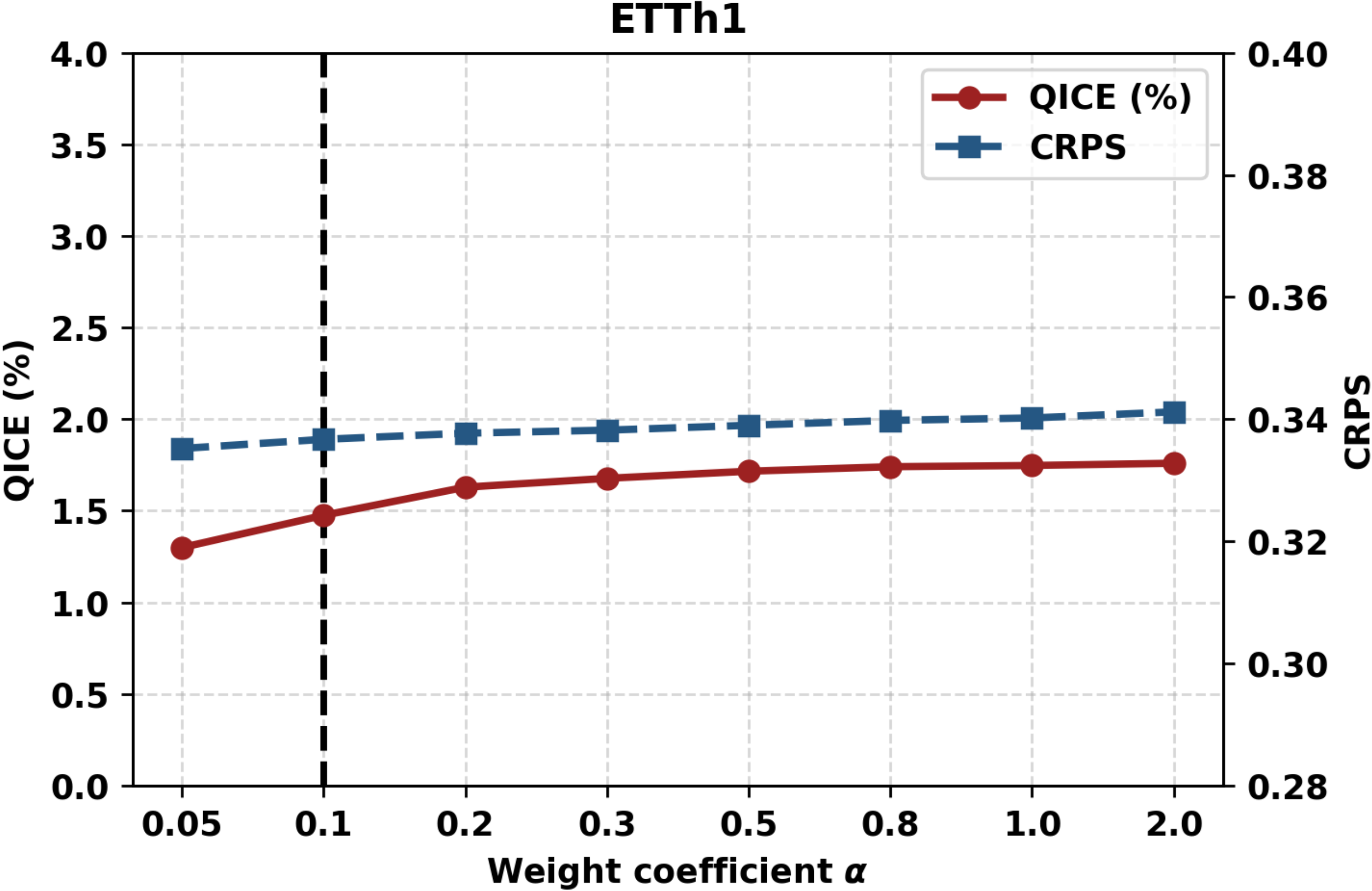}
    \end{minipage}\hfill
    \begin{minipage}{0.24\textwidth}
        \centering
        \includegraphics[width=\linewidth]{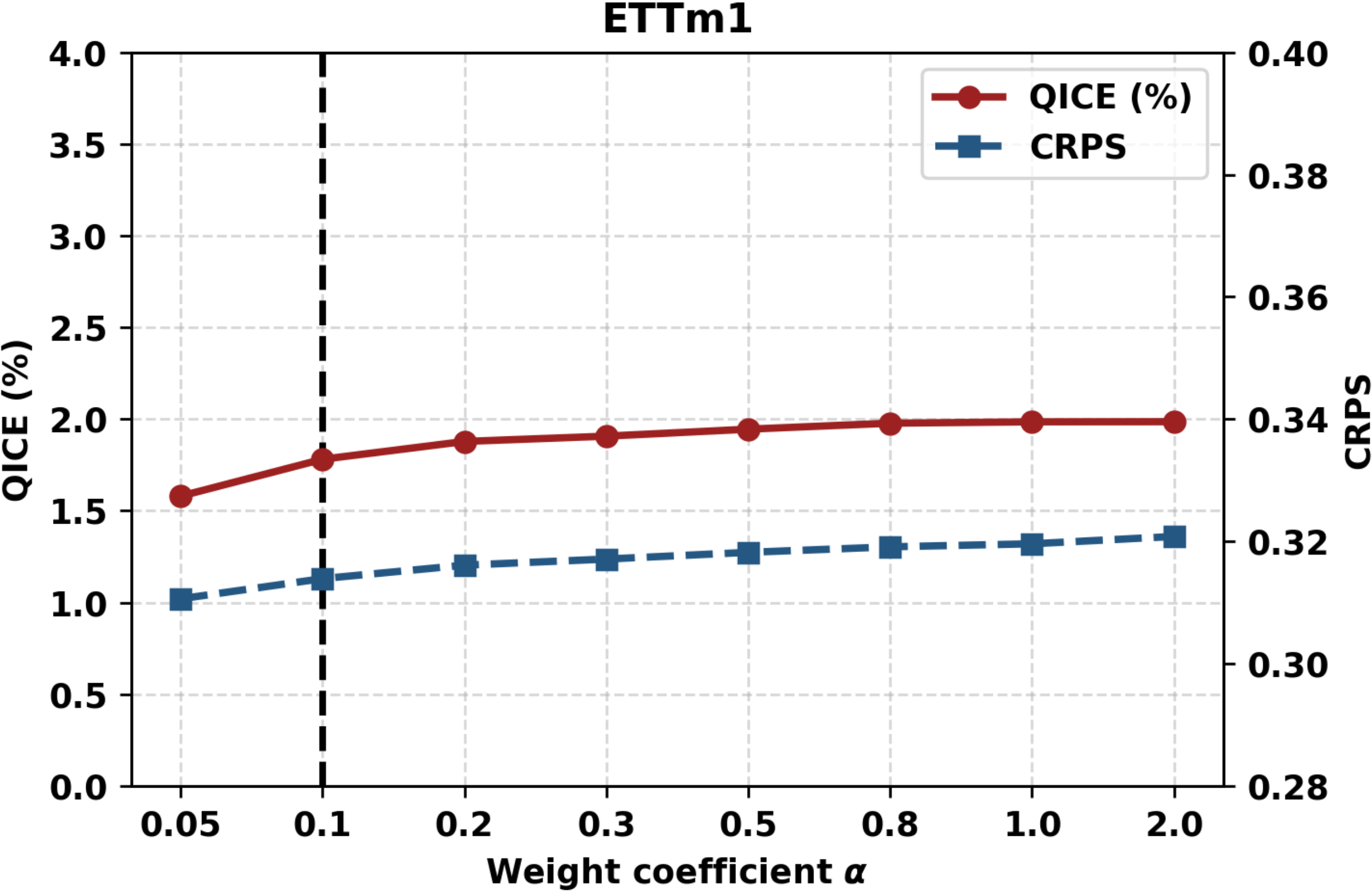}
    \end{minipage}\hfill
    \begin{minipage}{0.24\textwidth}
        \centering
        \includegraphics[width=\linewidth]{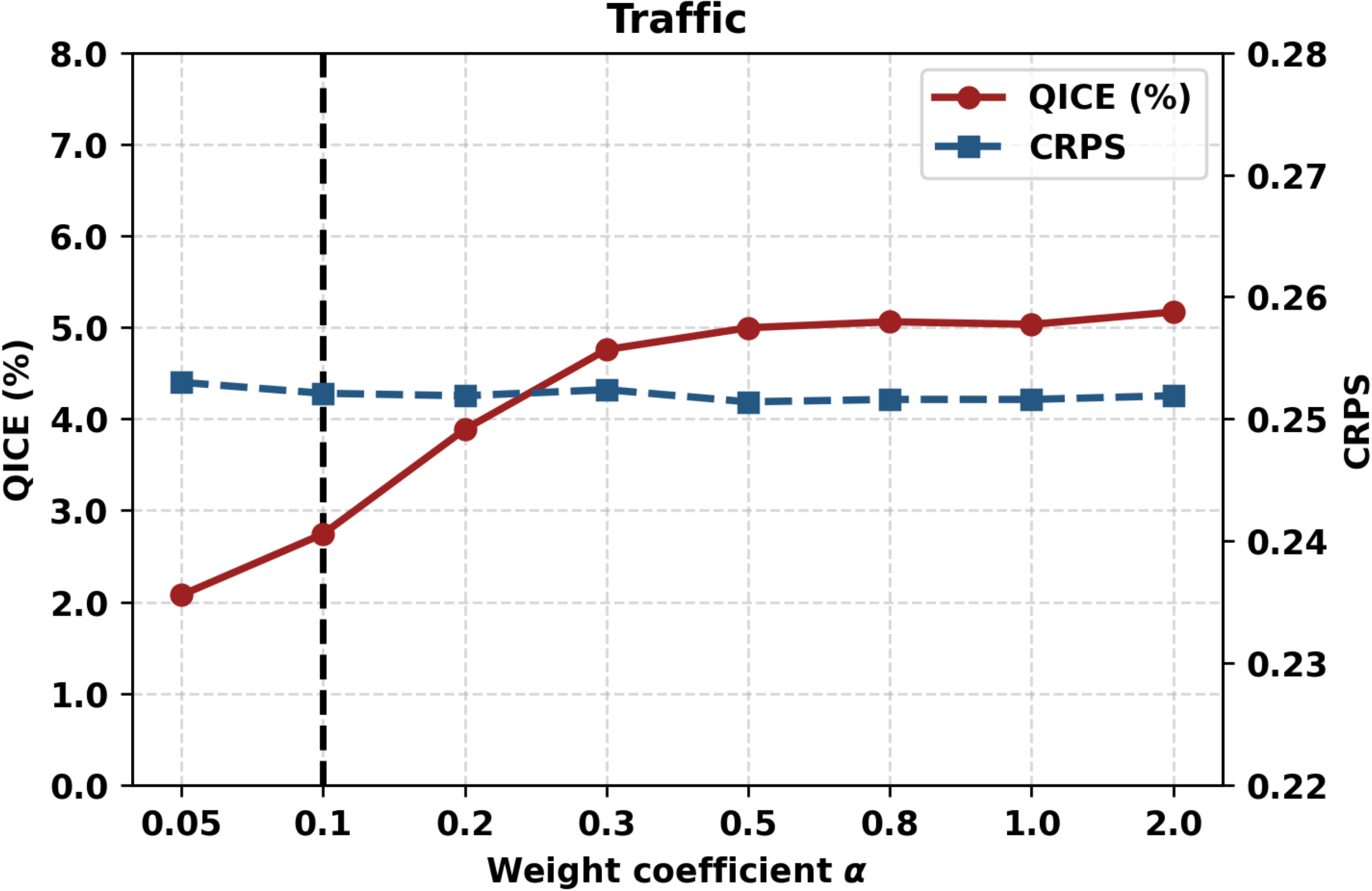}
    \end{minipage}\hfill
    \begin{minipage}{0.24\textwidth}
        \centering
        \includegraphics[width=\linewidth]{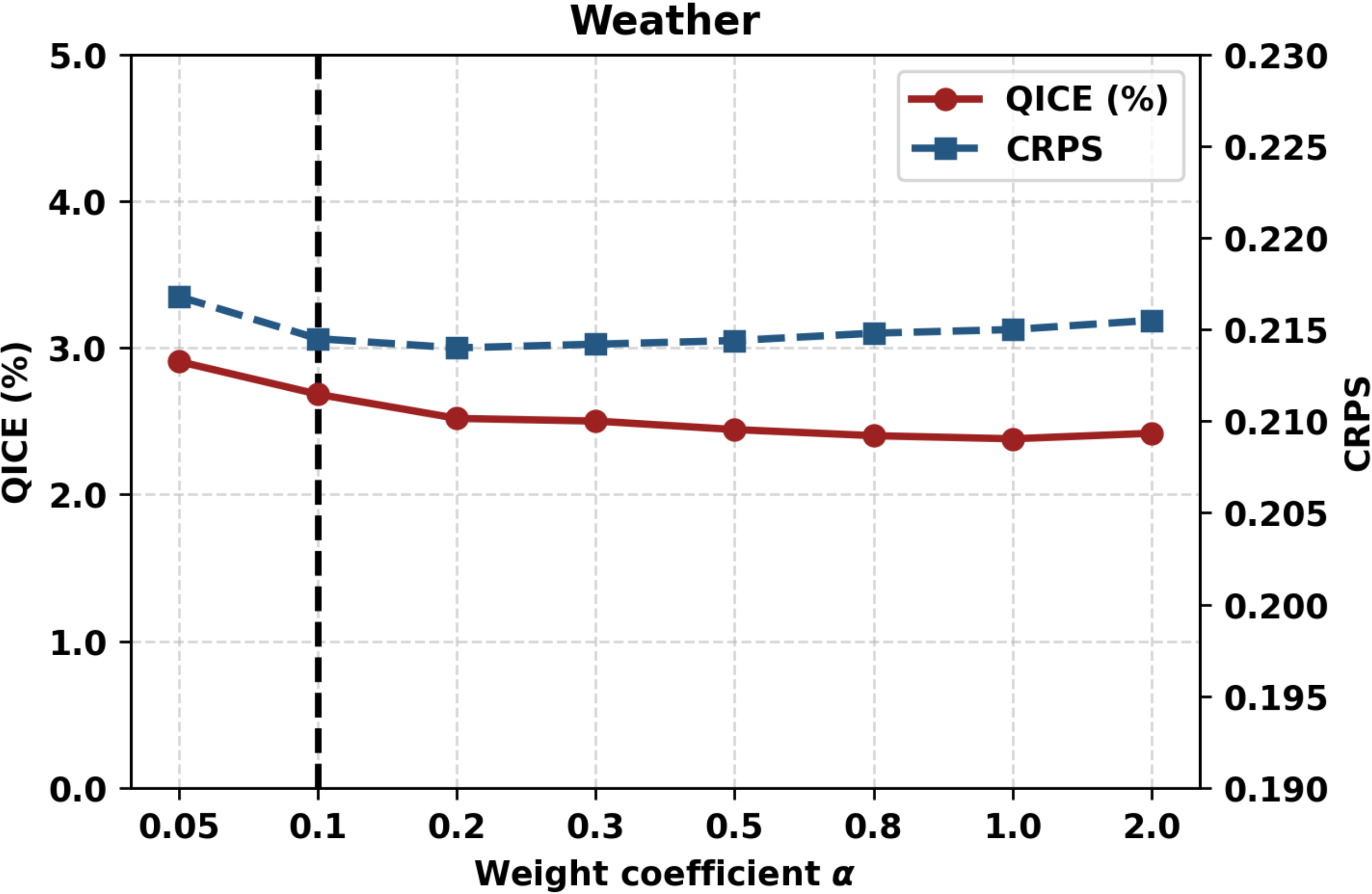}
    \end{minipage}
    \caption{Analysis for weight coefficient $\alpha$.}
    \label{fig:alpha}
\end{figure}

\subsubsection{bandwidth \texorpdfstring{$h$}{h}}
As shown in Figure \ref{fig:h}, we further investigate the effect of the KDE bandwidth $h$ in four datasets. As $h$ increases from small values, both QICE and CRPS drop substantially across all datasets, indicating that a moderate bandwidth alleviates the overly spiky density estimates under small $h$ and yields smoother, more stable likelihood estimation and better probabilistic forecasts \cite{dehnad1987kde}. When $h$ further increases beyond a certain range, the metrics rise again and may degrade noticeably. This is because an overly large bandwidth causes oversmoothing \cite{wand1994smooth}, which washes out fine-grained distributional structure and inflates uncertainty in an uninformative way, thereby worsening calibration and CRPS. Overall, the results suggest that performance is relatively sensitive to the choice of h. To avoid dataset-specific hyperparameter tuning, the value indicated by the dashed line is used in the main experiments, as it provides a reasonable and stable trade-off across datasets.

\begin{figure}[h]
    \centering
    \begin{minipage}{0.24\textwidth}
        \centering
        \includegraphics[width=\linewidth]{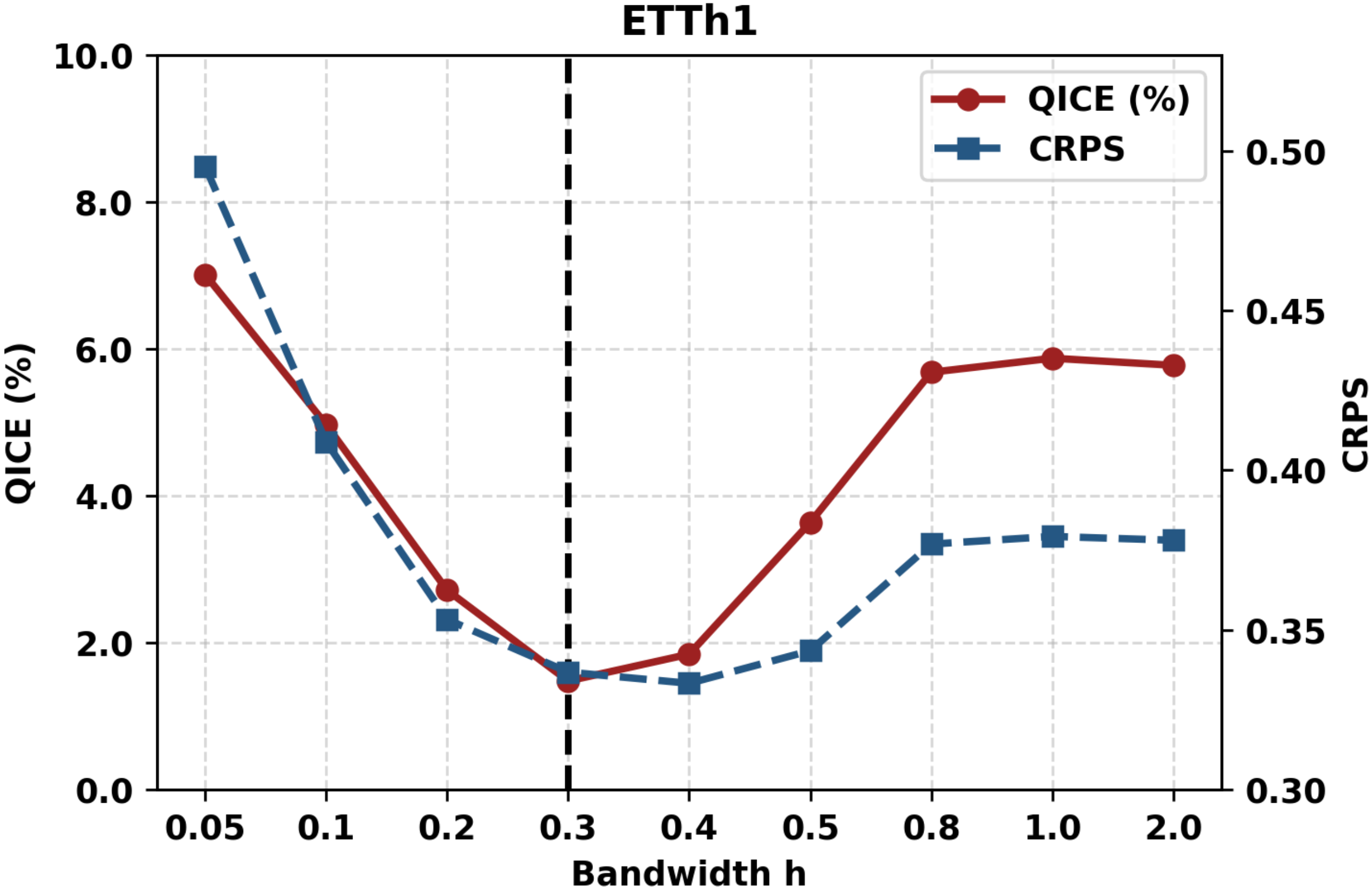}
    \end{minipage}\hfill
    \begin{minipage}{0.24\textwidth}
        \centering
        \includegraphics[width=\linewidth]{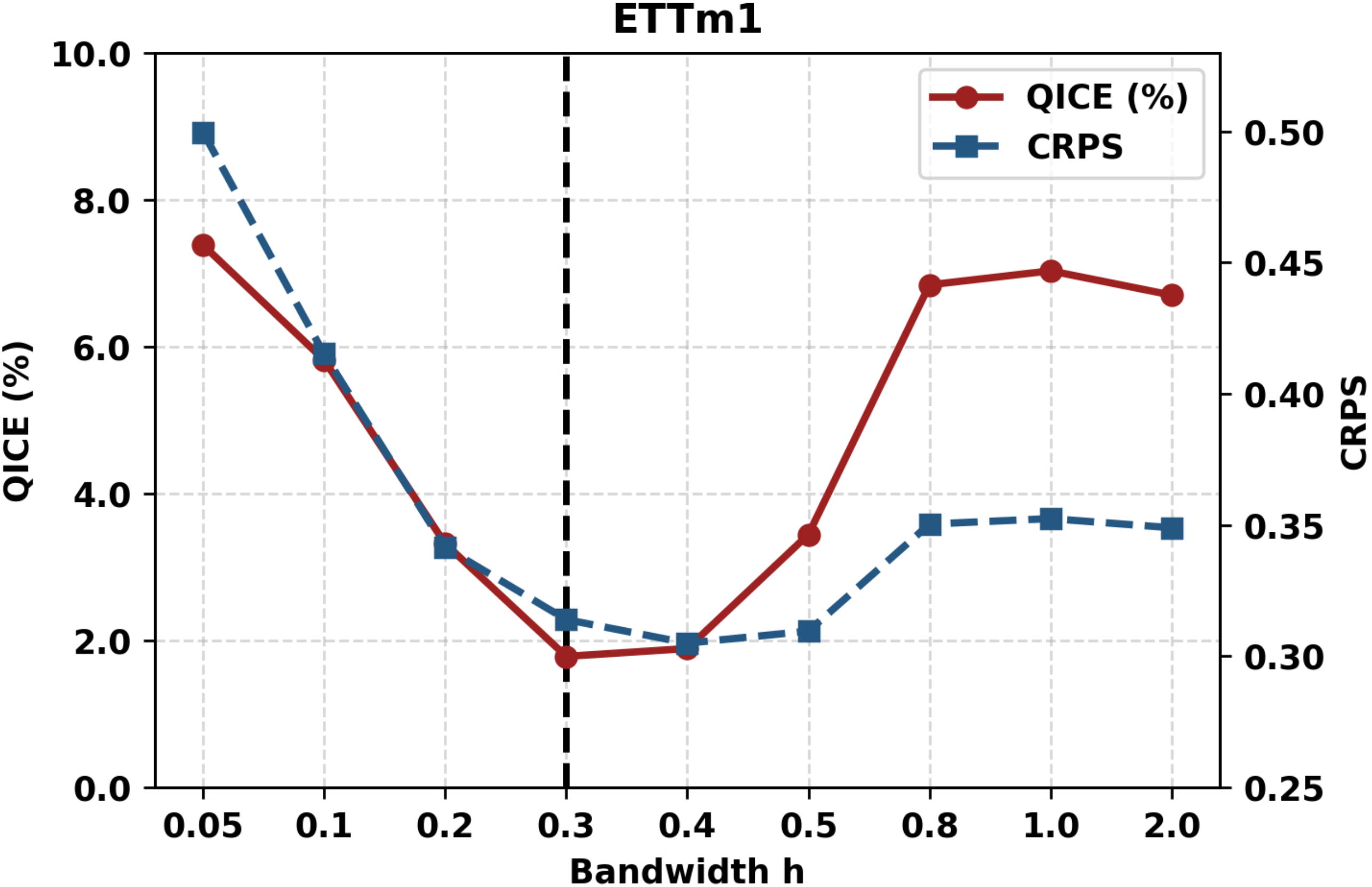}
    \end{minipage}\hfill
    \begin{minipage}{0.24\textwidth}
        \centering
        \includegraphics[width=\linewidth]{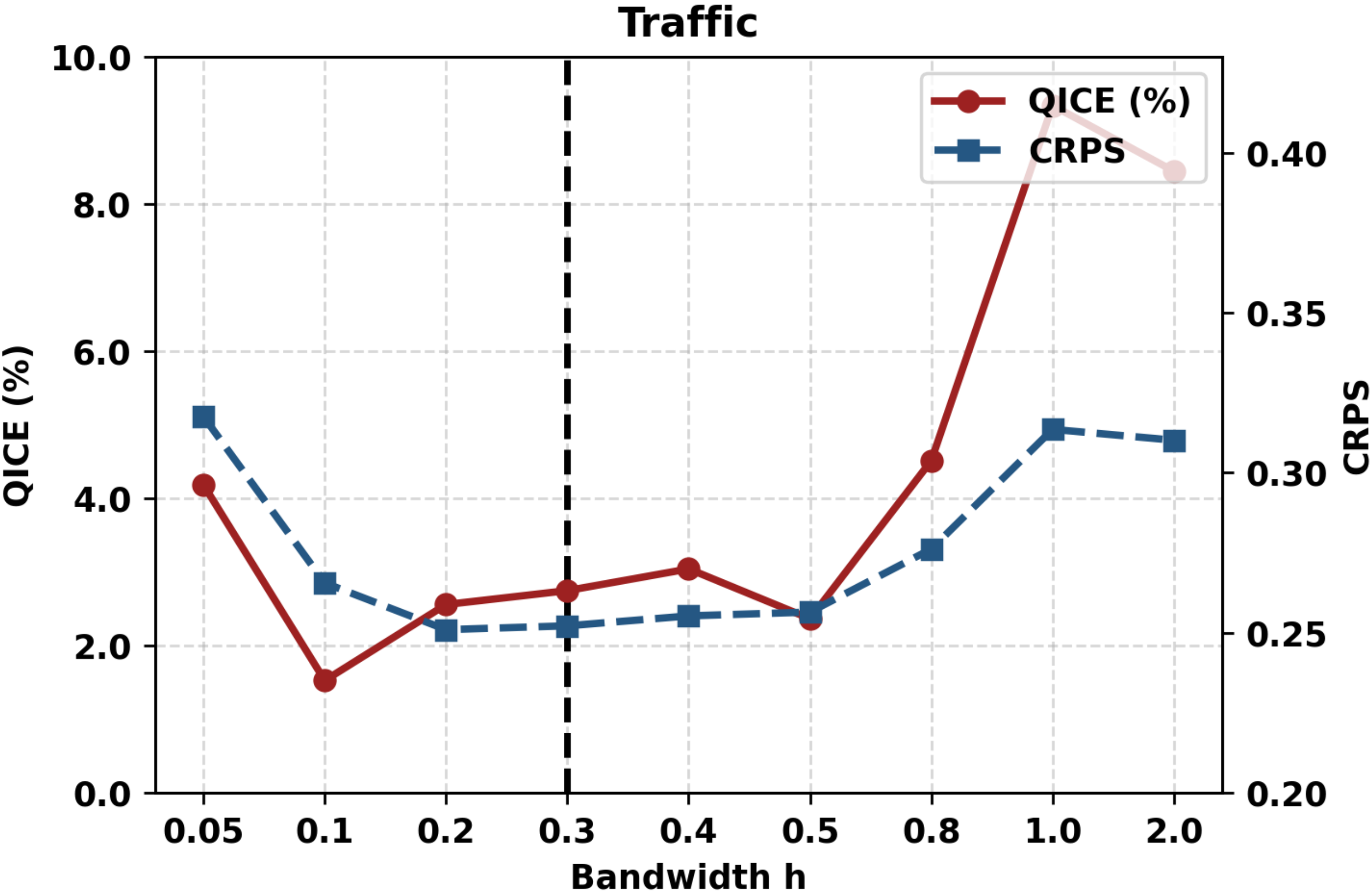}
    \end{minipage}\hfill
    \begin{minipage}{0.24\textwidth}
        \centering
        \includegraphics[width=\linewidth]{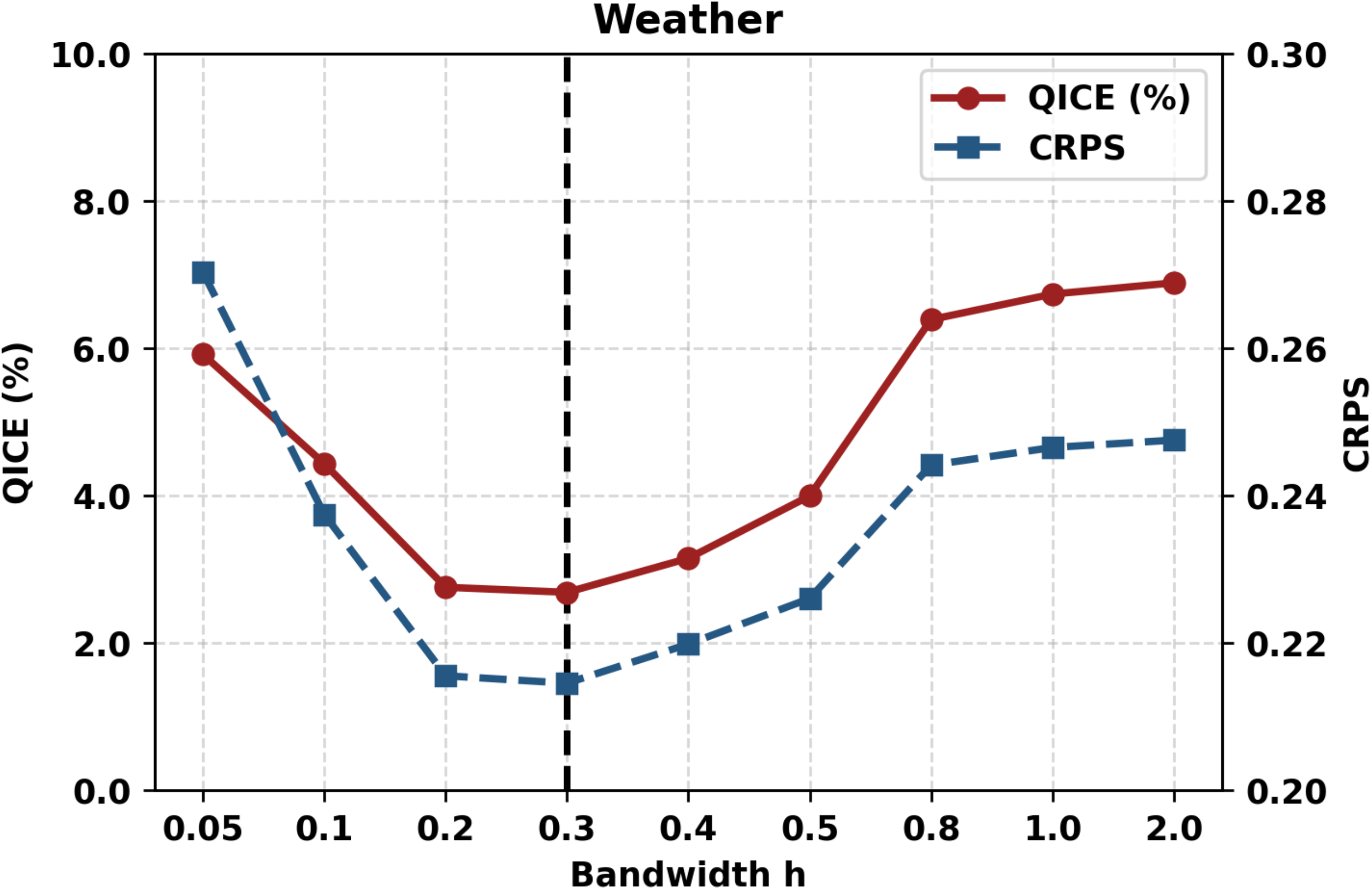}
    \end{minipage}
    \caption{Analysis for bandwidth h.}
    \label{fig:h}
\end{figure}

\section{Case Study} \label{app:case}
To demonstrate the superiority of the proposed method, we visualize the ground truth and predictions of time series across five datasets in Figure~\ref{fig:case1} and Figure~\ref{fig:case2}.

\begin{figure*}[h]
  \centering
  \includegraphics[width=1.0\textwidth]{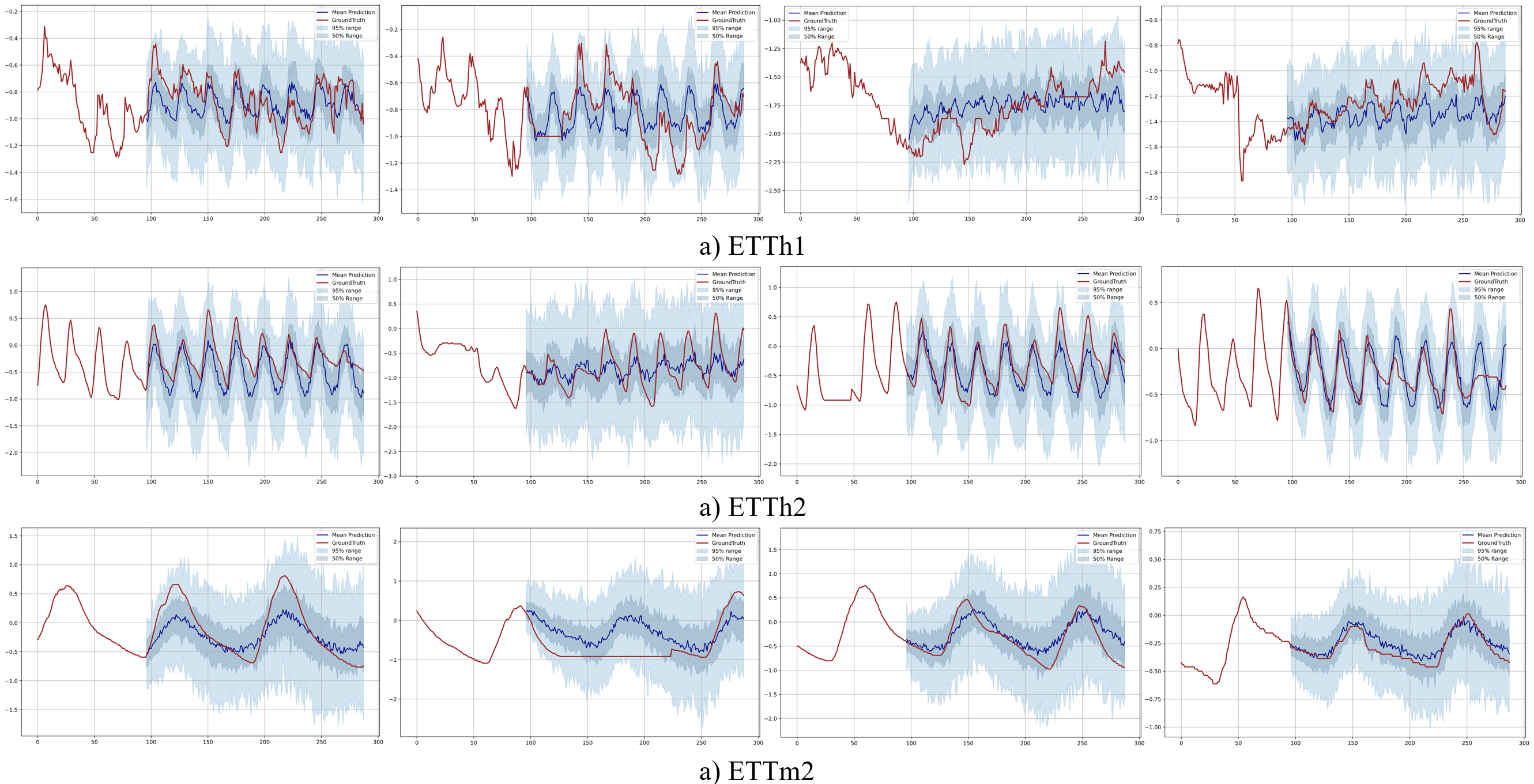}
  \caption{Visualization of the ETT datasets.}
  \label{fig:case1}
  % \vspace{-5pt}
\end{figure*}

\begin{figure*}[h]
  \centering
  \includegraphics[width=1.0\textwidth]{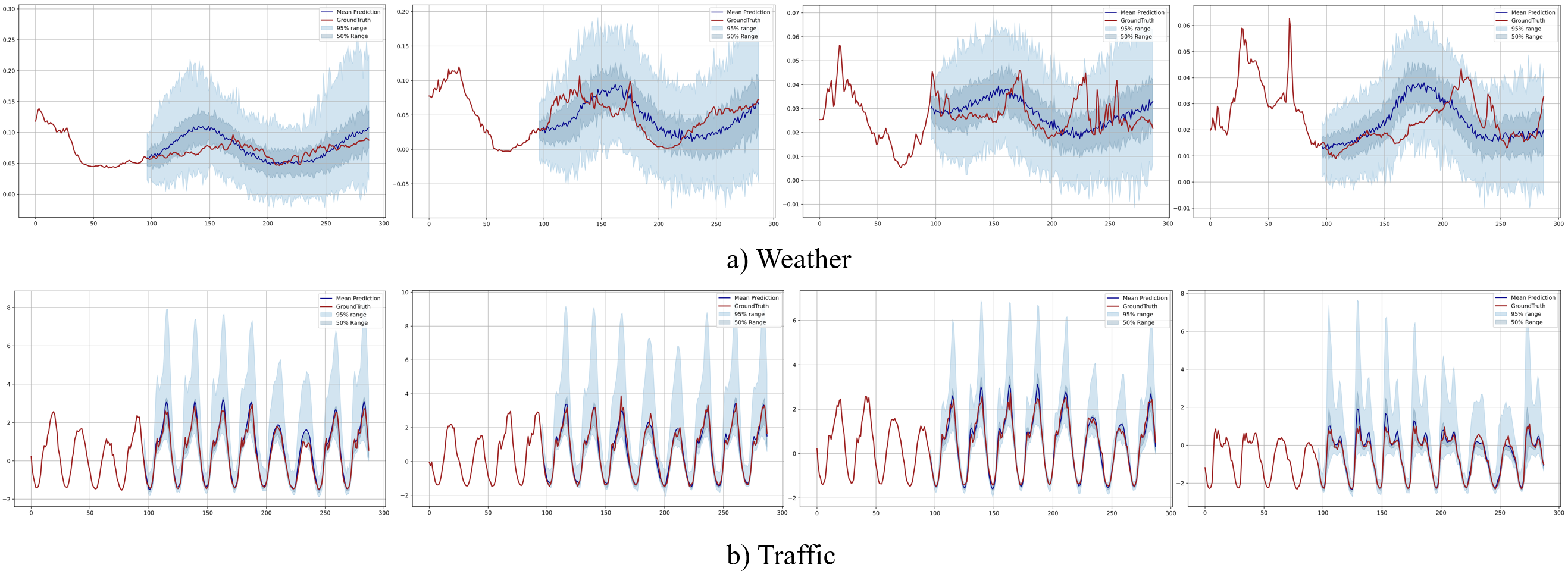}
  \caption{Visualization of the Weather and Traffic datasets.}
  \label{fig:case2}
  % \vspace{-5pt}
\end{figure*}

\end{document}